\documentclass{article} 
\usepackage{iclr2021_conference,times}


\usepackage{amsmath,amsfonts,bm}









\def\eqref#1{equation~\ref{#1}}









\def\1{\bm{1}}








\def\vtheta{{\bm{\theta}}}



\DeclareMathAlphabet{\mathsfit}{\encodingdefault}{\sfdefault}{m}{sl}
\SetMathAlphabet{\mathsfit}{bold}{\encodingdefault}{\sfdefault}{bx}{n}













\DeclareMathOperator*{\argmax}{arg\,max}

\usepackage{algorithm}
\usepackage{algpseudocode}
\usepackage{rotating}  
\usepackage{lscape}
\usepackage{hyperref}
\usepackage{url}
\usepackage{wrapfig}
\usepackage{caption}
\usepackage{subcaption}
\usepackage{tabularx}
\usepackage{booktabs}

\DeclareMathOperator*{\ev}{\mathbb{E}}

\newcommand{\de}{\,\mathrm{d}}
\newcommand{\vrho}{\mathbr{\rho}}

\newcommand{\hyscoreprime}[1][\vtheta]{\nabla_{\vrho'}\log\nu_{\vrho'}}

\definecolor{citrine}{rgb}{0.89, 0.82, 0.04}
\definecolor{blued}{RGB}{70,197,221}


\usepackage{amsthm}  
\usepackage{thmtools}
\usepackage{thm-restate}

\definecolor{blue}{rgb}{0.122,0.467,0.706}
\definecolor{green}{rgb}{0.173,0.627,0.173}
\definecolor{orange}{rgb}{1.0,0.498,0.055}
\definecolor{red}{rgb}{0.893,0.153,0.157}

\title{Parameter-Based Value Functions}


\author{Francesco Faccio, Louis Kirsch \& J\"{u}rgen Schmidhuber\\
The Swiss AI Lab IDSIA, USI, SUPSI\\
\texttt{\{francesco,louis,juergen\}@idsia.ch} \\
}

%

\iclrfinalcopy 
\begin{document}

\maketitle

\begin{abstract}
Traditional off-policy actor-critic Reinforcement Learning (RL) algorithms learn value functions of a single target policy.
However, when value functions are updated to track the learned policy, they forget potentially useful information about old policies.
We introduce a class of value functions called Parameter-Based Value Functions (PBVFs) whose inputs include the policy parameters.
They can generalize across different policies.
PBVFs can evaluate the performance of any policy given a state, a state-action pair, or a distribution over the RL agent's initial states.
First we show how PBVFs yield novel off-policy policy gradient theorems.
Then we derive off-policy actor-critic algorithms based on PBVFs trained by Monte Carlo or Temporal Difference methods.
We show how learned PBVFs can zero-shot learn new policies that outperform any policy seen during training.
Finally our algorithms are evaluated on a selection of discrete and continuous control tasks using shallow policies and deep neural networks.
Their performance is comparable to state-of-the-art methods.

\end{abstract}

\section{Introduction}
\label{sec:intro}
Value functions are central to Reinforcement Learning (RL).
For a given policy, they estimate the value of being in a specific state (or of choosing a particular action in a given state).
Many RL breakthroughs were achieved through improved estimates of such values, which can be used to find optimal policies~\citep{tesauro1995temporal, mnih2015human}.
However, learning value functions of arbitrary policies without observing their behavior in the environment is not trivial.
Such off-policy learning requires to correct the mismatch between the distribution of updates induced by the behavioral policy and the one we want to learn. 
Common techniques include Importance Sampling (IS)~\citep{hesterberg1988advances} and deterministic policy gradient methods (DPG)~\citep{Silver2014}, which adopt the actor-critic architecture~\citep{sutton1984temporal, kondaactorcritic, Peters:2008:NA:1352927.1352986}.

Unfortunately, these approaches have limitations.
IS suffers from large variance~\citep{cortes2010learning, metelli2018policy,wang2016sample} while traditional off-policy actor-critic methods introduce off-policy objectives whose gradients are difficult to follow since they involve the gradient of the action-value function with respect to the policy parameters $\nabla_{\theta} Q^{\pi_{\theta}}(s,a)$~\citep{Degris2012, Silver2014}. This term is usually ignored, resulting in biased gradients for the off-policy objective.
Furthermore, off-policy actor-critic algorithms learn value functions of a single target policy. When value functions are updated to track the learned policy, the information about old policies is lost.

We address the problem of generalization across many value functions in the off-policy setting by introducing a class of \emph{parameter-based value functions} (PBVFs) defined for any policy.
PBVFs are value functions whose inputs include the policy parameters, the PSSVF $V(\theta)$, PSVF $V(s, \theta)$, and PAVF $Q(s, a, \theta)$.
PBVFs can be learned using Monte Carlo (MC)~\citep{MET49} or Temporal Difference (TD)~\citep{sutton1988learning} methods.
The PAVF $Q(s, a, \theta)$ leads to a novel stochastic and deterministic off-policy policy gradient theorem and, unlike previous approaches, can directly compute $\nabla_{\theta} Q^{\pi_{\theta}}(s,a)$.
Based on these results, we develop off-policy actor-critic methods and compare our algorithms to two strong baselines, ARS and DDPG~\citep{mania2018simple, lillicrap2015continuous}, outperforming them in some environments.

We make theoretical, algorithmic, and experimental contributions:
Section~\ref{sec:mdp} introduces the standard MDP setting;
Section~\ref{sec:pvfs} formally presents PBVFs and derive algorithms for $V(\theta)$, $V(s,\theta)$ and $Q(s,a,\theta)$;
Section~\ref{sec:exp} describes the experimental evaluation using shallow and deep policies;
Sections~\ref{sec:related} and~\ref{sec:future} discuss related and future work.
Proofs and derivations can be found in Appendix~\ref{apx:proofs}.

\section{Background}
\label{sec:mdp}
We consider a Markov Decision Process (MDP)~\citep{stratonovich1960,puterman2014markov} $\mathcal{M}=(\mathcal{S},\mathcal{A},P,R,\gamma,\mu_0)$ where at each step an agent observes a state $s \in \mathcal{S}$, chooses action $a \in \mathcal{A}$, transitions into state $s'$ with probability $P(s' | s, a)$ and receives a reward $R(s,a)$. The agent starts from an initial state, chosen with probability $\mu_0(s)$. It is represented by a parametrized stochastic policy $\pi_{\theta}: \mathcal{S} \rightarrow \Delta(\mathcal{A})$, which provides the probability of performing action $a$ in state $s$. $\Theta$ is the space of policy parameters. The policy is deterministic if for each state $s$ there exists an action $a$ such that  $\pi_{\theta}(a|s)=1$. The return $R_t$ is defined as the cumulative discounted reward from time step t: $R_t =  \sum_{k=0}^{T-t-1}\gamma^k R(s_{t+k+1}, a_{t+k+1})$, where T denotes the time horizon and $\gamma$ a real-valued discount factor. The performance of the agent is measured by the cumulative discounted expected reward (expected return), defined as     $J(\pi_{\theta})={\ev}_{\pi_{\theta}}[R_0].$ Given a policy $\pi_{\theta}$, the state-value function $V^{\pi_{\theta}}(s) = {\ev}_{\pi_{\theta}}[R_t|s_t=s]$ is defined as the expected return for being in a state $s$ and following policy $\pi_{\theta}$. By integrating over the state space $\mathcal{S}$, we can express the maximization of the expected cumulative reward in terms of the state-value function $J(\pi_{\theta})=\int_{\mathcal{S}} \mu_0(s) V^{\pi_{\theta}}(s) \de s$.
The action-value function $Q^{\pi_{\theta}}(s,a)$, which is defined as the expected return for performing action $a$ in state $s$, and following the policy $\pi_{\theta}$, is $Q^{\pi_{\theta}}(s,a) = {\ev}_{\pi_{\theta}}[R_t|s_t=s,a_t=a]$, and it is related to the state-value function by $V^{\pi_{\theta}}(s) = \int_{\mathcal{A}} \pi_{\theta}(a|s) Q^{\pi_{\theta}}(s,a)  \de a$. We define as $d^{\pi_{\theta}}(s')$ the discounted weighting of states encountered starting at $s_0 \sim \mu_0(s)$ and following the policy $\pi_{\theta}$: $d^{\pi_{\theta}}(s') = \int_{\mathcal{S}}\sum_{t=1}^{\infty} \gamma^{t-1} \mu_0(s) P(s \rightarrow s', t, \pi_{\theta}) \de s $, where $P(s \rightarrow s', t, \pi_{\theta})$ is the probability of transitioning to $s'$ after t time steps, starting from s and following policy $\pi_{\theta}$. ~\citet{Sutton1999} showed that, for stochastic policies, the gradient of $J(\pi_{\theta})$ does not involve the derivative of $d^{\pi_{\theta}}(s)$ and can be expressed in a simple form:
\begin{equation}
\nabla_{\theta} J(\pi_{\theta}) = \int_{\mathcal{S}} d^{\pi_{\theta}}(s) \int_{\mathcal{A}} \nabla_{\theta} \pi_{\theta}(a|s) Q^{\pi_{\theta}}(s,a)  \de a \de s.
\end{equation}
Similarly, for deterministic policies~\citet{Silver2014} obtained the following:
\begin{equation}
\nabla_{\theta} J(\pi_{\theta}) = \int_{\mathcal{S}} d^{\pi_{\theta}}(s) \nabla_{\theta} \pi_{\theta}(s) \nabla_{a} Q^{\pi_{\theta}}(s,a)|_{a=\pi_{\theta}(s)} \de s.
\end{equation}

\paragraph{Off-policy RL} In off-policy policy optimization, we seek to find the parameters of the policy maximizing a performance index $ J_{b}(\pi_{\theta})$ using data collected from a behavioral policy $\pi_b$. Here the objective function $J_{b}(\pi_{\theta})$ is typically modified to be the value function of the target policy, integrated over $d_{\infty}^{\pi_b}(s) = \lim_{t \rightarrow \infty} P(s_t = s| s_0, \pi_b)$, the limiting distribution of states under $\pi_b$ (assuming it exists)~\citep{Degris2012,imani2018off, wang2016sample}. Throughout the paper we assume that the support of $d_{\infty}^{\pi_b}$ includes the support of $\mu_0$ so that the optimal solution for $J_b$ is also optimal for $J$. Formally, we want to find:
\begin{equation}
    J_b(\pi_{\theta^*}) =\max_{\theta} \int_{\mathcal{S}} d_{\infty}^{\pi_b}(s) V^{\pi_{\theta}}(s) \de s = \max_{\theta}\int_{\mathcal{S}}  d_{\infty}^{\pi_b}(s) \int_{\mathcal{A}}  \pi_{\theta}(a|s)  Q^{\pi_{\theta}}(s,a) \de a \de s.
\end{equation}
Unfortunately, in the off-policy setting, the states are obtained from $d_{\infty}^{\pi_b}$ and not from $d_{\infty}^{\pi_{\theta}}$, hence the gradients suffer from a distribution shift~\citep{liu2019off, nachum2019algaedice}. Moreover, since we have no access to $d_{\infty}^{\pi_{\theta}}$, a term in the policy gradient theorem corresponding to the gradient of the action value function with respect to the policy parameters needs to be estimated. This term is usually ignored in traditional off-policy policy gradient theorems\footnote{With tabular policies, dropping this term still results in a convergent algorithm~\citep{Degris2012}.}. In particular, when the policy is stochastic, ~\citet{Degris2012} showed that:
\begin{align}
	    \nabla_{\theta} J_b(\pi_{\theta}) & = \int_{\mathcal{S}} d_{\infty}^{\pi_{b}}(s) \int_{\mathcal{A}} \pi_{b}(a|s) \frac{\pi_{\theta}(a|s)}{\pi_{b}(a|s)} \left(Q^{\pi_{\theta}}(s,a) \nabla_{\theta} \log\pi_{\theta}(a|s) +  \nabla_{\theta}Q^{\pi_{\theta}}(s,a)\right) \de a \de s \\
	    & \approx \int_{\mathcal{S}} d_{\infty}^{\pi_{b}}(s) \int_{\mathcal{A}} \pi_{b}(a|s) \frac{\pi_{\theta}(a|s)}{\pi_{b}(a|s)} \left(Q^{\pi_{\theta}}(s,a) \nabla_{\theta} \log\pi_{\theta}(a|s) \right) \de a \de s.
	\end{align}
Analogously, ~\citet{Silver2014} provided the following approximation for deterministic policies~\footnote{In the original formulation of~\citet{Silver2014} $d_{\infty}^{\pi_{b}}(s)$ is replaced by $d^{\pi_{b}}(s)$.}:
\begin{align}
	    \nabla_{\theta} J_b(\pi_{\theta}) & = \int_{\mathcal{S}} d_{\infty}^{\pi_{b}}(s) \left( \nabla_{\theta} \pi_{\theta}(s) \nabla_{a} Q^{\pi_{\theta}}(s,a)|_{a=\pi_{\theta}(s)} + \nabla_{\theta} Q^{\pi_{\theta}}(s,a)|_{a=\pi_{\theta}(s)}\right) \de s \\
	    & \approx \int_{\mathcal{S}} d_{\infty}^{\pi_{b}}(s) \left( \nabla_{\theta} \pi_{\theta}(s) \nabla_{a} Q^{\pi_{\theta}}(s,a)|_{a=\pi_{\theta}(s)} \right) \de s.
	\end{align}
Although the term $\nabla_{\theta}Q^{\pi_{\theta}}(s,a)$ is dropped, there might be advantages in using the approximate gradient of $J_b$ in order to find the maximum of the original RL objective $J$. Indeed, if we were on-policy, the approximated off-policy policy gradients by ~\citet{Degris2012,Silver2014} would revert to the on-policy policy gradients, while an exact gradient for $J_b$ would necessarily introduce a bias. However, when we are off-policy, it is not clear whether this would be better than using the exact gradient of $J_b$ in order to maximize $J$. In this work, we assume that $J_b$ can be considered a good objective for off-policy RL and we derive an exact gradient for it.


\section{Parameter-based Value Functions}
\label{sec:pvfs}
In this section, we introduce our parameter-based value functions, the PSSVF $V(\theta)$, PSVF $V(s, \theta)$, and PAVF $Q(s, a, \theta)$ and their corresponding learning algorithms. First, we augment the state and action-value functions, allowing them to receive as an input also the weights of a parametric policy.
The parameter-based state-value function (PSVF) $V(s, \theta) = {\ev}[R_t|s_t=s, \theta]$ is defined as the expected return for being in state $s$ and following policy parameterized by $\theta$. Similarly, the parameter-based action-value function (PAVF) $Q(s,a, \theta) = {\ev}[R_t|s_t=s, a_t=a, \theta]$ is defined as the expected return for being in state s, taking action $a$ and following policy parameterized by $\theta$.
Using PBVFs, the RL objective becomes: $ J(\pi_{\theta})=\int_{\mathcal{S}} \mu_0(s) V^{\pi}(s, \theta) \de s$.
Maximizing this objective leads to on-policy policy gradient theorems that are analogous to the traditional ones ~\citep{Sutton1999, Silver2014}: 
\begin{restatable}[]{thr}{onspavf}
    \label{thr:onspavf}

	Let $\pi_{\theta}$ be stochastic. For any Markov Decision Process, the following holds:
	\begin{equation}
	    \nabla_{\theta} J(\pi_{\theta}) = {\ev}_{s \sim d^{\pi_{\theta}}(s), a \sim \pi_{\theta}(.|s)}\left[\left(Q(s,a,\theta) \nabla_{\theta} \log\pi_{\theta}(a|s)\right)\right].
	\end{equation}
\end{restatable}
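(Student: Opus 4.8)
The plan is to reduce Theorem~\ref{thr:onspavf} to the classical stochastic policy gradient theorem of \citet{Sutton1999}. The essential observation is that the PBVFs, evaluated at the actual parameters of the running policy, are nothing but the ordinary value functions: $V(s,\theta)=V^{\pi_\theta}(s)$ and $Q(s,a,\theta)=Q^{\pi_\theta}(s,a)$, directly from their definitions as expected returns under $\pi_\theta$. Hence $J(\pi_\theta)=\int_{\mathcal S}\mu_0(s)V(s,\theta)\de s$ is literally the standard objective $\int_{\mathcal S}\mu_0(s)V^{\pi_\theta}(s)\de s$, the quantity $\nabla_\theta J(\pi_\theta)$ is its usual total derivative with respect to $\theta$, and its value is given by \citet{Sutton1999}; applying the log-derivative identity $\nabla_\theta\pi_\theta(a|s)=\pi_\theta(a|s)\nabla_\theta\log\pi_\theta(a|s)$ and substituting $Q^{\pi_\theta}(s,a)=Q(s,a,\theta)$ then yields the stated formula verbatim. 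In particular no $\nabla_\theta Q(s,a,\theta)$ term survives: exactly as in the on-policy setting, the full $\theta$-dependence of $V^{\pi_\theta}$ (including through the state-visitation distribution) is already absorbed by the recursion, unlike the off-policy theorems of \citet{Degris2012,Silver2014} where the state distribution is frozen at $d_\infty^{\pi_b}$.

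For a self-contained argument I would reprove the identity along the usual lines. Differentiating the Bellman decomposition $V^{\pi_\theta}(s)=\int_{\mathcal A}\pi_\theta(a|s)Q^{\pi_\theta}(s,a)\de a$ and using the recursion expressing $Q^{\pi_\theta}(s,a)$ as an immediate ($\theta$-independent) reward plus a discounted expectation of $V^{\pi_\theta}$ at the next state, one obtains a fixed-point equation of the form
\begin{equation}
\nabla_\theta V^{\pi_\theta}(s)=\phi(s)+\gamma\int_{\mathcal S}P(s\!\to\! s',1,\pi_\theta)\,\nabla_\theta V^{\pi_\theta}(s')\de s',\qquad \phi(s):=\int_{\mathcal A}\nabla_\theta\pi_\theta(a|s)\,Q^{\pi_\theta}(s,a)\de a.
\end{equation}
Unrolling this geometric recursion gives $\nabla_\theta V^{\pi_\theta}(s)=\sum_{t\ge 0}\gamma^{t}\int_{\mathcal S}P(s\!\to\! s',t,\pi_\theta)\,\phi(s')\de s'$, and integrating against $\mu_0$ while exchanging the sum with the integral collapses the state-visitation weights into exactly $d^{\pi_\theta}$ as defined in Section~\ref{sec:mdp}. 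Rewriting $\phi$ with the log-derivative trick and identifying $Q^{\pi_\theta}=Q(\cdot,\cdot,\theta)$ then produces $\nabla_\theta J(\pi_\theta)=\ev_{s\sim d^{\pi_\theta},\,a\sim\pi_\theta(\cdot|s)}\!\left[Q(s,a,\theta)\,\nabla_\theta\log\pi_\theta(a|s)\right]$.

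I expect the only real work to be analytic bookkeeping rather than anything conceptual: justifying the interchange of $\nabla_\theta$ with the integrals over $\mathcal A$ and $\mathcal S$ and with the infinite sum (covered by assuming bounded rewards, a policy differentiable in $\theta$ with integrable score, and $\gamma<1$, or simply a finite horizon $T$, in which case all sums are finite and the interchanges are automatic), and carefully matching the paper's reward indexing $R_t=\sum_{k}\gamma^k R(s_{t+k+1},a_{t+k+1})$ and its $\sum_{t\ge1}\gamma^{t-1}$ convention for $d^{\pi_\theta}$ so that the powers of $\gamma$ align in the unrolling step. Everything else is inherited unchanged from \citet{Sutton1999} — unsurprisingly, since in the on-policy case the parameter augmentation is purely notational.
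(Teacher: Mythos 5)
Your proposal is correct and follows essentially the same route as the paper: the paper's proof is exactly the Sutton-style argument you sketch in your second paragraph — differentiate the Bellman decomposition of $V(s,\theta)$, unroll the resulting recursion into $\sum_t \gamma^t P(s\to s',t,\pi_\theta)$, integrate against $\mu_0$ to recover $d^{\pi_\theta}$, and finish with the log-derivative identity. Your preliminary observation that on-policy PBVFs coincide with the ordinary value functions, and your care about interchange-of-limits and the $\gamma$-indexing conventions, are consistent with (indeed slightly more explicit than) what the paper does.
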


\begin{restatable}[]{thr}{ondpavf}
    \label{thr:ondpavf}

	Let $\pi_{\theta}$ be deterministic. Under standard regularity assumptions~\citep{Silver2014}, for any Markov Decision Process, the following holds:
	\begin{equation}
	    \nabla_{\theta} J(\pi_{\theta}) = {\ev}_{s \sim d^{\pi_{\theta}}(s)} \left[   \nabla_{a} Q(s,a,\theta)|_{a=\pi_{\theta}(s)} \nabla_{\theta} \pi_{\theta}(s) \right].
	\end{equation}
\end{restatable}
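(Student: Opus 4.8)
The plan is to follow the structure of the deterministic policy gradient theorem of \citet{Silver2014}, transcribed into the parameter-based notation. The only genuinely new point is that $Q(s,a,\theta)$ carries a $\theta$ in its explicit third slot \emph{in addition} to the $\theta$ that appears when it is evaluated along $\pi_\theta$, and the heart of the argument is to show that this extra dependence is exactly what gets folded into the discounted weighting of states $d^{\pi_\theta}$, so that no residual $\nabla_\theta Q$ term survives.

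First I would reduce the statement to a pointwise identity for $\nabla_\theta V(s,\theta)$. Since $J(\pi_\theta)=\int_{\mathcal S}\mu_0(s)V(s,\theta)\de s$, and since by definition $V(s,\theta)=V^{\pi_\theta}(s)$ and $Q(s,a,\theta)=Q^{\pi_\theta}(s,a)$ coincide with the classical value functions of $\pi_\theta$ — so that $V(s,\theta)=Q(s,\pi_\theta(s),\theta)$ and the Bellman identity $Q(s,a,\theta)=R(s,a)+\gamma\int_{\mathcal S}P(s'|s,a)V(s',\theta)\de s'$ hold — the regularity assumptions of \citet{Silver2014} referenced in the statement let me move $\nabla_\theta$ inside the integral, giving $\nabla_\theta J(\pi_\theta)=\int_{\mathcal S}\mu_0(s)\nabla_\theta V(s,\theta)\de s$. (With this identification the statement is in fact equivalent to Theorem~1 of \citet{Silver2014}; the derivation below just makes the role of the PAVF explicit.)

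Next I would differentiate $V(s,\theta)=Q(s,\pi_\theta(s),\theta)$ by the chain rule: the $\theta$-dependence splits into the action slot $a=\pi_\theta(s)$ and the explicit third slot, and by the Bellman identity the second contribution equals $\gamma\int_{\mathcal S}P(s'|s,\pi_\theta(s))\nabla_\theta V(s',\theta)\de s'$, so that
\begin{equation}
  \nabla_\theta V(s,\theta)=\nabla_a Q(s,a,\theta)|_{a=\pi_\theta(s)}\,\nabla_\theta\pi_\theta(s)+\gamma\int_{\mathcal S}P(s'|s,\pi_\theta(s))\,\nabla_\theta V(s',\theta)\de s'.
\end{equation}
Unrolling this recursion writes $\nabla_\theta V(s,\theta)$ as a $\gamma$-discounted sum, over the $t$-step transition kernels $P(s\rightarrow s',t,\pi_\theta)$, of the local term $\nabla_a Q(s',a,\theta)|_{a=\pi_\theta(s')}\nabla_\theta\pi_\theta(s')$. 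Substituting this into $\nabla_\theta J(\pi_\theta)=\int_{\mathcal S}\mu_0(s)\nabla_\theta V(s,\theta)\de s$, swapping summation and integration, and collapsing the $\mu_0$-weighted discounted sum of transition kernels into $d^{\pi_\theta}$ as defined in Section~\ref{sec:mdp} yields $\nabla_\theta J(\pi_\theta)=\int_{\mathcal S}d^{\pi_\theta}(s)\,\nabla_a Q(s,a,\theta)|_{a=\pi_\theta(s)}\nabla_\theta\pi_\theta(s)\de s={\ev}_{s\sim d^{\pi_\theta}(s)}\left[\nabla_a Q(s,a,\theta)|_{a=\pi_\theta(s)}\nabla_\theta\pi_\theta(s)\right]$, which is the claim.

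The main obstacle is technical rather than conceptual: the two ``differentiation under the integral/sum'' steps — once for the state integral in the Bellman recursion, once for the infinite discounted series produced by the unrolling — must be justified on a state space that need not be compact. This is precisely what the standard regularity assumptions borrowed from \citet{Silver2014} provide: one verifies, e.g.\ by dominated convergence, that $\nabla_a Q(\cdot,\cdot,\theta)$, $\nabla_\theta\pi_\theta(\cdot)$ and the differentiated transition densities are dominated by integrable functions and that the discounted series of transition kernels converges. The remaining step — matching the exact indexing and discounting convention of $d^{\pi_\theta}$ used here to the one produced by the unrolling — is routine bookkeeping.
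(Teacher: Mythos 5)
Your proposal is correct and follows essentially the same route as the paper's proof: differentiate $V(s,\theta)=Q(s,\pi_\theta(s),\theta)$ via the Bellman identity to obtain the recursion $\nabla_\theta V(s,\theta)=\nabla_a Q(s,a,\theta)|_{a=\pi_\theta(s)}\nabla_\theta\pi_\theta(s)+\gamma\int_{\mathcal S}P(s'|s,\pi_\theta(s))\nabla_\theta V(s',\theta)\de s'$, unroll it into the discounted sum of $t$-step kernels, and integrate against $\mu_0$ to collapse into $d^{\pi_\theta}$, exactly as in the appendix (which itself mirrors \citet{Silver2014}). The only cosmetic difference is that you apply the chain rule first and identify the explicit-$\theta$ contribution via the Bellman identity, whereas the paper expands the Bellman equation first and then regroups the action-derivative terms into $\nabla_a Q$; the substance is identical.
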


Parameter-based value functions allow us also to learn a function of the policy parameters that directly approximates $J(\pi_{\theta})$. In particular, the parameter-based start-state-value function (PSSVF) is defined as:
\begin{equation}
    V(\theta) := {\ev}_{s \sim \mu_0(s)}[V(s, \theta)] = \int_{\mathcal{S}} \mu_0(s) V(s, \theta) \de s = J(\pi_{\theta}).
\end{equation}

\paragraph{Off-policy RL} In the off-policy setting, the objective to be maximized becomes:
\begin{equation}
    J_b(\pi_{\theta^{*}})=\max_{\theta}\int_{\mathcal{S}} d_{\infty}^{\pi_b}(s) V(s, \theta) \de s = \max_{\theta}\int_{\mathcal{S}} \int_{\mathcal{A}} d_{\infty}^{\pi_b}(s)  \pi_{\theta}(a|s)  Q(s,a,\theta) \de a \de s.
\end{equation}

By taking the gradient of the performance $J_b$ with respect to the policy parameters $\theta$ we obtain novel policy gradient theorems.
Since $\theta$ is continuous, we need to use function approximators ${V}_{\textbf{w}}(\theta) \approx V(\theta)$, ${V}_{\textbf{w}}(s, \theta) \approx V(s, \theta)$ and ${Q}_{\textbf{w}}(s,a,\theta) \approx Q(s,a,\theta)$. Compatible function approximations can be derived to ensure that the approximated value function is following the true gradient. Like in previous approaches, this would result in linearity conditions. However, here we consider nonlinear function approximation and we leave the convergence analysis of linear PBVFs as future work. In episodic settings, we do not have access to $d_{\infty}^{\pi_b}$, so in the algorithm derivations and in the experiments we approximate it by sampling trajectories generated by the behavioral policy.
In all cases, the policy improvement step can be very expensive, due to the computation of the $\argmax$ over a continuous space $\Theta$.
Actor-critic methods can be derived to solve this optimization problem, where the critic (PBVFs) can be learned using TD or MC methods, while the actor is updated following the gradient with respect to the critic.
Although our algorithms on PSSVF and PSVF can be used with both stochastic and deterministic policies, removing the stochasticity of the action-selection process might facilitate learning the value function. All our algorithms make use of a replay buffer.

\subsection{Parameter-based Start-State-Value Function $V(\theta)$}
We first derive the PSSVF $V(\theta)$. Given the original performance index $J$, and taking the gradient with respect to $\theta$, we obtain:
\begin{equation}
    \nabla_{\theta}J(\pi_{\theta})=\int_{\mathcal{S}} \mu_0(s)\nabla_{\theta} V(s, \theta) \de s = {\ev}_{s \sim \mu_0(s)}[\nabla_{\theta} V(s, \theta)] = \nabla_{\theta}V(\theta).
\end{equation}
In Algorithm~\ref{alg:pvf}, the critic $V_{\textbf{w}}(\theta)$ is learned using MC to estimate the value of any policy $\theta$. The actor is then updated following the direction of improvement suggested by the critic. Since the main application of PSSVF is in episodic tasks\footnote{Alternatives include regenerative method for MC estimation~\citep{regen}.}, we optimize for the undiscounted objective. 
\begin{algorithm}[H]
  \caption{Actor-critic with Monte Carlo prediction for $V(\theta)$}
  \label{alg:pvf}
   \hspace*{\algorithmicindent} \textbf{Input}: Differentiable critic $V_{\textbf{w}}: \Theta \rightarrow \mathcal{R}$ with parameters $\textbf{w}$; deterministic or stochastic actor $\pi_{\theta}$ with parameters $\theta$; empty replay buffer $D$ \\
   \hspace*{\algorithmicindent} \textbf{Output} : Learned $V_{\textbf{w}} \approx V(\theta) \forall \theta$, learned $\pi_{\theta} \approx \pi_{\theta^*}$
    \begin{algorithmic}
    \State Initialize critic and actor weights $\textbf{w}, \theta$
	\Repeat:
		\State Generate an episode $s_{0}, a_{0}, r_{1}, s_{1}, a_{1}, r_{2}, \dots, s_{T-1}, a_{T-1}, r_{T}$ with policy $\pi_{\theta}$
		\State Compute return $r = \sum_{k=1}^T r_k$
		\State Store $(\theta, r)$ in the replay buffer $D$
		\For {many steps}:
		    \State Sample a batch $B = \{(r, \theta)\}$ from $D$
		    \State Update critic by stochastic gradient descent: $\nabla_{\textbf{w}} \ev_{(r, \theta) \in B} [r - V_{\textbf{w}}(\theta)]^2$
	    \EndFor
		\For {many steps}:
		    \State Update actor by gradient ascent: $\nabla_{\theta}  V_{\textbf{w}}(\theta)$ 
	    \EndFor
    \Until{convergence}
    \end{algorithmic}
\end{algorithm}

\subsection{Parameter-based State-Value Function $V(s,\theta)$}
Learning the value function using MC approaches can be difficult due to the high variance of the estimate. Furthermore, episode-based algorithms like Algorithm~\ref{alg:pvf} are unable to credit good actions in bad episodes. Gradient methods based on TD updates provide a biased estimate of $V(s,\theta)$ with much lower variance and can credit actions at each time step. Taking the gradient of $J_b(\pi_{\theta})$ in the PSVF formulation\footnote{Compared to standard methods based on the state-value function, we can directly optimize the policy following the performance gradient of the PSVF, obtaining a policy improvement step in a model-free way.}, we obtain:
\begin{equation}
    \nabla_{\theta}J_b(\pi_{\theta})=\int_{\mathcal{S}} d_{\infty}^{\pi_b}(s) \nabla_{\theta} V(s, \theta) \de s = {\ev}_{s \sim d_{\infty}^{\pi_b}(s)}[\nabla_{\theta} V(s, \theta)].
\end{equation}

Algorithm~\ref{alg:psvf} (Appendix) uses the actor-critic architecture, where the critic is learned via TD\footnote{Note that the differentiability of the policy $\pi_{\theta}$ is never required in PSSVF and PSVF.}.
\subsection{Parameter-based Action-Value Function $Q(s,a,\theta)$}

The introduction of the PAVF $Q(s, a, \theta)$ allows us to derive new policy gradients theorems when using a stochastic or deterministic policy.

\paragraph{Stochastic policy gradients}
We want to use data collected from some stochastic behavioral policy $\pi_{b}$ in order to learn the action-value of a target policy $\pi_{\theta}$. Traditional off-policy actor-critic algorithms only approximate the gradient of $J_b$, since they do not estimate the gradient of the action-value function with respect to the policy parameters $\nabla_{\theta} Q^{\pi_{\theta}}(s,a)$~\citep{Degris2012, Silver2014}. With PBVFs, we can directly compute this contribution to the gradient. This yields an exact policy gradient theorem for $J_b$:

\begin{restatable}[]{thr}{spavf}
    \label{thr:spavf}

	For any Markov Decision Process, the following holds:
	\begin{equation}
	    \nabla_{\theta} J_b(\pi_{\theta}) = {\ev}_{s \sim d_{\infty}^{\pi_{b}}(s), a \sim \pi_{b}(.|s)}\left[\frac{\pi_{\theta}(a|s)}{\pi_{b}(a|s)} \left(Q(s,a,\theta) \nabla_{\theta} \log\pi_{\theta}(a|s) +  \nabla_{\theta}Q(s,a,\theta)\right)\right].
	\end{equation}
\end{restatable}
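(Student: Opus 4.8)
The plan is to start directly from the off-policy objective written in its PAVF form, $J_b(\pi_{\theta}) = \int_{\mathcal{S}} d_{\infty}^{\pi_b}(s) \int_{\mathcal{A}} \pi_{\theta}(a|s) Q(s,a,\theta) \de a \de s$, and differentiate under the integral sign. The key conceptual point — and the reason this gradient is \emph{exact} rather than approximate — is that the parameter-based action-value function $Q(s,a,\theta)$ is by definition a bona fide function of the triple $(s,a,\theta)$, with explicit dependence on $\theta$, so $\nabla_{\theta} Q(s,a,\theta)$ is a well-defined object that we simply carry along, rather than a term we must unroll via a Bellman recursion as in the classical derivation of \citet{Sutton1999}. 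A second point that makes life easy is that the state weighting $d_{\infty}^{\pi_b}(s)$ is the limiting state distribution under the \emph{behavioral} policy and does not depend on $\theta$; hence no $\nabla_{\theta} d$ term ever appears — this is precisely the motivation for adopting $J_b$ as the surrogate objective.

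Concretely I would proceed as follows. First, under the standard regularity conditions of \citet{Silver2014} — continuity and boundedness of $\pi_{\theta}(a|s)$, $Q(s,a,\theta)$ and their $\theta$-gradients, together with absolute continuity of $a \mapsto \pi_{\theta}(a|s)$ with integrable gradient — dominated convergence licenses interchanging $\nabla_{\theta}$ with the integrals over $\mathcal{S}$ and $\mathcal{A}$, giving $\nabla_{\theta} J_b(\pi_{\theta}) = \int_{\mathcal{S}} d_{\infty}^{\pi_b}(s) \int_{\mathcal{A}} \nabla_{\theta}\big(\pi_{\theta}(a|s) Q(s,a,\theta)\big) \de a \de s$. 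Second, apply the product rule: $\nabla_{\theta}\big(\pi_{\theta}(a|s) Q(s,a,\theta)\big) = \nabla_{\theta}\pi_{\theta}(a|s)\, Q(s,a,\theta) + \pi_{\theta}(a|s)\, \nabla_{\theta} Q(s,a,\theta)$. Third, rewrite the first term with the score-function identity $\nabla_{\theta}\pi_{\theta}(a|s) = \pi_{\theta}(a|s)\, \nabla_{\theta}\log\pi_{\theta}(a|s)$, so the integrand becomes $\pi_{\theta}(a|s)\big(Q(s,a,\theta)\nabla_{\theta}\log\pi_{\theta}(a|s) + \nabla_{\theta} Q(s,a,\theta)\big)$. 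Fourth, change measure from $\pi_{\theta}(\cdot|s)$ to $\pi_b(\cdot|s)$ by inserting $\pi_b(a|s)/\pi_b(a|s)$ — legitimate because on the states reached under $\pi_b$ the support of $\pi_b(\cdot|s)$ covers that of $\pi_{\theta}(\cdot|s)$ — turning $\pi_{\theta}(a|s)$ into $\pi_b(a|s)\,\tfrac{\pi_{\theta}(a|s)}{\pi_b(a|s)}$. Recognizing the resulting double integral against $d_{\infty}^{\pi_b}(s)\,\pi_b(a|s)$ as the expectation over $s \sim d_{\infty}^{\pi_b}$, $a \sim \pi_b(\cdot|s)$ yields exactly the claimed identity.

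The only genuine obstacle is making the differentiation-under-the-integral step rigorous; everything else is the product rule, the log-derivative trick, and an importance-sampling rewriting. I would handle it by invoking the regularity assumptions inherited from \citet{Silver2014} for the outer ($\mathcal{S}$) integral and an analogous integrable-dominating-function condition for the inner ($\mathcal{A}$) integral, so that dominated convergence applies in both. It is worth emphasizing in the write-up that, unlike the traditional off-policy results of \citet{Degris2012, Silver2014}, no step here expands $Q(s,a,\theta)$ recursively, which is exactly why the term $\nabla_{\theta} Q(s,a,\theta)$ survives intact instead of being dropped.
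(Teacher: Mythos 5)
Your proposal is correct and follows essentially the same route as the paper's proof: differentiate the PAVF form of $J_b$ under the integral (the weighting $d_{\infty}^{\pi_b}$ being $\theta$-independent), apply the product rule and the identity $\nabla_{\theta}\pi_{\theta}(a|s)=\pi_{\theta}(a|s)\nabla_{\theta}\log\pi_{\theta}(a|s)$, and insert the ratio $\pi_b(a|s)/\pi_b(a|s)$ to rewrite the result as an expectation under the behavioral policy. Your additional remarks on dominated convergence and support coverage only make explicit regularity conditions the paper leaves implicit.
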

Algorithm~\ref{alg:spavf} (Appendix) uses an actor-critic architecture and can be seen as an extension of Off-PAC~\citep{Degris2012} to PAVF. 

\paragraph{Deterministic policy gradients}
Estimating $Q(s,a,\theta)$ is in general a difficult problem due to the stochasticity of the policy. Deterministic policies of the form $\pi: \mathcal{S} \rightarrow \mathcal{A}$ can help improving the efficiency in learning value functions, since the expectation over the action space is no longer required. Using PBVFs, we can write the performance of a policy $\pi_{\theta}$ as:
\begin{equation}
    J_b(\pi_{\theta})=\int_{\mathcal{S}} d_{\infty}^{\pi_{b}}(s) V(s, \theta) \de s = \int_{\mathcal{S}} d_{\infty}^{\pi_{b}}(s)   Q(s,\pi_{\theta}(s),\theta) \de s.
\end{equation}
Taking the gradient with respect to $\theta$ we obtain a deterministic policy gradient theorem:
\begin{restatable}[]{thr}{dpavf}
	Under standard regularity assumptions~\citep{Silver2014}, for any Markov Decision Process, the following holds:
	\begin{equation}
	    \nabla_{\theta} J_b(\pi_{\theta}) = {\ev}_{s \sim d_{\infty}^{\pi_{b}}(s)} \left[   \nabla_{a} Q(s,a,\theta)|_{a=\pi_{\theta}(s)} \nabla_{\theta} \pi_{\theta}(s) + \nabla_{\theta} Q(s,a,\theta)|_{a=\pi_{\theta}(s)}   \right].
	\end{equation}
\end{restatable}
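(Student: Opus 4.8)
The plan is to reduce the statement to a single application of the chain rule by exploiting the identity $V(s,\theta)=Q(s,\pi_{\theta}(s),\theta)$, which holds for deterministic policies because the value of starting in $s$ and following $\pi_{\theta}$ equals the value of first taking the action $\pi_{\theta}(s)$ and then following $\pi_{\theta}$. Starting from the off-policy objective written in PAVF form, $J_b(\pi_{\theta})=\int_{\mathcal{S}} d_{\infty}^{\pi_{b}}(s)\, Q(s,\pi_{\theta}(s),\theta) \de s$, I would first note that the weighting $d_{\infty}^{\pi_b}(s)$ is the limiting distribution of the \emph{fixed} behavioral policy and therefore carries no dependence on $\theta$; this is exactly what makes the off-policy gradient structurally simpler than the on-policy one, since no term involving $\nabla_\theta d^{\pi_\theta}$ ever appears.

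Next I would differentiate under the integral sign. Under the standard regularity assumptions imported from \citet{Silver2014} — $\mathcal{S}$, $\mathcal{A}$, $\Theta$ well behaved, $R$, $P$, $\mu_0$ and their relevant derivatives continuous, $\pi_\theta(s)$ and $\nabla_\theta \pi_\theta(s)$ continuous, and the resulting $Q(s,a,\theta)$ together with $\nabla_a Q$ and $\nabla_\theta Q$ existing and continuous — the integrand and its gradient admit a local (in $\theta$) integrable envelope, so dominated convergence justifies exchanging $\nabla_\theta$ and $\int_{\mathcal{S}}$. Then the only remaining work is the chain rule applied to $\theta \mapsto Q(s,\pi_{\theta}(s),\theta)$, which depends on $\theta$ through two channels, the action slot via $\pi_\theta(s)$ and the explicit parameter slot, yielding
\begin{equation}
\nabla_\theta Q(s,\pi_\theta(s),\theta) = \nabla_a Q(s,a,\theta)|_{a=\pi_\theta(s)}\, \nabla_\theta \pi_\theta(s) + \nabla_\theta Q(s,a,\theta)|_{a=\pi_\theta(s)} .
\end{equation}
Substituting this back under the integral and rewriting the integral as an expectation over $s \sim d_{\infty}^{\pi_b}$ gives the claim.

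The main obstacle is not the algebra but the two analytic facts it rests on: that $Q(s,a,\theta)$ is genuinely differentiable in both $a$ and $\theta$, and that differentiation commutes with the integral over $\mathcal{S}$. For the first I would follow the same route as the deterministic policy gradient theorem of \citet{Silver2014}: express $Q$ (equivalently $V$) through its Bellman fixed point and argue that under the continuity and boundedness hypotheses the fixed point inherits $C^1$ dependence on $(a,\theta)$, so that $\nabla_a Q$ and $\nabla_\theta Q$ are well defined and continuous — this is the step where the ``standard regularity assumptions'' are actually consumed, and the one I would spell out carefully in the appendix. For the second, continuity of the integrand's gradient together with compactness of the relevant domains (or an explicit integrable dominating function built from bounded rewards and $\gamma<1$) supplies the domination needed for the Leibniz rule. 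Once these are in place the theorem follows immediately from the chain rule above, mirroring Theorem~\ref{thr:ondpavf} except that here the extra $\nabla_\theta Q$ term survives precisely because the states are drawn from $d_{\infty}^{\pi_b}$ rather than from $d^{\pi_\theta}$.
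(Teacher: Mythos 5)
Your proposal follows the same route as the paper's own proof: write $J_b(\pi_\theta)=\int_{\mathcal{S}} d_{\infty}^{\pi_b}(s)\,Q(s,\pi_\theta(s),\theta)\de s$, note that the weighting of the fixed behavioral policy carries no $\theta$-dependence, differentiate under the integral, and apply the chain rule to the two $\theta$-channels of $Q(s,\pi_\theta(s),\theta)$. The only difference is that you spell out the regularity/dominated-convergence justification that the paper delegates to the cited assumptions of \citet{Silver2014}, which is a welcome but inessential addition.
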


Algorithm~\ref{alg:dpavf} (Appendix) uses an actor-critic architecture and can be seen as an extension of DPG~\citep{Silver2014} to PAVF. Despite the novel formulation of algorithm~\ref{alg:spavf}, we decided to avoid the stochasticity of the policy and to implement and analyze only the deterministic PAVF.

\section[Experiments]{Experiments\footnote{Code is available at: \url{https://github.com/FF93/Parameter-based-Value-Functions}}}
\label{sec:exp}
Applying algorithms~\ref{alg:pvf},~\ref{alg:psvf} and ~\ref{alg:dpavf} directly can lead to convergence to local optima, due to the lack of exploration.
In practice, like in standard deterministic actor-critic algorithms, we use a noisy version of the current learned policy in order to act in the environment and collect data to encourage exploration.
More precisely, at each episode we use $\pi_{\Tilde{\theta}}$ with $\Tilde{\theta} = \theta + \epsilon, \epsilon \sim \mathcal{N}(0, \sigma^2 I)$ instead of $\pi_{\theta}$ and then store $\Tilde{\theta}$ in the replay buffer.
In our experiments, we report both for our methods as well as the baselines the performance of the policy without parameter noise.

\subsection{Visualizing PBVFs using LQRs}
We start with an illustrative example that allows us to visualize how PBVFs are learning to estimate the expected return over the parameter space.
For this purpose, we use an instance of the 1D Linear Quadratic Regulator (LQR) problem and a linear deterministic policy with bias.
In figure \ref{fig:lqr_pvf}, we plot the episodic $J(\theta)$, the cumulative return that an agent would obtain by acting in the environment using policy $\pi_{\theta}$ for a single episode, and the cumulative return predicted by the PSSVF $V(\theta)$ for two different times during learning.
At the beginning of the learning process, the PSSVF is able to provide just a local estimation of the performance of the agent, since only few data have been observed.
However, after 1000 episodes, it is able to provide a more accurate global estimate over the parameter space.
Appendix \ref{experiments_detail:lqr} contains a similar visualization for PSVF and PAVF, environment details and hyperparameters used.
\begin{figure}[h]
\begin{center}
\centering
Optimization after 60 episodes
\includegraphics[width=1.0\linewidth]{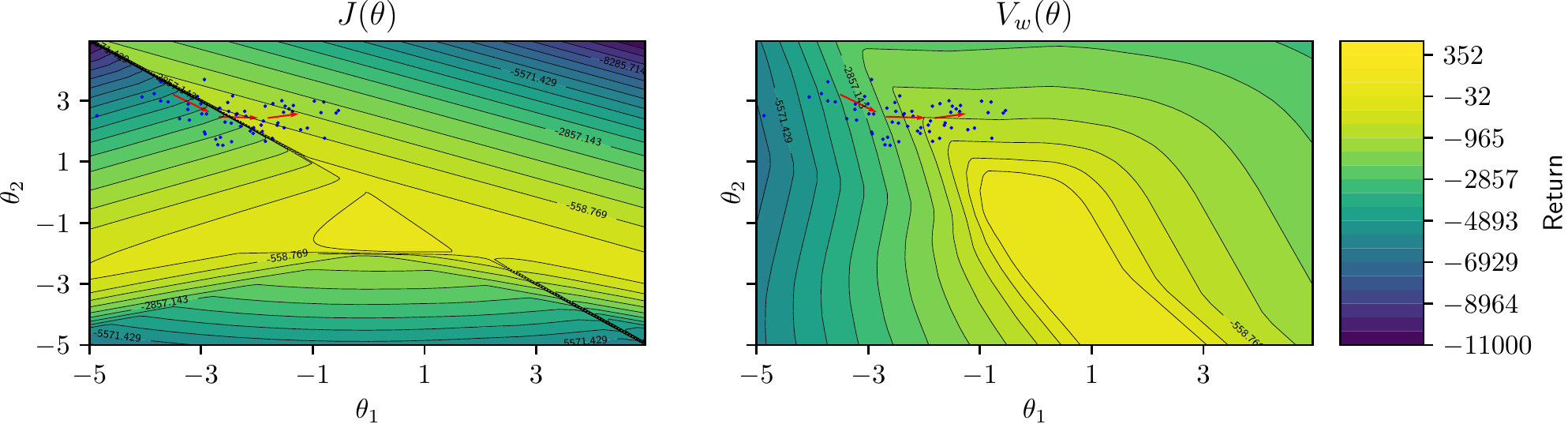} \\
\centering
Optimization after 1000 episodes
\includegraphics[width=1.0\linewidth]{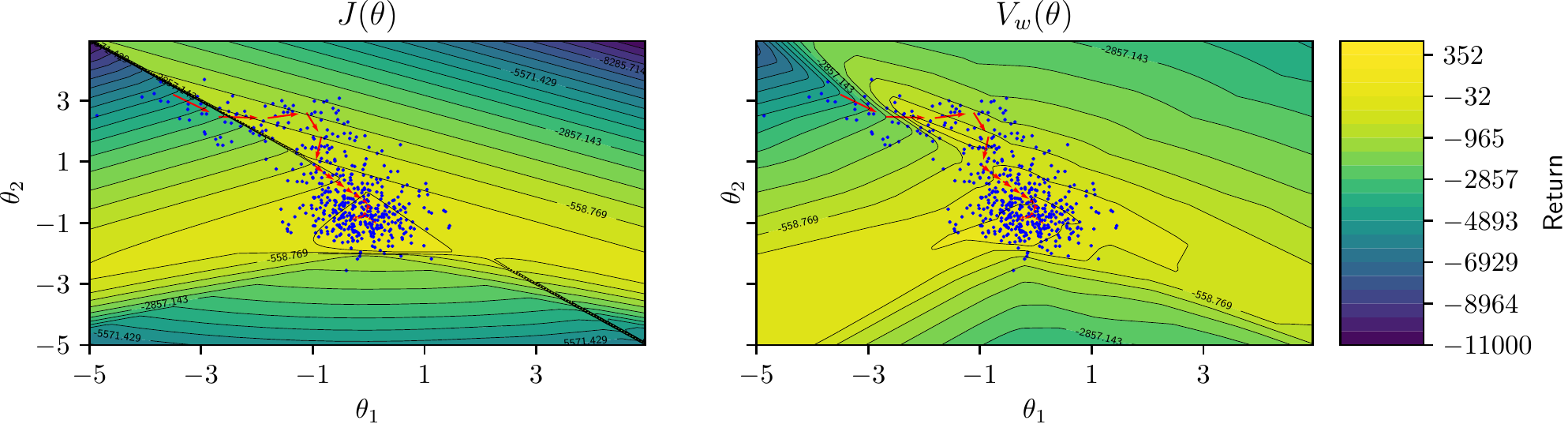}
\end{center}
\caption{True episodic return $J(\theta)$ and PSSVF estimation $V(\theta)$ as a function of the policy parameters at two different stages in training. The red arrows represent an optimization trajectory in  parameter space. The blue dots represent the perturbed policies used to train $V(\theta)$.}
\label{fig:lqr_pvf}
\end{figure}

\subsection{Main results}
Given the similarities between our PAVF and DPG, Deep Deterministic Policy Gradients (DDPG) is a natural choice for the baseline.
Additionally, the PSSVF $V(\theta)$ resembles evolutionary methods as the critic can be interpreted as a global fitness function.
Therefore, we decided to include in the comparison Augmented Random Search (ARS) which is known for its state-of-the-art performance using only linear policies in continuous control tasks.
For the policy, we use a 2-layer MLP (64,64) with tanh activations and a linear policy followed by a tanh nonlinearity. Figure~\ref{fig:learning_curves} shows results for deterministic policies with both architectures.
In all the tasks the PSSVF is able to achieve at least the same performance compared to ARS, often outperforming it.
In the Inverted Pendulum environment, PSVF and PAVF with deep policy are very slow to converge, but they excel in the Swimmer task and MountainCarContinuous.
In Reacher, all PBVFs fail to learn the task, while DDPG converges quickly to the optimal policy.
We conjecture that for this task it is difficult to perform a search in  parameter space.
On the other hand, in MountainCarContinuous, the reward is more sparse and DDPG only rarely observes positive reward when exploring in action space.
In Appendix~\ref{apx:exp} we include additional results for PSSVF and PSVF with stochastic policies and hyperparameters. We analyze the sensitivity of the algorithms on the choice of hyperparameters in Appendix~\ref{experiments_detail:sensitivity}.

\begin{figure}[h]

\begin{subfigure}[c]{1.0\textwidth}
  \centering
Shallow policies
  \includegraphics[width=0.9\linewidth]{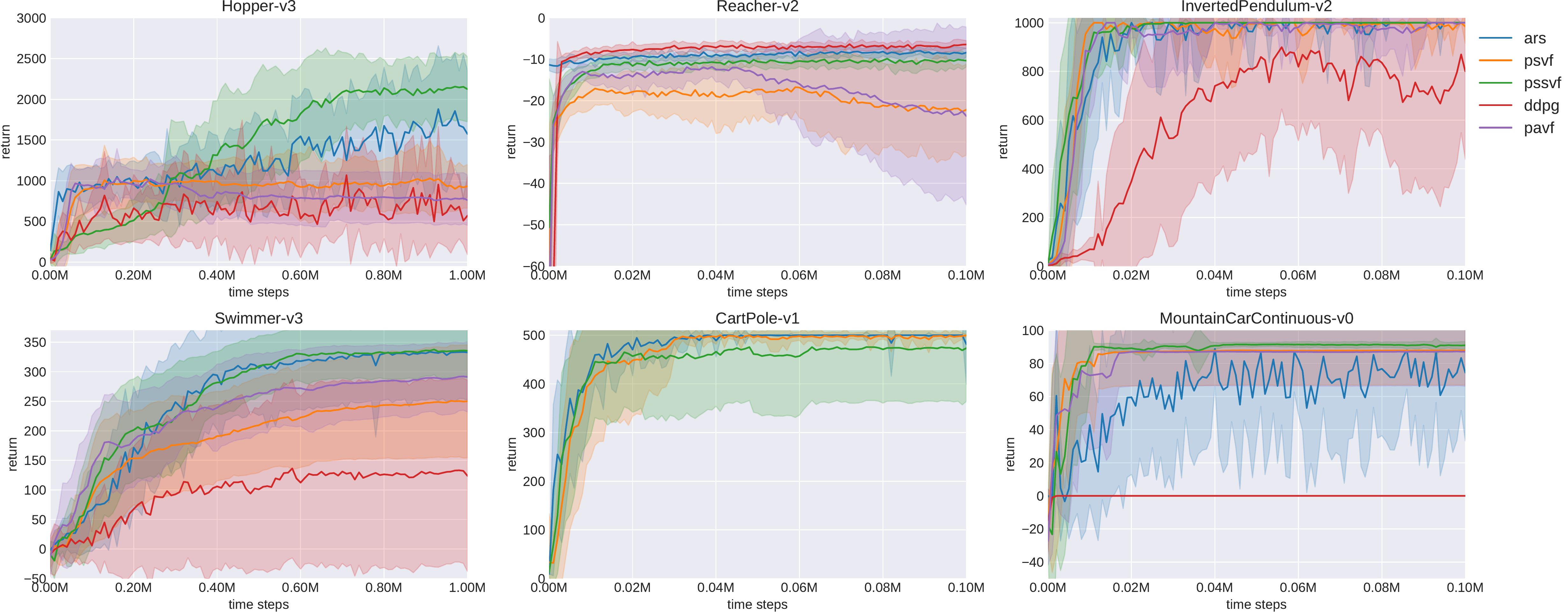}
  \vspace{0.3cm}
    \centering
\end{subfigure}%
\hfill
\begin{subfigure}[c]{1.0\textwidth}
	\vspace{0.2cm}
  \centering
  Deep policies

  \includegraphics[width=0.9\linewidth]{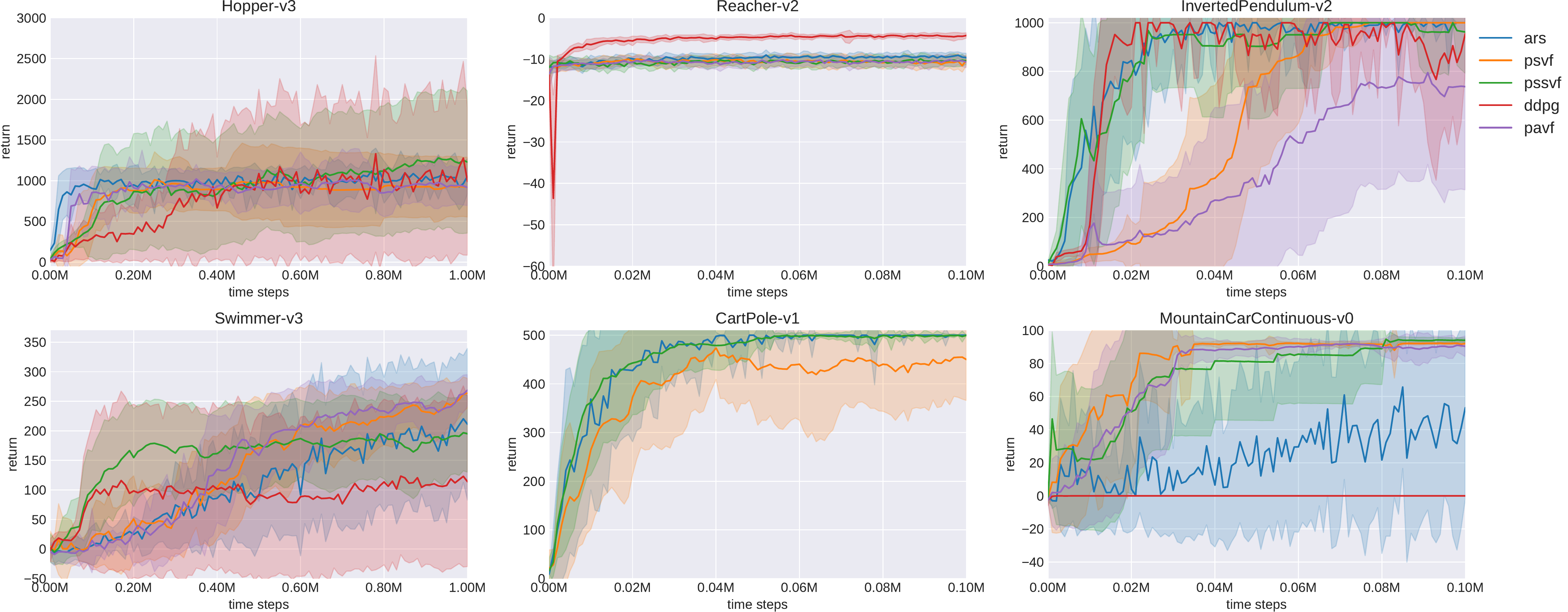}
    \centering
\end{subfigure}%
\caption{Average return of shallow and deep deterministic policies as a function of the number of time steps used for learning (across 20 runs, one standard deviation), for different environments and algorithms. We use the best hyperparameters found when maximizing the average return.}
\label{fig:learning_curves}
\end{figure}
\subsection{Zero-shot learning}
In order to test whether PBVFs are generalizing across the policy space, we perform the following experiment with shallow deterministic policies:
while learning using algorithm~\ref{alg:pvf}, we stop training and randomly initialize 5 policies.
Then, without interacting with the environment, we train these policies offline, in a zero-shot manner, following only the direction of improvement suggested by $\nabla_{\theta} V_{w}(\theta)$, whose weights $w$ remain frozen.
We observe that shallow policies can be effectively trained from scratch.
Results for PSSVFs in Swimmer-v3 are displayed in figure~\ref{fig:scratch}.
In particular, we compare the performance of the policy learned, the best perturbed policy for exploration seen during training and five policies learned from scratch at three different stages in training.
We note that after the PSSVF has been trained for 100,000 time steps interactions with the environment (first snapshot), these policies are already able to outperform both the current policy and any policy seen while training the PSSVF.
They achieve an average return of 297, while the best observed return was 225.
We include additional results for PSVF and PAVF in different environments, using shallow and deep policies in Appendix~\ref{experiments_detail:zero_shot}. When using deep policies, we obtain similar results only for the simplest environments.
For this task, we use the same hyperparameters as in figure~\ref{fig:learning_curves}.

\begin{figure}[h]
\begin{center}
\centering
\includegraphics[width=0.6\linewidth]{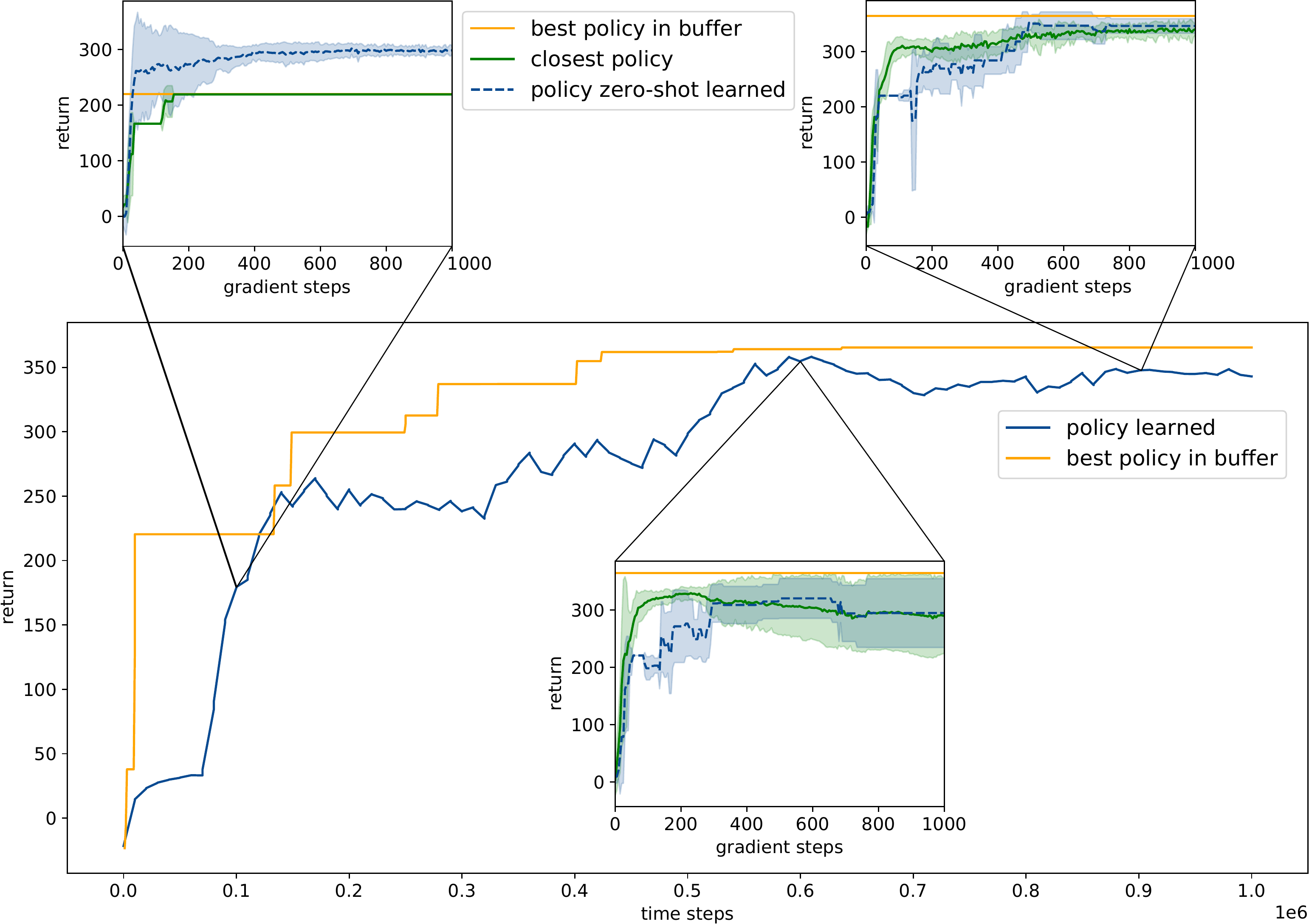} \\

\end{center}
\caption{Policies learned from scratch during training. The plot in the center represents the return of the agent learning while interacting with the environment using Algorithm~\ref{alg:pvf}. We compare the {\color{orange} best noisy policy $\pi_{\Tilde{\theta}}$} used for exploration to the {\color{blue}policy $\pi_{\theta}$} learned through the critic. The learning curves in the small plots represent the return obtained by {\color{blue}policies trained from scratch} following the fixed critic $V_{\textbf{w}}(\theta)$ after different time steps of training. The return of the {\color{green}closest policy} (L2 distance) in the replay buffer with respect to the policy learned from scratch is depicted in green.}
\label{fig:scratch}

\end{figure}

\subsection{Offline learning with fragmented behaviors}
\label{sec:offline}
In our last experiment, we investigate how PSVFs are able to learn in a completely offline setting.
The goal is to learn a good policy in Swimmer-v3 given a fixed dataset containing 100,000 transitions, without additional environment interactions.
Furthermore, the policy generating the data is perturbed every 200 time steps, for a total of 5 policies per episode.
Observing only incomplete trajectories for each policy parameter makes TD bootstrapping harder: In order to learn, the PSVF needs to generalize across both the state and the parameter space.
Given the fixed dataset, we first train the PSVF, minimizing the TD error.
Then, at different stages during learning, we train 5 new shallow deterministic policies.
Figure~\ref{fig:offline} describes this process.
We note that at the beginning of training, when the PSVF $V(s,\theta)$ has a larger TD error, these policies have poor performance.
However, after 7000 gradient updates, they are able to achieve a reward of 237, before eventually degrading to 167.
They outperform the best policy in the dataset used to train the PSVF, whose return is only of 58.

\begin{figure}[h]
\begin{center}
\centering
\includegraphics[width=0.6\linewidth]{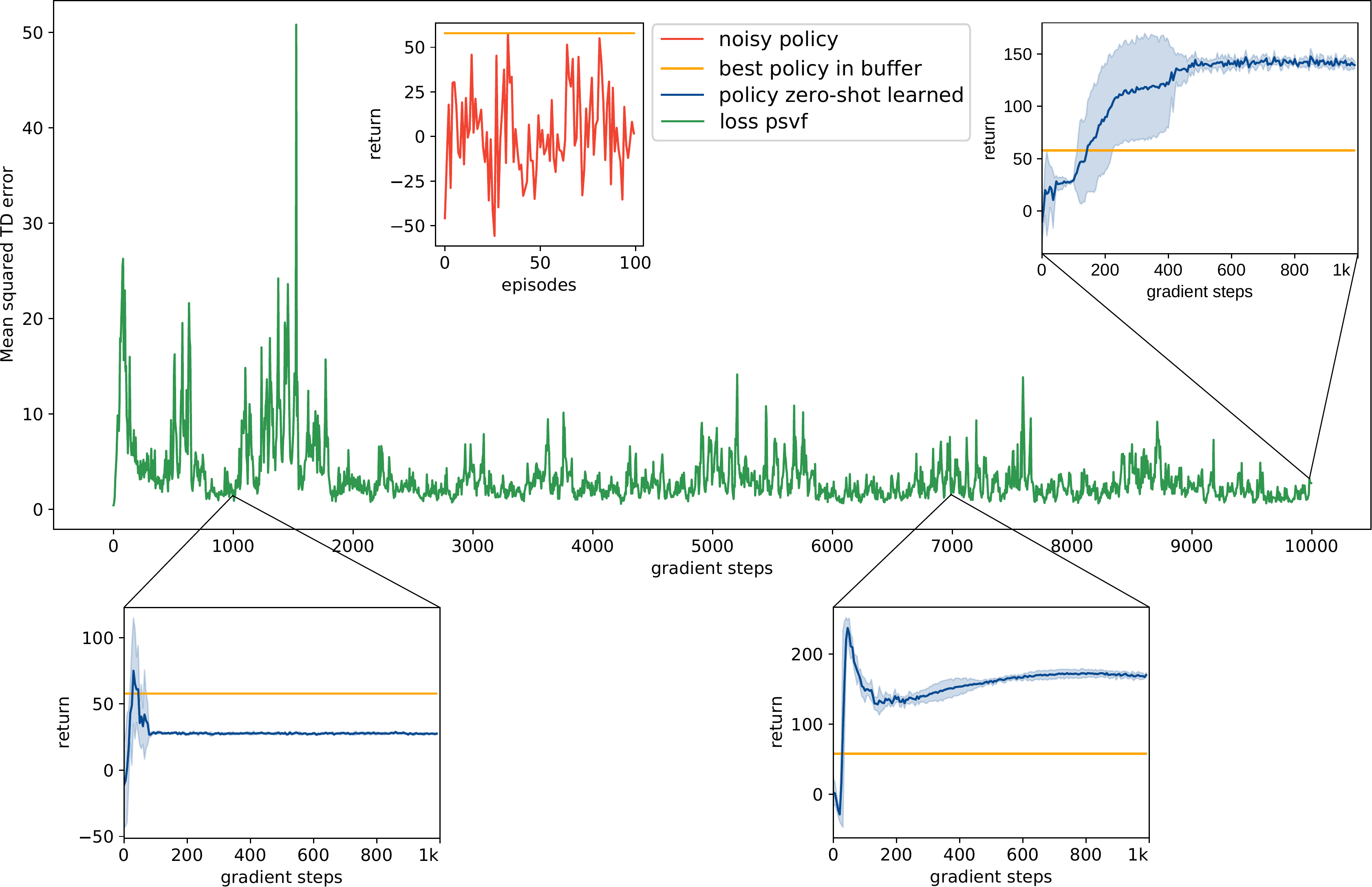}
\end{center}
\caption{Offline learning of PSVF. We plot the {\color{green} mean squared TD error of a PSVF} trained using data coming from a set of {\color{red} noisy policies}. In the small plots, we compare the return obtained by {\color{blue}policies trained from scratch} following the fixed critic $V_{\textbf{w}}(s, \theta)$ after different time steps of value function training and the return of the best {\color{orange} noisy policy} used to train V.}
\label{fig:offline}
\end{figure}

\section{Related work}
\label{sec:related}
There are two main classes of similar algorithms performing search in policy parameter space.
Evolutionary algorithms~\citep{wierstra2014natural,salimans2017evolution,mania2018simple} iteratively estimate a fitness function evaluating the performance of a population of policies and then perform gradient ascent in parameter space, often estimating the gradient using finite difference approximation.
By replacing the performance of a population through a likelihood estimation, evolutionary algorithms become a form of Parameter Exploring Policy Gradients~\citep{sehnkepgpecontrol,sehnke_parameterexploring_2010}.
Our methods are similar to evolution since our value function can be seen as a fitness.
Unlike evolution, however, our approach allows for obtaining the fitness gradient directly and is more suitable for reusing past data.
While direct $V(\theta)$ optimization is strongly related to evolution, our more informed algorithms optimize $V(s,\theta)$ and $Q(s,a,\theta)$.
That is, ours both perform a search in policy parameter space AND train the value function and the policy online, without having to wait for the ends of trials or episodes.\\
\newline
The second related class of methods involves surrogate functions~\citep{box1951, Booker1998, NIPS1995_1124}. They often use local optimizers for generalizing across fitness functions. In particular, Bayesian Optimization (BO)~\citep{snoek2012practical, snoekscalable} uses a surrogate function to evaluate the performance of a model over a set of hyperparameters and follows the uncertainty on the surrogate to query the new data to sample. Unlike BO, we do not build a probabilistic model and we use the gradient of the value function instead of a sample from the posterior to decide which policy parameters to use next in the policy improvement step. \\
\newline
The possibility of augmenting the value functions with auxiliary parameters was already considered in work on General Value Functions~\citep{Sutton:2011:HSR:2031678.2031726}, where the return is defined with respect to an arbitrary reward function. Universal Value Function Approximators~\citep{Schaul:2015:UVF:3045118.3045258} extended this approach to learn a single value function $V^{\pi_{\theta}}(s,g)$, representing the value, given possible agent goals $g$. In particular, they learn different embeddings for states and goals, exploiting their common structure, and they show generalization to new unseen goals. Similarly, our PSVF $V(s,\theta)$ is able to generalize to unseen policies, observing data for only a few $(s,\theta)$ pairs. General and Universal Value Functions have not been applied to learn a single value function for every possible policy.\\
\newline
Policy Evaluation Networks (PENs)~\citep{harb2020policy} are closely related to our work and share the same motivation. PENs focus on the simplest PSSVF $V(\theta)$ trained without an actor-critic architecture. Like in some of our experiments, the authors show how following the direction of improvement suggested by $V(\theta)$ leads to an increase in policy performance. They also suggest to explore in future work a more complex setting where a PSVF $V(s, \theta)$ is learned using an actor-critic architecture. Our work directly introduces the PSVF $V(s,\theta)$ and PAVF $Q(s,a,\theta)$ and presents novel policy gradient theorems for PAVFs when stochastic or deterministic policies are used. There are many differences between our approach to learning $V(\theta)$ and theirs. For example, we do not use a fingerprint mechanism~\citep{harb2020policy} for embedding the weights of complex policies. Instead, we simply parse all the policy weights as inputs to the value function, even in the nonlinear case. Fingerprinting may be important for representing nonlinear policies without losing information about their structure and for saving memory required to store the weights.~\citet{harb2020policy} focus on the offline setting. They first use randomly initialized policies to perform rollouts and collect reward from the environment. Then, once $V(\theta)$ is trained using the data collected, many gradient ascent steps through V yield new, unseen, randomly initialized policies in a zero-shot manner, exhibiting improved performance. They train their value function using small nonlinear policies of one hidden layer and 30 neurons on Swimmer-v3. They evaluate 2000 deterministic policies on 500 episodes each (1 million policy evaluations), achieving a final expected return of $\approx 180$ on new policies trained from scratch through V. On the other hand, in our zero-shot learning experiment using a linear PSSVF, after only 100 policy evaluations, we obtain a return of 297. In our main experiments, we showed that a fingerprint mechanism is not necessary for the tasks we analyzed: even when using a much bigger 2-layers MLP policy, we are able to outperform the results in PEN. Although ~\citet{harb2020policy} use Swimmer-v3 ``to scale up their experiments'', our results suggest that Swimmer-v3 does not conclusively demonstrate possible benefits of their policy embedding.\\ 
\newline
Gradient Temporal Difference~\citep{gtd, fastgd, convergentnonlinear, offpolicycontrol, maei2011gradient} and Emphatic Temporal Difference methods~\citep{sutton2016emphatic} were developed to address convergence under on-policy and off-policy~\citep{Precup2001OffpolicyTD} learning with function approximation. The first attempt to obtain a stable off-policy actor-critic algorithm under linear function approximation was called Off-PAC~\citep{Degris2012}, where the critic is updated using GTD($\lambda$)~\citep{maei2011gradient} to estimate the state-value function. This algorithm converges when using tabular policies. However, in general, the actor does not follow the true gradient direction for $J_b$. A paper on DPG~\citep{Silver2014} extended the Off-PAC policy gradient theorem~\citep{Degris2012} to deterministic policies. This was coupled with a deep neural network to solve continuous control tasks through Deep Deterministic Policy Gradients~\citep{lillicrap2015continuous}.
\citet{imani2018off} used emphatic weights to derive an exact off-policy policy gradient theorem for $J_b$. Differently from Off-PAC, they do not ignore the gradient of the action-value function with respect to the policy, which is incorporated in the emphatic weighting: a vector that needs to be estimated. Our off-policy policy gradients provide an alternative approach that does not need emphatic weights.\\
\newline
The widely used off-policy objective function $J_b$ suffers the distribution shift problem.~\citet{liu2019off} provided an off-policy policy gradient theorem which is unbiased for the true RL objective $J(\pi_{\theta})$, introducing a term $d_{\infty}^{\pi_{\theta}}/                          d_{\infty}^{\pi_b}$ that corrects the mismatch between the states distributions. Despite their sound off-policy formulation, estimating the state weighting ratio remains challenging.
All our algorithms are based on the off-policy actor-critic architecture. The two algorithms based on $Q(s,a,\theta)$ can be viewed as analogous to Off-PAC and DPG where the critic is defined for all policies and the actor is updated following the true gradient with respect to the critic.

\section{Limitations and future work}
We introduced PBVFs, a novel class of value functions which receive as input the parameters of a policy and can be used for off-policy learning.
We showed that PBVFs are competitive to ARS and DDPG~\citep{mania2018simple, lillicrap2015continuous} while generalizing across policies and allowing for zero-shot training in an offline setting.
Despite their positive results on shallow and deep policies, PBVFs suffer the curse of dimensionality when the number of policy parameters is high.
Embeddings similar to those used in PENs~\citep{harb2020policy} may be useful not only for saving memory and computational time, but also for facilitating search in parameter space.
We intend to evaluate the benefits of such embeddings and other dimensionality reduction techniques.
We derived off-policy policy gradient theorems, showing how PBVFs follow the true gradient of the performance $J_b$.
With these results, we plan to analyze the convergence of our algorithms using stochastic approximation techniques~\citep{borkar2009stochastic} and test them on environments where traditional methods are known to diverge~\citep{baird1995residual}.
Finally, we want to investigate how PBVFs applied to supervised learning tasks or POMDPs, can avoid BPTT by mapping the weights of an RNN to its loss.
\label{sec:future}

%

\subsubsection*{Acknowledgments}
We thank Paulo Rauber, Imanol Schlag, Miroslav Strupl, Róbert Csordás, Aleksandar Stanić, Anand Gopalakrishnan, Sjoerd Van Steenkiste and Julius Kunze for their feedback. This work was supported by the ERC Advanced Grant (no: 742870). We also thank NVIDIA Corporation for donating a DGX-1 as part of the Pioneers of AI Research Award and to IBM for donating a Minsky machine.

\bibliography{iclr2021_conference}

\begin{thebibliography}{48}
\providecommand{\natexlab}[1]{#1}
\providecommand{\url}[1]{\texttt{#1}}
\expandafter\ifx\csname urlstyle\endcsname\relax
  \providecommand{\doi}[1]{doi: #1}\else
  \providecommand{\doi}{doi: \begingroup \urlstyle{rm}\Url}\fi

\bibitem[Achiam(2018)]{SpinningUp2018}
Joshua Achiam.
\newblock {Spinning Up in Deep Reinforcement Learning}.
\newblock 2018.

\bibitem[Baird(1995)]{baird1995residual}
Leemon Baird.
\newblock Residual algorithms: Reinforcement learning with function
  approximation.
\newblock In \emph{Machine Learning Proceedings 1995}, pp.\  30--37. Elsevier,
  1995.

\bibitem[Booker et~al.(1998)Booker, Dennis, Frank, Serafini, and
  Torczon]{Booker1998}
Andrew~J. Booker, J.~E. Dennis, Paul~D. Frank, David~B. Serafini, and Virginia
  Torczon.
\newblock \emph{Optimization Using Surrogate Objectives on a Helicopter Test
  Example}, pp.\  49--58.
\newblock Birkh{\"a}user Boston, Boston, MA, 1998.
\newblock ISBN 978-1-4612-1780-0.
\newblock \doi{10.1007/978-1-4612-1780-0_3}.

\bibitem[Borkar(2009)]{borkar2009stochastic}
Vivek~S Borkar.
\newblock \emph{Stochastic approximation: a dynamical systems viewpoint},
  volume~48.
\newblock Springer, 2009.

\bibitem[Box \& Wilson(1951)Box and Wilson]{box1951}
G.~E.~P. Box and K.~B. Wilson.
\newblock On the experimental attainment of optimum conditions.
\newblock \emph{Journal of the Royal Statistical Society. Series B
  (Methodological)}, 13\penalty0 (1):\penalty0 1--45, 1951.
\newblock ISSN 00359246.

\bibitem[Cortes et~al.(2010)Cortes, Mansour, and Mohri]{cortes2010learning}
Corinna Cortes, Yishay Mansour, and Mehryar Mohri.
\newblock Learning bounds for importance weighting.
\newblock In \emph{Advances in neural information processing systems}, pp.\
  442--450, 2010.

\bibitem[Degris et~al.(2012)Degris, White, and Sutton]{Degris2012}
Thomas Degris, Martha White, and Richard~S. Sutton.
\newblock Off-policy actor-critic.
\newblock In \emph{Proceedings of the 29th International Coference on
  International Conference on Machine Learning}, ICML'12, pp.\  179--186, USA,
  2012. Omnipress.
\newblock ISBN 978-1-4503-1285-1.

\bibitem[Harb et~al.(2020)Harb, Schaul, Precup, and Bacon]{harb2020policy}
Jean Harb, Tom Schaul, Doina Precup, and Pierre-Luc Bacon.
\newblock Policy evaluation networks.
\newblock \emph{arXiv preprint arXiv:2002.11833}, 2020.

\bibitem[Hesterberg(1988)]{hesterberg1988advances}
Timothy~Classen Hesterberg.
\newblock \emph{Advances in importance sampling}.
\newblock PhD thesis, Stanford University, 1988.

\bibitem[Imani et~al.(2018)Imani, Graves, and White]{imani2018off}
Ehsan Imani, Eric Graves, and Martha White.
\newblock An off-policy policy gradient theorem using emphatic weightings.
\newblock In \emph{Advances in Neural Information Processing Systems}, pp.\
  96--106, 2018.

\bibitem[Jaderberg et~al.(2017)Jaderberg, Czarnecki, Osindero, Vinyals, Graves,
  Silver, and Kavukcuoglu]{jaderberg2017decoupled}
Max Jaderberg, Wojciech~Marian Czarnecki, Simon Osindero, Oriol Vinyals, Alex
  Graves, David Silver, and Koray Kavukcuoglu.
\newblock Decoupled neural interfaces using synthetic gradients.
\newblock In \emph{Proceedings of the 34th International Conference on Machine
  Learning-Volume 70}, pp.\  1627--1635. JMLR. org, 2017.

\bibitem[Konda \& Tsitsiklis(2001)Konda and Tsitsiklis]{kondaactorcritic}
Vijay Konda and John Tsitsiklis.
\newblock Actor-critic algorithms.
\newblock \emph{Society for Industrial and Applied Mathematics}, 42, 04 2001.

\bibitem[Lillicrap et~al.(2015)Lillicrap, Hunt, Pritzel, Heess, Erez, Tassa,
  Silver, and Wierstra]{lillicrap2015continuous}
Timothy~P Lillicrap, Jonathan~J Hunt, Alexander Pritzel, Nicolas Heess, Tom
  Erez, Yuval Tassa, David Silver, and Daan Wierstra.
\newblock Continuous control with deep reinforcement learning.
\newblock \emph{arXiv preprint arXiv:1509.02971}, 2015.

\bibitem[Liu et~al.(2019)Liu, Swaminathan, Agarwal, and Brunskill]{liu2019off}
Yao Liu, Adith Swaminathan, Alekh Agarwal, and Emma Brunskill.
\newblock Off-policy policy gradient with state distribution correction.
\newblock \emph{arXiv preprint arXiv:1904.08473}, 2019.

\bibitem[Maei et~al.(2009)Maei, Szepesv\'{a}ri, Bhatnagar, Precup, Silver, and
  Sutton]{convergentnonlinear}
Hamid~R. Maei, Csaba Szepesv\'{a}ri, Shalabh Bhatnagar, Doina Precup, David
  Silver, and Richard~S. Sutton.
\newblock Convergent temporal-difference learning with arbitrary smooth
  function approximation.
\newblock In \emph{Proceedings of the 22nd International Conference on Neural
  Information Processing Systems}, NIPS’09, pp.\  1204–1212, Red Hook, NY,
  USA, 2009. Curran Associates Inc.
\newblock ISBN 9781615679119.

\bibitem[Maei(2011)]{maei2011gradient}
Hamid~Reza Maei.
\newblock \emph{Gradient temporal-difference learning algorithms}.
\newblock PhD thesis, University of Alberta, 2011.

\bibitem[Maei et~al.(2010)Maei, Szepesv\'{a}ri, Bhatnagar, and
  Sutton]{offpolicycontrol}
Hamid~Reza Maei, Csaba Szepesv\'{a}ri, Shalabh Bhatnagar, and Richard~S.
  Sutton.
\newblock Toward off-policy learning control with function approximation.
\newblock In \emph{Proceedings of the 27th International Conference on
  International Conference on Machine Learning}, ICML’10, pp.\  719–726,
  Madison, WI, USA, 2010. Omnipress.
\newblock ISBN 9781605589077.

\bibitem[Mania et~al.(2018)Mania, Guy, and Recht]{mania2018simple}
Horia Mania, Aurelia Guy, and Benjamin Recht.
\newblock Simple random search of static linear policies is competitive for
  reinforcement learning.
\newblock In \emph{Advances in Neural Information Processing Systems}, pp.\
  1800--1809, 2018.

\bibitem[Metelli et~al.(2018)Metelli, Papini, Faccio, and
  Restelli]{metelli2018policy}
Alberto~Maria Metelli, Matteo Papini, Francesco Faccio, and Marcello Restelli.
\newblock Policy optimization via importance sampling.
\newblock In \emph{Advances in Neural Information Processing Systems}, pp.\
  5442--5454, 2018.

\bibitem[Metropolis \& Ulam(1949)Metropolis and Ulam]{MET49}
N.~Metropolis and S.~Ulam.
\newblock The monte carlo method.
\newblock \emph{J.~Am.~Stat.~Assoc.}, 44:\penalty0 335, 1949.

\bibitem[Mnih et~al.(2015)Mnih, Kavukcuoglu, Silver, Rusu, Veness, Bellemare,
  Graves, Riedmiller, Fidjeland, Ostrovski, et~al.]{mnih2015human}
Volodymyr Mnih, Koray Kavukcuoglu, David Silver, Andrei~A Rusu, Joel Veness,
  Marc~G Bellemare, Alex Graves, Martin Riedmiller, Andreas~K Fidjeland, Georg
  Ostrovski, et~al.
\newblock Human-level control through deep reinforcement learning.
\newblock \emph{Nature}, 518\penalty0 (7540):\penalty0 529--533, 2015.

\bibitem[Moore \& Schneider(1996)Moore and Schneider]{NIPS1995_1124}
Andrew~W. Moore and Jeff~G. Schneider.
\newblock Memory-based stochastic optimization.
\newblock In D.~S. Touretzky, M.~C. Mozer, and M.~E. Hasselmo (eds.),
  \emph{Advances in Neural Information Processing Systems 8}, pp.\  1066--1072.
  MIT Press, 1996.

\bibitem[Nachum et~al.(2019)Nachum, Dai, Kostrikov, Chow, Li, and
  Schuurmans]{nachum2019algaedice}
Ofir Nachum, Bo~Dai, Ilya Kostrikov, Yinlam Chow, Lihong Li, and Dale
  Schuurmans.
\newblock Algaedice: Policy gradient from arbitrary experience.
\newblock \emph{arXiv preprint arXiv:1912.02074}, 2019.

\bibitem[Peters \& Schaal(2008)Peters and
  Schaal]{Peters:2008:NA:1352927.1352986}
Jan Peters and Stefan Schaal.
\newblock Natural actor-critic.
\newblock \emph{Neurocomput.}, 71\penalty0 (7-9):\penalty0 1180--1190, March
  2008.
\newblock ISSN 0925-2312.
\newblock \doi{10.1016/j.neucom.2007.11.026}.

\bibitem[Precup et~al.(2001)Precup, Sutton, and
  Dasgupta]{Precup2001OffpolicyTD}
Doina Precup, Richard~S. Sutton, and Sanjoy Dasgupta.
\newblock Off-policy temporal difference learning with function approximation.
\newblock In \emph{ICML}, 2001.

\bibitem[Puterman(2014)]{puterman2014markov}
Martin~L Puterman.
\newblock \emph{Markov decision processes: discrete stochastic dynamic
  programming}.
\newblock John Wiley \& Sons, 2014.

\bibitem[Rubinstein \& Kroese(2016)Rubinstein and Kroese]{regen}
Reuven~Y. Rubinstein and Dirk~P. Kroese.
\newblock \emph{Simulation and the Monte Carlo Method}.
\newblock Wiley Publishing, 3rd edition, 2016.
\newblock ISBN 1118632168.

\bibitem[Salimans et~al.(2017)Salimans, Ho, Chen, Sidor, and
  Sutskever]{salimans2017evolution}
Tim Salimans, Jonathan Ho, Xi~Chen, Szymon Sidor, and Ilya Sutskever.
\newblock Evolution strategies as a scalable alternative to reinforcement
  learning.
\newblock \emph{arXiv preprint arXiv:1703.03864}, 2017.

\bibitem[Schaul et~al.(2015)Schaul, Horgan, Gregor, and
  Silver]{Schaul:2015:UVF:3045118.3045258}
Tom Schaul, Dan Horgan, Karol Gregor, and David Silver.
\newblock Universal value function approximators.
\newblock In \emph{Proceedings of the 32Nd International Conference on
  International Conference on Machine Learning - Volume 37}, ICML'15, pp.\
  1312--1320. JMLR.org, 2015.

\bibitem[Schmidhuber(1990)]{schmidhuber1990networks}
J{\"u}rgen Schmidhuber.
\newblock Networks adjusting networks.
\newblock In \emph{Proceedings of" Distributed Adaptive Neural Information
  Processing"}, pp.\  197--208, 1990.

\bibitem[Sehnke et~al.(2008)Sehnke, Osendorfer, R{\"u}ckstie{\ss}, Graves,
  Peters, and Schmidhuber]{sehnkepgpecontrol}
Frank Sehnke, Christian Osendorfer, Thomas R{\"u}ckstie{\ss}, Alex Graves, Jan
  Peters, and J{\"u}rgen Schmidhuber.
\newblock Policy gradients with parameter-based exploration for control.
\newblock In V{\'e}ra K{\r{u}}rkov{\'a}, Roman Neruda, and Jan Koutn{\'i}k
  (eds.), \emph{Artificial Neural Networks - ICANN 2008}, pp.\  387--396,
  Berlin, Heidelberg, 2008. Springer Berlin Heidelberg.
\newblock ISBN 978-3-540-87536-9.

\bibitem[Sehnke et~al.(2010)Sehnke, Osendorfer, Rückstieß, Graves, Peters,
  and Schmidhuber]{sehnke_parameterexploring_2010}
Frank Sehnke, Christian Osendorfer, Thomas Rückstieß, Alex Graves, Jan
  Peters, and Jürgen Schmidhuber.
\newblock Parameter-exploring policy gradients.
\newblock \emph{Neural Networks}, 23\penalty0 (4):\penalty0 551--559, May 2010.
\newblock ISSN 08936080.
\newblock \doi{10.1016/j.neunet.2009.12.004}.

\bibitem[Silver et~al.(2014)Silver, Lever, Heess, Degris, Wierstra, and
  Riedmiller]{Silver2014}
David Silver, Guy Lever, Nicolas Heess, Thomas Degris, Daan Wierstra, and
  Martin Riedmiller.
\newblock Deterministic policy gradient algorithms.
\newblock In \emph{Proceedings of the 31st International Conference on
  International Conference on Machine Learning - Volume 32}, ICML'14, pp.\
  I--387--I--395. JMLR.org, 2014.

\bibitem[Snoek et~al.(2012)Snoek, Larochelle, and Adams]{snoek2012practical}
Jasper Snoek, Hugo Larochelle, and Ryan~P Adams.
\newblock Practical bayesian optimization of machine learning algorithms.
\newblock In \emph{Advances in neural information processing systems}, pp.\
  2951--2959, 2012.

\bibitem[Snoek et~al.(2015)Snoek, Rippel, Swersky, Kiros, Satish, Sundaram,
  Patwary, Prabhat, and Adams]{snoekscalable}
Jasper Snoek, Oren Rippel, Kevin Swersky, Ryan Kiros, Nadathur Satish,
  Narayanan Sundaram, Md. Mostofa~Ali Patwary, Prabhat Prabhat, and Ryan~P.
  Adams.
\newblock Scalable bayesian optimization using deep neural networks.
\newblock In \emph{Proceedings of the 32nd International Conference on
  International Conference on Machine Learning - Volume 37}, ICML’15, pp.\
  2171–2180. JMLR.org, 2015.

\bibitem[Stratonovich(1960)]{stratonovich1960}
RL~Stratonovich.
\newblock Conditional {Markov} processes.
\newblock \emph{Theory of Probability And Its Applications}, 5\penalty0
  (2):\penalty0 156--178, 1960.

\bibitem[Sutton(1984)]{sutton1984temporal}
Richard~S Sutton.
\newblock \emph{Temporal Credit Assignment in Reinforcement Learning}.
\newblock PhD thesis, University of Massachusetts Amherst, 1984.

\bibitem[Sutton(1988)]{sutton1988learning}
Richard~S Sutton.
\newblock Learning to predict by the methods of temporal differences.
\newblock \emph{Machine learning}, 3\penalty0 (1):\penalty0 9--44, 1988.

\bibitem[Sutton et~al.(1999)Sutton, McAllester, Singh, and Mansour]{Sutton1999}
Richard~S. Sutton, David McAllester, Satinder Singh, and Yishay Mansour.
\newblock Policy gradient methods for reinforcement learning with function
  approximation.
\newblock In \emph{Proceedings of the 12th International Conference on Neural
  Information Processing Systems}, NIPS'99, pp.\  1057--1063, Cambridge, MA,
  USA, 1999. MIT Press.

\bibitem[Sutton et~al.(2009{\natexlab{a}})Sutton, Maei, and
  Szepesv{\'a}ri]{gtd}
Richard~S Sutton, Hamid~R Maei, and Csaba Szepesv{\'a}ri.
\newblock A convergent $ o (n) $ temporal-difference algorithm for off-policy
  learning with linear function approximation.
\newblock In \emph{Advances in neural information processing systems}, pp.\
  1609--1616, 2009{\natexlab{a}}.

\bibitem[Sutton et~al.(2009{\natexlab{b}})Sutton, Maei, Precup, Bhatnagar,
  Silver, Szepesv\'{a}ri, and Wiewiora]{fastgd}
Richard~S. Sutton, Hamid~Reza Maei, Doina Precup, Shalabh Bhatnagar, David
  Silver, Csaba Szepesv\'{a}ri, and Eric Wiewiora.
\newblock Fast gradient-descent methods for temporal-difference learning with
  linear function approximation.
\newblock In \emph{Proceedings of the 26th Annual International Conference on
  Machine Learning}, ICML ’09, pp.\  993–1000, New York, NY, USA,
  2009{\natexlab{b}}. Association for Computing Machinery.
\newblock ISBN 9781605585161.
\newblock \doi{10.1145/1553374.1553501}.

\bibitem[Sutton et~al.(2011)Sutton, Modayil, Delp, Degris, Pilarski, White, and
  Precup]{Sutton:2011:HSR:2031678.2031726}
Richard~S. Sutton, Joseph Modayil, Michael Delp, Thomas Degris, Patrick~M.
  Pilarski, Adam White, and Doina Precup.
\newblock Horde: A scalable real-time architecture for learning knowledge from
  unsupervised sensorimotor interaction.
\newblock In \emph{The 10th International Conference on Autonomous Agents and
  Multiagent Systems - Volume 2}, AAMAS '11, pp.\  761--768, Richland, SC,
  2011. International Foundation for Autonomous Agents and Multiagent Systems.
\newblock ISBN 0-9826571-6-1, 978-0-9826571-6-4.

\bibitem[Sutton et~al.(2016)Sutton, Mahmood, and White]{sutton2016emphatic}
Richard~S Sutton, A~Rupam Mahmood, and Martha White.
\newblock An emphatic approach to the problem of off-policy temporal-difference
  learning.
\newblock \emph{The Journal of Machine Learning Research}, 17\penalty0
  (1):\penalty0 2603--2631, 2016.

\bibitem[Tesauro(1995)]{tesauro1995temporal}
Gerald Tesauro.
\newblock Temporal difference learning and td-gammon.
\newblock \emph{Commun. ACM}, 38\penalty0 (3):\penalty0 58–68, March 1995.
\newblock ISSN 0001-0782.
\newblock \doi{10.1145/203330.203343}.

\bibitem[Unterthiner et~al.(2020)Unterthiner, Keysers, Gelly, Bousquet, and
  Tolstikhin]{unterthiner2020predicting}
Thomas Unterthiner, Daniel Keysers, Sylvain Gelly, Olivier Bousquet, and Ilya
  Tolstikhin.
\newblock Predicting neural network accuracy from weights.
\newblock \emph{arXiv preprint arXiv:2002.11448}, 2020.

\bibitem[Wang et~al.(2016)Wang, Bapst, Heess, Mnih, Munos, Kavukcuoglu, and
  de~Freitas]{wang2016sample}
Ziyu Wang, Victor Bapst, Nicolas Heess, Volodymyr Mnih, Remi Munos, Koray
  Kavukcuoglu, and Nando de~Freitas.
\newblock Sample efficient actor-critic with experience replay.
\newblock \emph{arXiv preprint arXiv:1611.01224}, 2016.

\bibitem[Werbos(1990)]{werbos1990backpropagation}
Paul~J Werbos.
\newblock Backpropagation through time: what it does and how to do it.
\newblock \emph{Proceedings of the IEEE}, 78\penalty0 (10):\penalty0
  1550--1560, 1990.

\bibitem[Wierstra et~al.(2014)Wierstra, Schaul, Glasmachers, Sun, Peters, and
  Schmidhuber]{wierstra2014natural}
Daan Wierstra, Tom Schaul, Tobias Glasmachers, Yi~Sun, Jan Peters, and
  J{\"u}rgen Schmidhuber.
\newblock Natural evolution strategies.
\newblock \emph{The Journal of Machine Learning Research}, 15\penalty0
  (1):\penalty0 949--980, 2014.

\end{thebibliography}
\bibliographystyle{iclr2021_conference}

\clearpage
\appendix
\section{Appendix}
\subsection*{Index of the appendix}
In the following, we briefly recap the contents of the appendix.
\begin{itemize}
	\item Appendix~\ref{apx:relatedWorks} contains additional related works
	\item Appendix~\ref{apx:proofs} reports all proofs and derivations.
	\item Appendix~\ref{apx:impl} illustrates implementation details and pseudocode.
	\item Appendix~\ref{apx:exp} provides the hyperparameters used in the experiments and further results.
\end{itemize}

\subsection{Additional related works}
\label{apx:relatedWorks}
Recent work~\citep{unterthiner2020predicting} shows how to map the weights of a trained Convolutional Neural Network to its accuracy. Experiments show how these predictions allow for performance rankings of neural networks on new unseen tasks. These maps are either learned by taking the flattened weights as input or using simple statistics. However, these predictions do not guide the training process of CNNs. \\
\newline
In 1990, adaptive critics trained by TD were used to predict the gradients of an RNN from its activations~\citep{schmidhuber1990networks}, avoiding backpropagation through time (BPTT)~\citep{werbos1990backpropagation}. This idea was later used to update the weights of a neural network asynchronously~\citep{jaderberg2017decoupled}. In our work, the critic is predicting errors instead of gradients. If applied to POMDPs, or supervised learning tasks involving long time lags between relevant events, the PSSVF could avoid BPTT by viewing the parameters of an RNN as a static object and mapping them to their loss (negative reward).\\
\newline
 Additional differences between our work and Policy Evaluation Networks (PENs)~\citep{harb2020policy} concern the optimization problem: we do not predict a bucket index for discretized reward, but perform a regression task. Therefore our loss is simply the mean squared error between the prediction of $V(\theta)$ and the reward obtained by $\pi_{\theta}$, while their loss~\citep{harb2020policy} is the KL divergence between the predicted and target distributions. Both approaches optimize the undiscounted objective when learning $V(\theta)$.\\
\newline

\subsection{Proofs and derivations}
\label{apx:proofs}
\onspavf*
\begin{proof}
The proof follows the standard approach by ~\citet{Sutton1999} and we report it for completeness. We start by deriving an expression for $\nabla_{\theta}V(s,\theta)$:


    
    \begin{align*}
        \nabla_{\theta}V(s,\theta) & = \nabla_{\theta} \int_{\mathcal{A}} \pi_{\theta}(a|s) Q(s,a,\theta) \de a = \int_{\mathcal{A}} \nabla_{\theta} \pi_{\theta}(a|s) Q(s,a,\theta) + \pi_{\theta}(a|s) \nabla_{\theta} Q(s,a,\theta)\de a\\
        & =  \int_{\mathcal{A}} \nabla_{\theta} \pi_{\theta}(a|s) Q(s,a,\theta) + \pi_{\theta}(a|s) \nabla_{\theta} \left( R(s,a) + \gamma \int_{\mathcal{S}} P(s'|s,a) V(s', \theta) \de s'  \right)\de a\\
        & = \int_{\mathcal{A}} \nabla_{\theta} \pi_{\theta}(a|s) Q(s,a,\theta) + \pi_{\theta}(a|s) \gamma \int_{\mathcal{S}} P(s'|s,a) \nabla_{\theta} V(s', \theta) \de s'  \de a\\
        & = \int_{\mathcal{A}} \nabla_{\theta} \pi_{\theta}(a|s) Q(s,a,\theta) +  \pi_{\theta}(a|s) \gamma \int_{\mathcal{S}} P(s'|s,a)  \times \\
        & \times \int_{\mathcal{A}} \nabla_{\theta} \pi_{\theta}(a'|s') Q(s',a',\theta) + \pi_{\theta}(a'|s') \gamma  \int_{\mathcal{S}} P(s^{''}|s',a') \nabla_{\theta} V(s^{''}, \theta) \de s^{''}  \de a' \de s' \de a\\
        = & \int_{\mathcal{S}} \sum_{t=0}^{\infty} \gamma^t P(s \rightarrow s', t, \pi_{\theta}) \int_{\mathcal{A}} \nabla_{\theta} \pi_{\theta}(a|s') Q(s',a,\theta) \de a \de s'.
    \end{align*}
    
Taking the expectation with respect to $s_0 \sim \mu_0(s)$ we have:
    \begin{align*}
        \nabla_{\theta} J(\theta) & = \nabla_{\theta} \int_{\mathcal{S}} \mu_0(s) V(s,\theta) \de s = \int_{\mathcal{S}} \mu_0(s) \nabla_{\theta} V(s,\theta) \de s\\
        & = \int_{\mathcal{S}} \mu_0(s) \int_{\mathcal{S}} \sum_{t=0}^{\infty} \gamma^t P(s \rightarrow s', t, \pi_{\theta}) \int_{\mathcal{A}} \nabla_{\theta} \pi_{\theta}(a|s) Q(s,a,\theta) \de s' \de a \de s \\
        & = \int_{\mathcal{S}} d^{\pi_{\theta}}(s) \int_{\mathcal{A}} \nabla_{\theta} \pi_{\theta}(a|s) Q(s,a,\theta) \de a \de s \\
        & = {\ev}_{s \sim d^{\pi_{\theta}}(s), a \sim \pi_{\theta}(.|s)}\left[\left(Q(s,a,\theta) \nabla_{\theta} \log\pi_{\theta}(a|s)\right)\right].
    \end{align*}


\end{proof}

\ondpavf*
\begin{proof}
    The proof follows the standard approach by ~\citet{Silver2014} and we report it for completeness. We start by deriving an expression for $\nabla_{\theta}V(s,\theta)$:
    \begin{align*}
         \nabla_{\theta}V(s,\theta) & = \nabla_{\theta}Q(s, \pi_{\theta}(s),\theta) = \nabla_{\theta} \left( R(s,\pi_{\theta}(s)) + \gamma \int_{\mathcal{S}} P(s'|s,\pi_{\theta}(s)) V(s', \theta) \de s'  \right)\\
         & = \nabla_{\theta} \pi_{\theta}(s) \nabla_{a} R(s,a)|_{a=\pi_{\theta}(s)} +  \\
         &  +\gamma \int_{\mathcal{S}} P(s'|s,\pi_{\theta}(s)) \nabla_{\theta} V(s', \theta) + \nabla_{\theta} \pi_{\theta}(s) \nabla_{a} P(s'|s,a)|_{a=\pi_{\theta}(s)} \de s' \\
         & = \nabla_{\theta} \pi_{\theta}(s) \nabla_{a} \left(R(s,a) + \gamma \int_{\mathcal{S}} P(s'|s,a) V(s', \theta) \de s' \right)|_{a=\pi_{\theta}(s)} + \\
         & +\gamma \int_{\mathcal{S}} P(s'|s,\pi_{\theta}(s)) \nabla_{\theta} V(s', \theta) \de s' \\
         & =  \nabla_{\theta} \pi_{\theta}(s) \nabla_{a}Q(s,a,\theta)|_{a=\pi_{\theta}(s)} + \gamma \int_{\mathcal{S}} P(s'|s,\pi_{\theta}(s)) \nabla_{\theta} V(s', \theta) \de s'\\
         & =  \nabla_{\theta} \pi_{\theta}(s) \nabla_{a}Q(s,a,\theta)|_{a=\pi_{\theta}(s)} + \\
         & +\gamma \int_{\mathcal{S}} P(s'|s,\pi_{\theta}(s)) \nabla_{\theta} \pi_{\theta}(s') \nabla_{a}Q(s',a,\theta)|_{a=\pi_{\theta}(s')} \de s' + \\
         & + \gamma \int_{\mathcal{S}} P(s'|s,\pi_{\theta}(s)) \gamma \int_{\mathcal{S}} P(s^{''}|s',\pi_{\theta}(s')) \nabla_{\theta} V(s^{''}, \theta) \de s^{''} \de s'\\
         & = \int_{\mathcal{S}} \sum_{t=0}^{\infty} \gamma^t P(s \rightarrow s', t, \pi_{\theta})\nabla_{\theta} \pi_{\theta}(s') \nabla_{a}Q(s',a,\theta)|_{a=\pi_{\theta}(s')}  \de s'
    \end{align*}

Taking the expectation with respect to $s_0 \sim \mu_0(s)$ we have:
    \begin{align*}
    \nabla_{\theta} J(\theta) & = \nabla_{\theta}\int_{\mathcal{S}} \mu_0(s) V(s,\theta) \de s =\int_{\mathcal{S}} \mu_0(s) \nabla_{\theta} V(s,\theta) \de s\\
    & =  \int_{\mathcal{S}} \mu_0(s)  \int_{\mathcal{S}} \sum_{t=0}^{\infty} \gamma^t P(s \rightarrow s', t, \pi_{\theta})\nabla_{\theta} \pi_{\theta}(s') \nabla_{a}Q(s',a,\theta)|_{a=\pi_{\theta}(s')}  \de s' \de s \\
    & = \int_{\mathcal{S}} d^{\pi_{\theta}}(s) \nabla_{\theta} \pi_{\theta}(s) \nabla_{a}Q(s,a,\theta)|_{a=\pi_{\theta}(s)} \de s \\
    & = {\ev}_{s \sim d^{\pi_{\theta}}(s)} \left[  \nabla_{\theta} \pi_{\theta}(s) \nabla_{a}Q(s,a,\theta)|_{a=\pi_{\theta}(s)}    \right]
    \end{align*}

\end{proof}

\spavf*
\begin{proof}
    \begin{align}
        \nabla_{\theta}J_b(\pi_{\theta}) & = \nabla_{\theta} \int_{\mathcal{S}} d_{\infty}^{\pi_{b}}(s) V(s,\theta)  \de s \\
        & = \nabla_{\theta} \int_{\mathcal{S}} d_{\infty}^{\pi_{b}}(s) \int_{\mathcal{A}}  \pi_{\theta}(a|s)  Q(s,a,\theta)  \de a \de s \\
        & =  \int_{\mathcal{S}} d_{\infty}^{\pi_{b}}(s) \int_{\mathcal{A}} [Q(s,a,\theta) \nabla_{\theta} \pi_{\theta}(a|s) +  \pi_{\theta}(a|s) \nabla_{\theta}Q(s,a,\theta)]  \de a \de s\\
        & = \int_{\mathcal{S}} d_{\infty}^{\pi_{b}}(s) \int_{\mathcal{A}} \frac{\pi_{b}(a|s)}{\pi_{b}(a|s)}\pi_{\theta}(a|s) [Q(s,a,\theta) \nabla_{\theta} \log\pi_{\theta}(a|s) +  \nabla_{\theta}Q(s,a,\theta)]  \de a \de s\\
        & = {\ev}_{s \sim d_{\infty}^{\pi_{b}}(s), a \sim \pi_{b}(.|s)}\left[\frac{\pi_{\theta}(a|s)}{\pi_b(a|s)} \left(Q(s,a,\theta) \nabla_{\theta} \log\pi_{\theta}(a|s) +  \nabla_{\theta}Q(s,a,\theta)\right)\right]
    \end{align}
\end{proof}

\dpavf*
\begin{proof}
    \begin{align}
        \nabla_{\theta}J_b(\pi_{\theta}) & = \int_{\mathcal{S}} d_{\infty}^{\pi_{b}}(s) \nabla_{\theta}   Q(s,\pi_{\theta}(s),\theta)  \de s \\
        & =  \int_{\mathcal{S}} d_{\infty}^{\pi_{b}}(s) \left[ \nabla_{a}   Q(s,a,\theta)|_{a=\pi_{\theta}(s)} \nabla_{\theta} \pi_{\theta}(s) + \nabla_{\theta} Q(s,a,\theta)|_{a=\pi_{\theta}(s)} \right]  \de s \\
        & = {\ev}_{s \sim d_{\infty}^{\pi_{b}}(s)} \left[   \nabla_{a}   Q(s,a,\theta)|_{a=\pi_{\theta}(s)} \nabla_{\theta} \pi_{\theta}(s) + \nabla_{\theta} Q(s,a,\theta)|_{a=\pi_{\theta}(s)}   \right]
    \end{align}
\end{proof}

\subsection{Implementation details}
\label{apx:impl}
\subsubsection{}
In this appendix, we report the implementation details for PSSVF, PSVF, PAVF and the baselines. We specify for each hyperparameter, which algorithms and tasks are sharing them.\\
\newline
Shared hyperparameters:
\begin{itemize}
    \item Deterministic policy architecture (continuous control tasks): We use three different deterministic policies: a linear mapping between states and actions; a single-layer MLP with 32 neurons and tanh activation; a 2-layers MLP (64,64) with tanh activations. All policies contain a bias term and are followed by a tanh nonlinearity in order to bound the action. 
    \item Deterministic policy architecture (discrete control tasks): We use three different deterministic policies: a linear mapping between states and a probability distribution over actions; a single-layer MLP with 32 neurons and tanh activation; a 2-layers MLP (64,64) with tanh activations. The deterministic action $a$ is obtained choosing $a = \argmax \pi_{\theta}(a|s)$. All policies contain a bias term.
    \item Stochastic policy architecture (continuous control tasks): We use three different stochastic policies: a linear mapping; a single-layer MLP with 32 neurons and tanh activation; a 2-layers MLP (64,64) with tanh activations all mapping from states to the mean of a Normal distribution. The variance is state-independent and parametrized as $e^{2\Omega}$ with diagonal $\Omega$. All policies contain a bias term. Actions sampled are given as input to a tanh nonlinearity in order to bound them in the action space. 
    \item Stochastic policy architecture (discrete control tasks): We use three different deterministic policies: a linear mapping between states and a probability distribution over actions; a single-layer MLP with 32 neurons and tanh activation; a 2-layers MLP (64,64) with tanh activations. All policies contain a bias term.
    \item Policy initialization: all weights and biases are initialized using the default Pytorch initialization for PBVFs and DDPG and are set to zero for ARS.
    \item Critic architecture: 2-layers MLP (512,512) with bias and ReLU activation functions for PSVF, PAVF; 2-layers MLP (256,256) with bias and ReLU activation functions for DDPG.
    \item Critic initialization: all weights and biases are initialized using the default Pytorch initialization for PBVFs and DDPG.
    \item Batch size: 128 for DDPG, PSVF, PAVF; 16 for PSSVF.
    \item Actor's frequency of updates: every episode for PSSVF; every batch of episodes for ARS; every 50 time steps for DDPG, PSVF, PAVF.
    \item Critic's frequency of updates: every episode for PSSVF; every 50 time steps for DDPG, PSVF, PAVF.
    \item Replay buffer: the size is 100k; data are sampled uniformly.
    \item Optimizer: Adam for PBVFs and DDPG.
\end{itemize}
Tuned hyperparameters:
\begin{itemize}
    \item Number of directions and elite directions for ARS ([directions, elite directions]): tuned with values in $[[1,1], [4,1], [4,4], [16,1], [16,4], [16,16]]$.
    \item Policy's learning rate: tuned with values in $[1e-2, 1e-3, 1e-4]$.
    \item Critic's learning rate: tuned with values in $[1e-2, 1e-3, 1e-4]$.
    \item Noise for exploration: the perturbation for the action (DDPG) or the parameter is sampled from $\mathcal{N}(0, \sigma I)$ with $\sigma$ tuned with values in $[1, 1e-1]$ for PSSVF, PSVF, PAVF; $[1e-1, 1e-2]$ for DDPG; $[1, 1e-1, 1e-2, 1e-3]$ for ARS. For stochastic PSSVF and PSVF we include also the value $\sigma=0$, although it almost never results optimal.
    
    \end{itemize}
Environment hyperparameters:
\begin{itemize}
    \item Environment interactions: 1M time steps for Swimmer-v3 and Hopper-v3; 100k time steps for all other environments.
    \item Discount factor for TD algorithms: 0.999 for Swimmer; 0.99 for all other environments.
    \item Survival reward in Hopper: True for DDPG, PSVF, PAVF; False for ARS, PSSVF.
\end{itemize}

Algorithm-specific hyperparameters:
\begin{itemize}
    \item Critic's number of updates: 50 for DDPG, 5 for PSVF and PAVF; 10 for PSSVF.
    \item Actor's number of updates: 50 for DDPG, 1 for PSVF and PAVF; 10 for PSSVF.
    \item Observation normalization: False for DDPG; True for all other algorithms.
    \item Starting steps in DDPG (random actions and no training): first $1\%$.
    \item Polyak parameter in DDPG: 0.995.
    
\end{itemize}

\paragraph{PAVF $\nabla_\theta Q(s,a,\theta)$ ablation}
We investigate the effect of the term $\nabla_\theta Q(s,a,\theta)$ in the off-policy policy gradient theorem for deterministic PAVF. We follow the same methodology as in our main experiments to find the optimal hyperparameters when updating using the now biased gradient:
	\begin{equation}
	    \nabla_{\theta} J_b(\pi_{\theta}) \approx {\ev}_{s \sim d_{\infty}^{\pi_{b}}(s)} \left[   \nabla_{a} Q(s,a,\theta)|_{a=\pi_{\theta}(s)} \nabla_{\theta} \pi_{\theta}(s) \right],
	\end{equation}
which corresponds to the gradient that DDPG is following. Figure~\ref{fig:ablation_pavf} reports the results for Hopper and Swimmer using shallow and deep policies. We observe a significant drop in performance in Swimmer when removing part of the gradient. In Hopper the loss of performance is less significant, possibly because both algorithms tend to converge to the same sub-optimal behavior.

\begin{figure}[h]
\begin{center}
\centering
\includegraphics[width=1.0\linewidth]{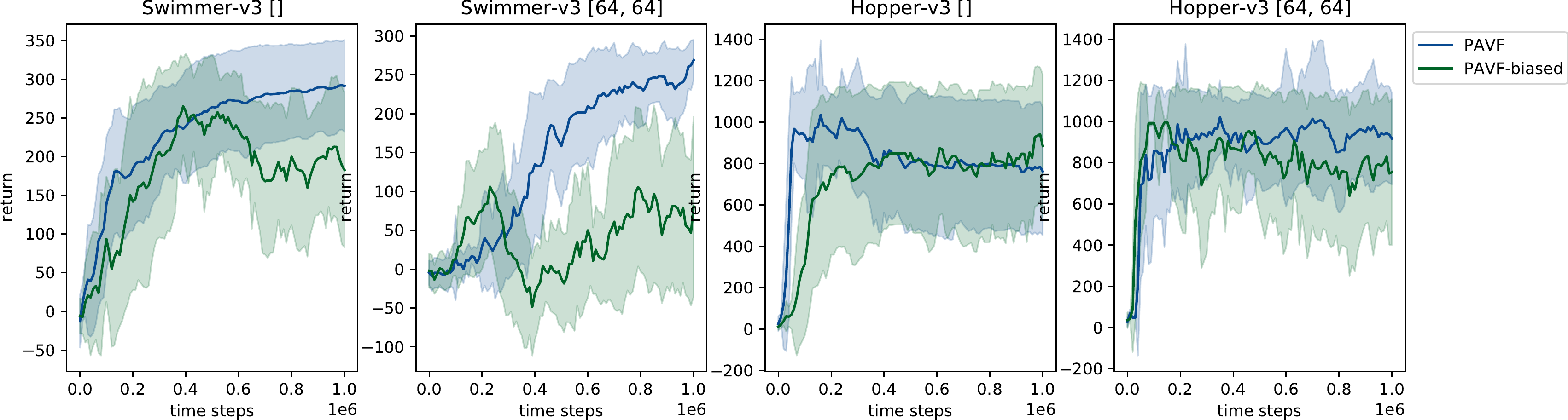}
\end{center}
\caption{Performance of PAVF and biased PAVF (PAVF without the gradient of the action-value function with respect to the policy parameters) using deterministic policies. We use the hyperparameters maximizing average return and report the best hyperparameters found for the biased version in Table~\ref{tab:hyp_bpavf}. Learning curves are averaged over 20 seeds.}

\label{fig:ablation_pavf}
\end{figure}

\begin{table}[!h]
  \caption{Table of best hyperparameters for biased PAVFs}
  \begin{tabular}{lrcc}
    \hline
    \textbf{Learning rate policy}  & Policy: &
     [] & [64,64] \\
    &Metric: &avg  &avg    \\
    \hline
    Swimmer-v3 & &1e-3 & 1e-4 \\
    Hopper-v3 & &1e-4 & 1e-4 \\
    \hline
    \textbf{Learning rate critic} & & &\\
    \hline
    Swimmer-v3 & &1e-4 & 1e-4 \\
    Hopper-v3 & &1e-3 & 1e-3 \\
    \hline
    \textbf{Noise for exploration} & & &\\
    \hline
    Swimmer-v3 & & 1.0 & 1.0\\
    Hopper-v3 & & 0.1 & 0.1 \\
    \hline
  \end{tabular}
    \label{tab:hyp_bpavf}

\end{table}

\paragraph{ARS}
For ARS, we used the official implementation provided by the authors and we modified it in order to use nonlinear policies. More precisely, we used the implementation of ARSv2-t~\citep{mania2018simple}, which uses observation normalization, elite directions and an adaptive learning rate based on the standard deviation of the return collected. To avoid divisions by zero, which may happen if all data sampled have the same return, we perform the standardization only in case the standard deviation is not zero. In the original implementation of ARS~\citep{mania2018simple}, the survival bonus for the reward in the Hopper environment is removed to avoid local minima. Since we wanted our PSSVF to be close to their setting, we also applied this modification. We did not remove the survival bonus from all TD algorithms and we did not investigate how this could affect their performance. We provide a comparison of the performance of PSSVF with and without the bonus in figure~\ref{fig:ablation_pvf} using deterministic policies.
\begin{figure}[h]
\begin{center}
\centering
\includegraphics[width=0.6\linewidth]{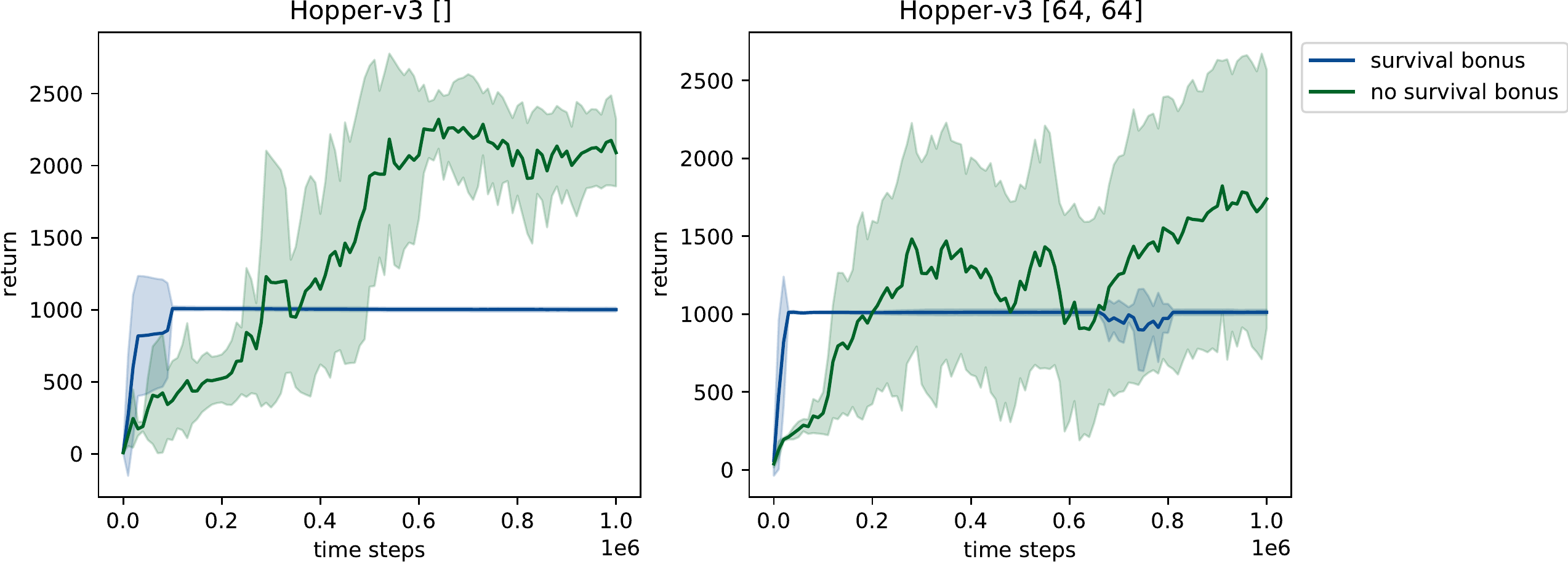}
\end{center}
\caption{Performance of PSSVF with and without the survival bonus for the reward in Hopper-v3 when using the hyperparameters maximizing the average return. Learning curves are averaged over 5 seeds.}

\label{fig:ablation_pvf}
\end{figure}
\paragraph{DDPG}
For DDPG, we used the Spinning Up implementation provided by OpenAI~\citep{SpinningUp2018}, which includes target networks for the actor and the critic and no learning for a fixed set of time steps, called starting steps. We did not include target networks and starting steps in our PBVFs, although they could potentially help stabilizing training. The implementation of DDPG that we used~\citep{SpinningUp2018} does not use observation normalization. In preliminary experiments we observed that it failed to significantly increase or decrease performance, hence we did not use it. Another difference between our TD algorithms and DDPG consists in the number of updates of the actor and the critic. Since DDPG's critic needs to keep track of the current policy, the critic and the actor are updated in a nested form, with the first's update depending on the latter and vice versa. Our PSVF and PAVF do not need to track the policy learned, hence, when it is time to update, we need only to train once the critic for many gradient steps and then train the actor for many gradient steps. This requires less compute. On the other hand, when using nonlinear policies, our PBVFs suffer the curse of dimensionality. For this reason, we profited from using a bigger critic. In preliminary experiments, we observed that DDPG’s performance did not change significantly through a bigger critic. We show differences in performance for our methods when removing observation normalization and when using a smaller critic (MLP(256,256)) in figure~\ref{fig:ablation_psvf}. We observe that the performance is decreasing if observation normalization is removed. However, only for shallow policies in Swimmer and deep policies in Hopper there seems to be a significant benefit. Future work will assess when bigger critics help.
\begin{figure}[h]
\begin{subfigure}[c]{1.0\textwidth}
  \centering
  \includegraphics[width=0.9\linewidth]{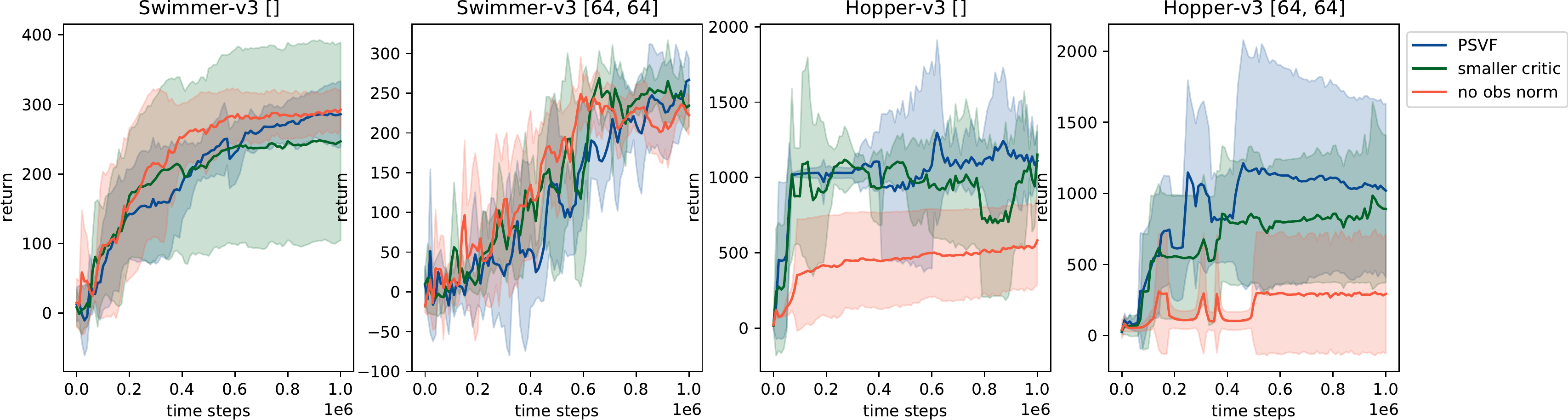}
  \vspace{0.3cm}
    \centering
\end{subfigure}%
\hfill
\begin{subfigure}[c]{1.0\textwidth}
	\vspace{0.2cm}
  \centering
  \includegraphics[width=0.9\linewidth]{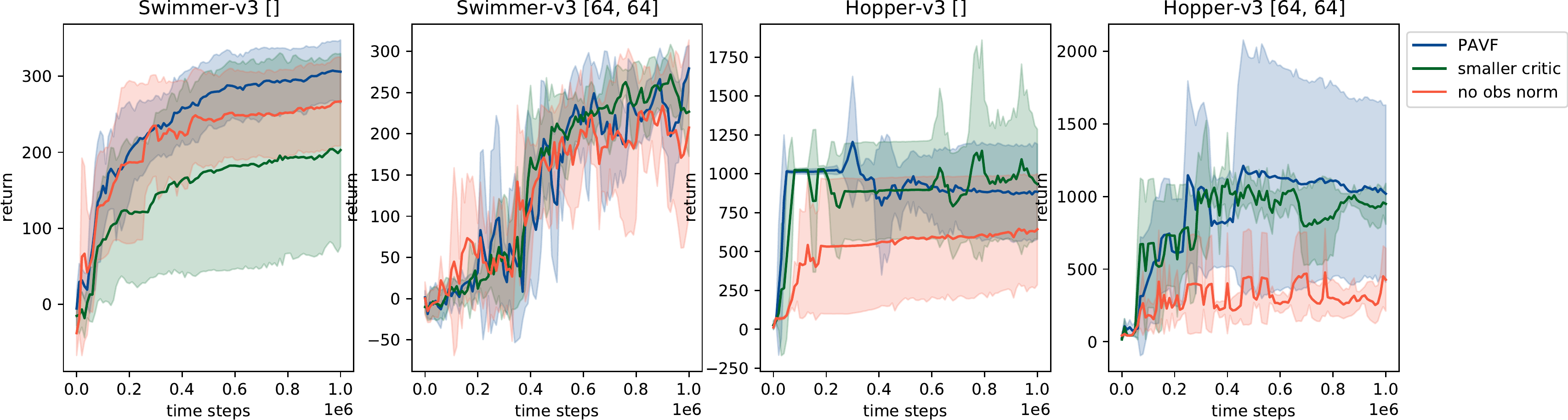}
    \centering
\end{subfigure}%
\caption{Learning curves for PSVF and PAVF for different environments and policies removing observation normalization and using a smaller critic. We use the hyperparameters maximizing the average return. Learning curves are averaged over 5 seeds. For this ablation we use deterministic policies.}
\label{fig:ablation_psvf}
\end{figure}

\paragraph{Discounting in Swimmer} For TD algorithms, we chose a fixed discount factor $\gamma=0.99$ for all environments but Swimmer-v3. This environment is known to be challenging for TD based algorithms because discounting causes the agents to become too short-sighted. We observed that, with the standard discounting, DDPG, PSVF and PAVF were not able to learn the task. However, making the algorithms more far-sighted greatly improved their performance. In figure~\ref{fig:ablation_discount} we report the return obtained by DDPG, PSVF and PAVF for different values of the discount factor in Swimmer when using deterministic policies.
\begin{figure}[h]
\begin{center}
\centering
\includegraphics[width=0.9\linewidth]{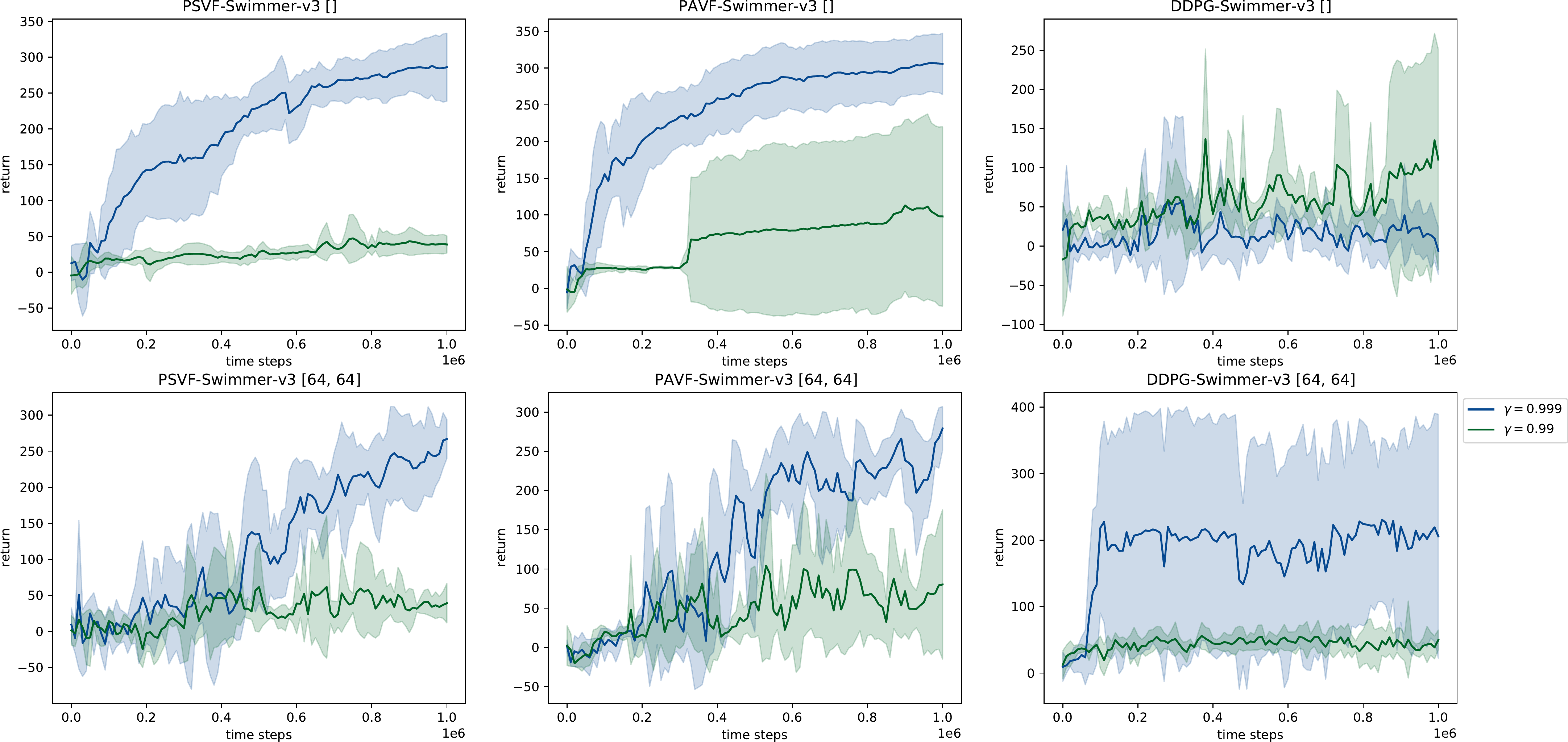}
\end{center}
\caption{Effect of different choices of the discount factor in Swimmer-v3 for PSVF, PAVF and DDPG, with shallow and deep deterministic policies. We use the hyperparameters maximizing the average return. Learning curves are averaged over 5 seeds}
\label{fig:ablation_discount}
\end{figure}

\subsubsection{Pseudocode}

\begin{algorithm}[H]
  \caption{Actor-critic with TD prediction for $V(s, \theta)$}
  \label{alg:psvf}
    \hspace*{\algorithmicindent} \textbf{Input}: Differentiable critic $V_{\textbf{w}}: \mathcal{S} \times \Theta \rightarrow \mathcal{R}$ with parameters $\textbf{w}$; deterministic or stochastic actor $\pi_{\theta}$ with parameters $\theta$; empty replay buffer $D$ \\
    \hspace*{\algorithmicindent} \textbf{Output} : Learned $V_{\textbf{w}} \approx V(s, \theta)$, learned $\pi_{\theta} \approx \pi_{\theta^*}$
    \begin{algorithmic}
    \State Initialize critic and actor weights $\textbf{w}, \theta$
	\Repeat:
	    \State Observe state s, take action $a=\pi_{\theta}(s)$, observe reward $r$ and next state $s'$
	    \State Store $(s, \theta, r, s')$ in the replay buffer $D$
        \If {it's time to update}:	
    		\For {many steps}:
    		    \State Sample a batch $B_1 = \{(s, \Tilde{\theta}, r, s')\}$ from $D$
    		    \State Update critic by stochastic gradient descent:
    		    \State $\nabla_{\textbf{w}} \frac{1}{|B_1|} \ev_{(s,  \Tilde{\theta}, r, s') \in B_1} [V_{\textbf{w}}(s, \Tilde{\theta}) - (r + \gamma V_{\textbf{w}}(s', \Tilde{\theta}))]^2$
    		\EndFor
    		\For {many steps}:
                \State Sample a batch $B_2 = \{(s)\}$ from $D$
                \State Update actor by stochastic gradient ascent:  $\nabla_{\theta} \frac{1}{|B_2|} \ev_{s \in B_2}[ V_{\textbf{w}}(s, \theta)]$
	        \EndFor
	   \EndIf
    \Until{convergence}
    \end{algorithmic}
\end{algorithm}

\begin{algorithm}[H]
  \caption{Stochastic actor-critic with TD prediction for $Q(s,a,\theta)$}
  \label{alg:spavf}
    \hspace*{\algorithmicindent} \textbf{Input}: Differentiable critic $Q_{\textbf{w}}: \mathcal{S} \times \mathcal{A} \times \Theta \rightarrow \mathcal{R}$ with parameters $\textbf{w}$; stochastic differentiable actor $\pi_{\theta}$ with parameters $\theta$; empty replay buffer $D$ \\
    \hspace*{\algorithmicindent} \textbf{Output} : Learned $Q_{\textbf{w}} \approx Q(s, a, \theta)$, learned $\pi_{\theta} \approx \pi_{\theta^*}$
    \begin{algorithmic}
    \State Initialize critic and actor weights $\textbf{w}, \theta$
	\Repeat:
	    \State Observe state s, take action $a=\pi_{\theta}(s)$, observe reward $r$ and next state $s'$
	    \State Store $(s, a, \theta, r, s')$ in the replay buffer $D$
        \If {it's time to update}:	
    		\For {many steps}:
    		    \State Sample a batch $B_1 = \{(s, a, \Tilde{\theta}, r, s')\}$ from $D$
    		    \State Update critic by stochastic gradient descent:
    		    \State $\nabla_{\textbf{w}} \frac{1}{|B_1|} \ev_{(s, a, \Tilde{\theta}, r, s') \in B_1} [Q_{\textbf{w}}(s, a, \Tilde{\theta}) - (r + \gamma Q_{\textbf{w}}(s',a' \sim \pi_{\Tilde{\theta}}(s'),\Tilde{\theta}))]^2$

    		\EndFor
    		\For {many steps}:
    		
    		    \State Sample a batch $B_2 = \{(s, a, \Tilde{\theta})\}$ from $D$

    		    \State Update actor by stochastic gradient ascent:
    		    \State $ \frac{1}{|B_2|} \ev_{(s,a, \Tilde{\theta}) \in B_2} \left[ \frac{\pi_{\theta}(a|s)}{\pi_{\Tilde{\theta}}(a|s)} \left(Q(s,a,\theta) \nabla_{\theta} \log\pi_{\theta}(a|s) +  \nabla_{\theta}Q(s,a,\theta)\right)\right]$
    		\EndFor
	    \EndIf
	\Until{convergence}
    \end{algorithmic}
  \end{algorithm}

\begin{algorithm}[H]
  \caption{Deterministic actor-critic with TD prediction for $Q(s,a,\theta)$}
  \label{alg:dpavf}
   \hspace*{\algorithmicindent} \textbf{Input}: Differentiable critic $Q_{\textbf{w}}: \mathcal{S} \times \mathcal{A} \times \Theta \rightarrow \mathcal{R}$ with parameters $\textbf{w}$; differentiable deterministic actor $\pi_{\theta}$ with parameters $\theta$; empty replay buffer $D$ \\
   \hspace*{\algorithmicindent} \textbf{Output} : Learned $Q_{\textbf{w}} \approx Q(s, a, \theta)$, learned $\pi_{\theta} \approx \pi_{\theta^*}$
    \begin{algorithmic}
    \State Initialize critic and actor weights $\textbf{w}, \theta$
	\Repeat:
	    \State Observe state s, take action $a=\pi_{\theta}(s)$, observe reward $r$ and next state $s'$
	    \State Store $(s, a, \theta, r, s')$ in the replay buffer $D$
        \If {it's time to update}:	
    		\For {many steps}:
    		    \State Sample a batch $B_1 = \{(s, a, \Tilde{\theta}, r, s')\}$ from $D$
    		    \State Update critic by stochastic gradient descent:
    		    \State $\nabla_{\textbf{w}} \frac{1}{|B_1|} \ev_{(s, a, \Tilde{\theta}, r, s') \in B_1} [Q_{\textbf{w}}(s, a, \Tilde{\theta}) - (r + \gamma Q_{\textbf{w}}(s',\pi_{\Tilde{\theta}}(s'),\Tilde{\theta}))]^2$

    		\EndFor
    		\For {many steps}:
    		    \State Sample a batch $B_2 = \{(s)\}$ from $D$

    		    \State Update actor by stochastic gradient ascent:
    		    \State $ \frac{1}{|B_2|} \ev_{s \in B_2}[  \nabla_{\theta} \pi_{\theta}(s) \nabla_a Q_{\textbf{w}}(s,a,\theta)|_{a=\pi_{\theta}(s)} + \nabla_{\theta} Q_{\textbf{w}}(s, a, \theta)|_{a=\pi_{\theta}(s)}]$    		
    		\EndFor    		
    		
	    \EndIf
	    \Until{convergence}
    \end{algorithmic}
  \end{algorithm}


\subsection{Experimental details}
\label{apx:exp}
\subsubsection{LQR}
\label{experiments_detail:lqr}
For our visualization experiment, we employ an instance of the Linear Quadratic Regulator. Here, the agent observes a 1-D state, corresponding to its position and chooses a 1-D action. The transitions are $s' = s + a$ and there is a quadratic negative term for the reward: $R(s,a) = -s^2 - a^2$. The agent starts in state $s_0=1$ and acts in the environment for 50 time steps. The state space is bounded in [-2,2]. The goal of the agent is to reach and remain in the origin. The agent is expected to perform small steps towards the origin when it uses the optimal policy. For this task, we use a deterministic policy without tanh nonlinearity and we do not use observation normalization. Below additional details and plots for different algorithms.

\paragraph{PSSVF} We use a learning rate of $1e-3$ for the policy and $1e-2$ for the PSSVF. Weights are perturbed every episode using $\sigma = 0.5$. The policy is initialized with weight $3.2$ and bias $-3.5$. All the other hyperparameters are set to their default. The true episodic $J(\theta)$ is computed by running 10,000 policies in the environment with parameters in $[-5,5] \times [-5,5]$. $V_w(\theta)$ is computed by measuring the output of the PSSVF on the same set of policies. Each red arrow in figure~\ref{fig:lqr_pvf} represents 200 update steps of the policy.

\paragraph{PSVF ad PAVF} Using the exact same setting, we run PSVF and PAVF in LQR environment and we compare learned $V(s_{0}, \theta)$ and $Q(s_0, \pi_{\theta}(s_0), \theta)$ with the true PSVF and PAVF over the parameter space. Computing the value of the true PSVF and PAVF requires computing the infinite sum of discounted reward obtained by the policy. Here we approximate it by running 10,000 policies in the environment with parameters in $[-5,-5] \times [-5,5]$ for 500 time steps. This, setting $\gamma = 0.99$, provides a good approximation of their true values, since further steps in the environment result in almost zero discounted reward from $s_0$. We use a learning rate of $1e-2$ for the policy and $1e-1$ for the PSVF and PAVF. Weights are perturbed every episode using $\sigma = 0.5$. The policy is updated every 10 time steps using 2 gradient steps; the PSVF and PAVF are updated every 10 time steps using 10 gradient updates. The critic is a 1-layer MLP with 64 neurons and tanh nonlinearity.\\
\newline
In Figures~\ref{fig:lqr_psvf} and~\ref{fig:lqr_pavf} we report $J(\theta)$, the cumulative discounted reward that an agent would obtain by acting in the environment for infinite time steps using policy $\pi_{\theta}$ and the cumulative return predicted by the PSVF and PAVF for two different times during learning. Like in the PSSVF experiment, the critic is able improve its predictions over the parameter space. Since in the plots $V(s,\theta)$ and $Q(s, \pi_{\theta}(s),\theta)$ are evaluated only in $s_0$, the results show that PBVFs are able to effectively bootstrap the values of future states. Each red arrow in Figures~\ref{fig:lqr_psvf} and~\ref{fig:lqr_pavf} represents 50 update steps of the policy.

\begin{figure}[h]
\begin{center}
\centering
Optimization after 15 episodes
\includegraphics[width=1.0\linewidth]{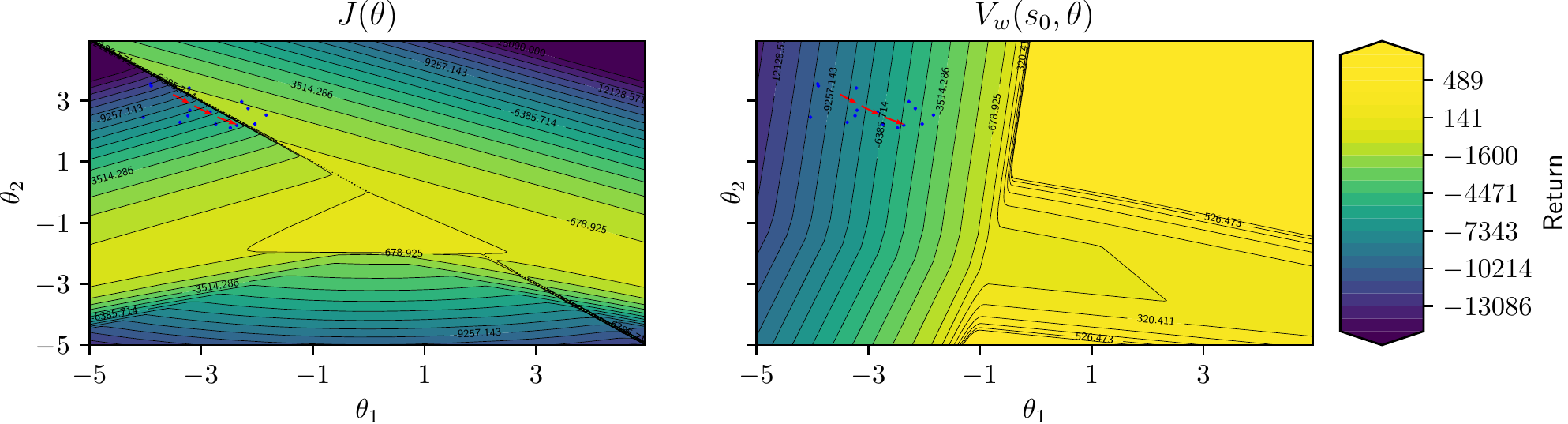} \\
\centering
Optimization after 100 episodes
\includegraphics[width=1.0\linewidth]{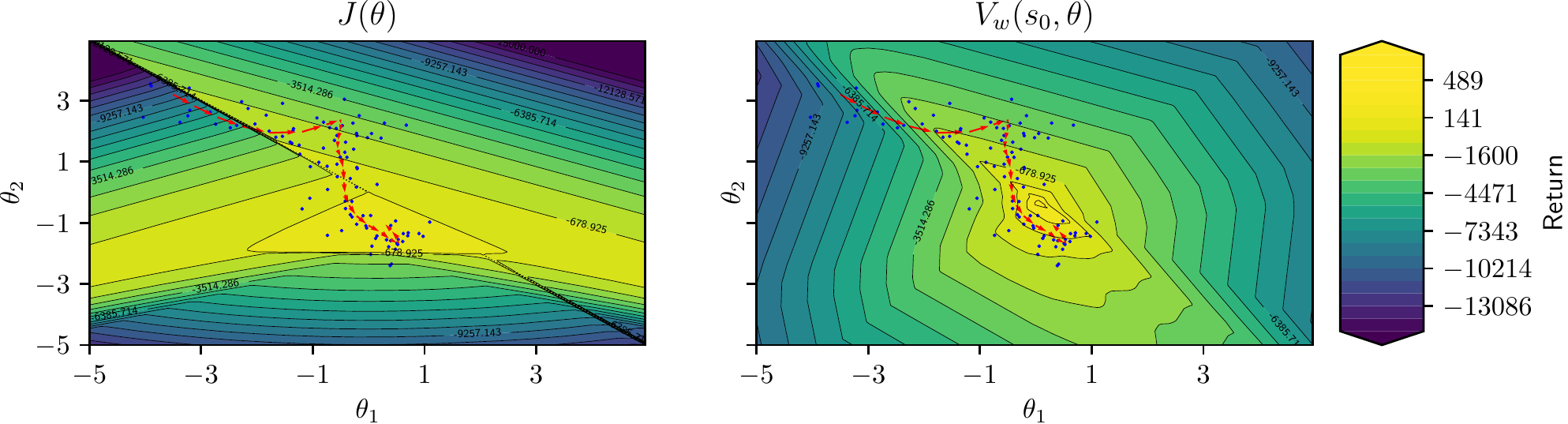}
\end{center}
\caption{True cumulative discounted reward $J(\theta)$ and PSVF estimation $V_w(s_0, \theta)$ as a function of the policy parameters at two different stages in training. The red arrows represent an optimization trajectory in  parameter space. The blue dots represent the perturbed policies used to train $V_w(s_0, \theta)$.}
\label{fig:lqr_psvf}
\end{figure}

\begin{figure}[h]
\begin{center}
\centering
Optimization after 15 episodes
\includegraphics[width=1.0\linewidth]{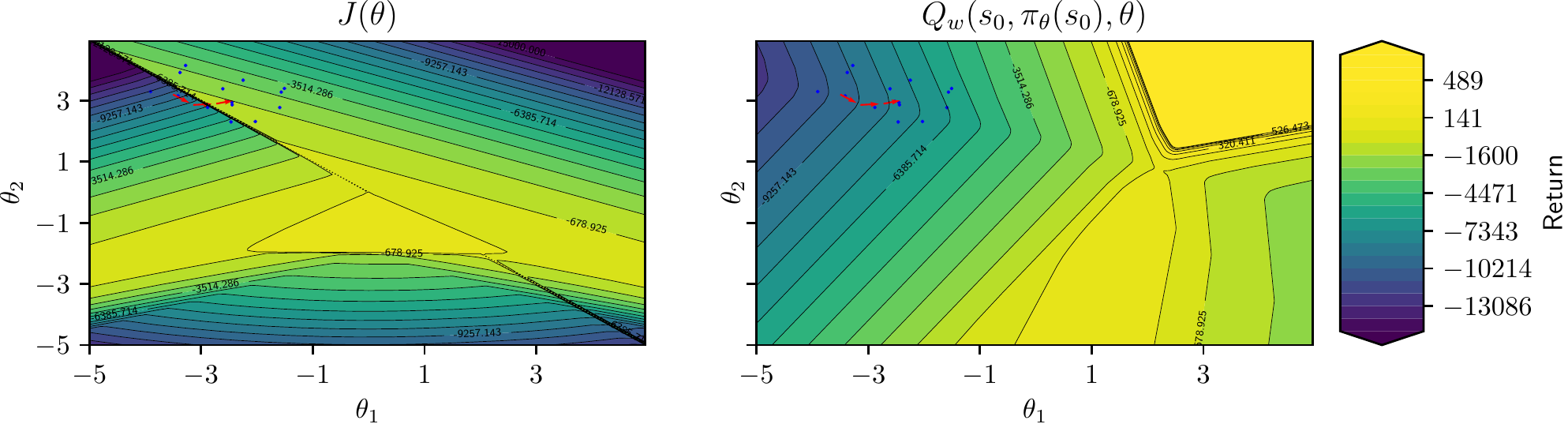} \\
\centering
Optimization after 100 episodes
\includegraphics[width=1.0\linewidth]{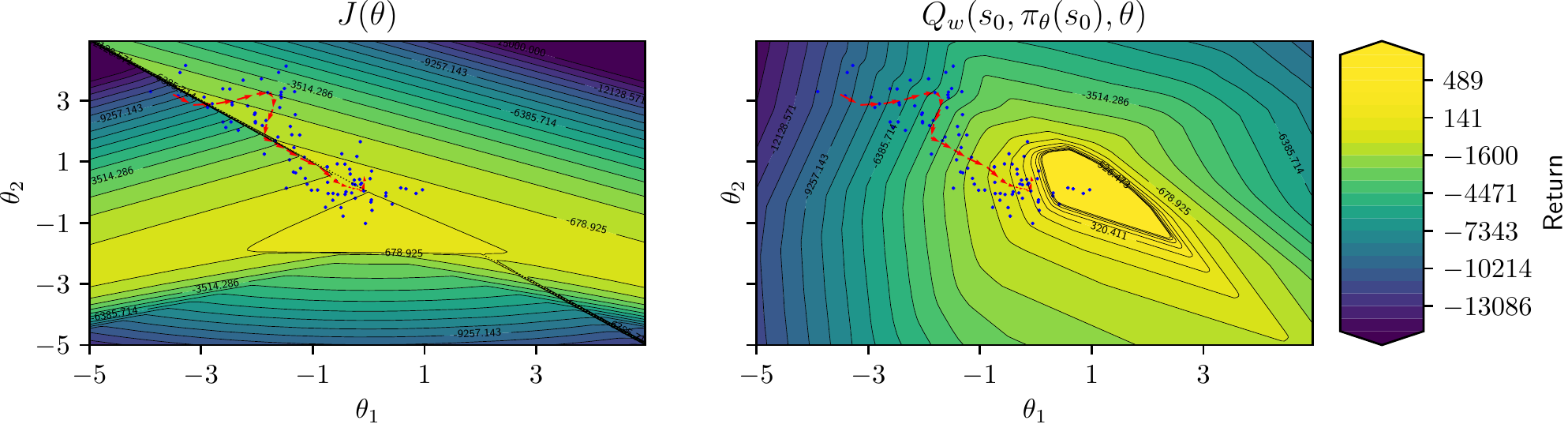}
\end{center}
\caption{True cumulative discounted reward $J(\theta)$ and PAVF estimation $Q_w(s_0, \pi_{\theta}(s_0),\theta)$ as a function of the policy parameters at two different stages in training. The red arrows represent an optimization trajectory in  parameter space. The blue dots represent the perturbed policies used to train $Q_w(s_0, \pi_{\theta}(s_0),\theta)$.}
\label{fig:lqr_pavf}
\end{figure}

\subsubsection{Offline experiments}
\label{experiments_detail:zero_shot}

\paragraph{Zero-shot learning}
We evaluate the performance of the policies learned from scratch evaluating them with 5 test trajectories every 5 gradient steps. In addition to the results in the main paper, we report in Figures~\ref{fig:zero_shot_linear} and~\ref{fig:zero_shot_deep} a comparison of zero-shot performance between PSSVF, PSVF and PAVF in three different environments using deterministic shallow and deep policies (2-layers MLP(64,64)). In this task we use the same hyperparameters found in tables~\ref{tab:hyp_pvf},~\ref{tab:hyp_psvf} and~\ref{tab:hyp_pavf}. One additional hyperparameter needs to be considered: the learning rate of the policies trained from scratch. In Figure~\ref{fig:scratch} of the main paper, we use a tuned learning rate of 0.02 that we found working particularly well for PSSVF in the Swimmer environment. In the additional experiments in Figures~\ref{fig:zero_shot_linear} and~\ref{fig:zero_shot_deep}, we use a learning rate of $0.05$ that we found working well across all policies, environments and algorithms when learning zero-shot.\\
\newline
We observe that, using shallow policies, PBVFs can effectively zero-shot learn policies with performance comparable to the policy learned in the environment without additional tuning for the learning rate. We note the regular presence of a spike in performance followed by a decline due to the policy going to regions of the parameter space never observed. This suggests that there is a trade-off between exploiting the generalization of the critic and remaining in the part of the parameter space where the critic is accurate. Measuring the width of these spikes can be useful for determining the number of offline gradient steps to perform in the general algorithm. When using deep policies the results become much worse and zero-shot learned policies can recover the performance of the main policy being learned only in simple environments and at beginning of training (eg. MountainCarContinuous). We observe that, when the critic is trained (last column), the replay buffer contains policies that are very distant to policies randomly initialized. This might explain why the zero-shot performance is better sometimes at the beginning of training (eg. second column). However, since PBVFs in practice perform mostly local off-policy evaluation around the learned policy, this problem is less prone to arise in our main experiments.

\begin{figure}[h]
\footnotesize
\begin{subfigure}[l]{.33\textwidth}
\vspace{0.0cm}
  \centering
  \includegraphics[scale=0.30]{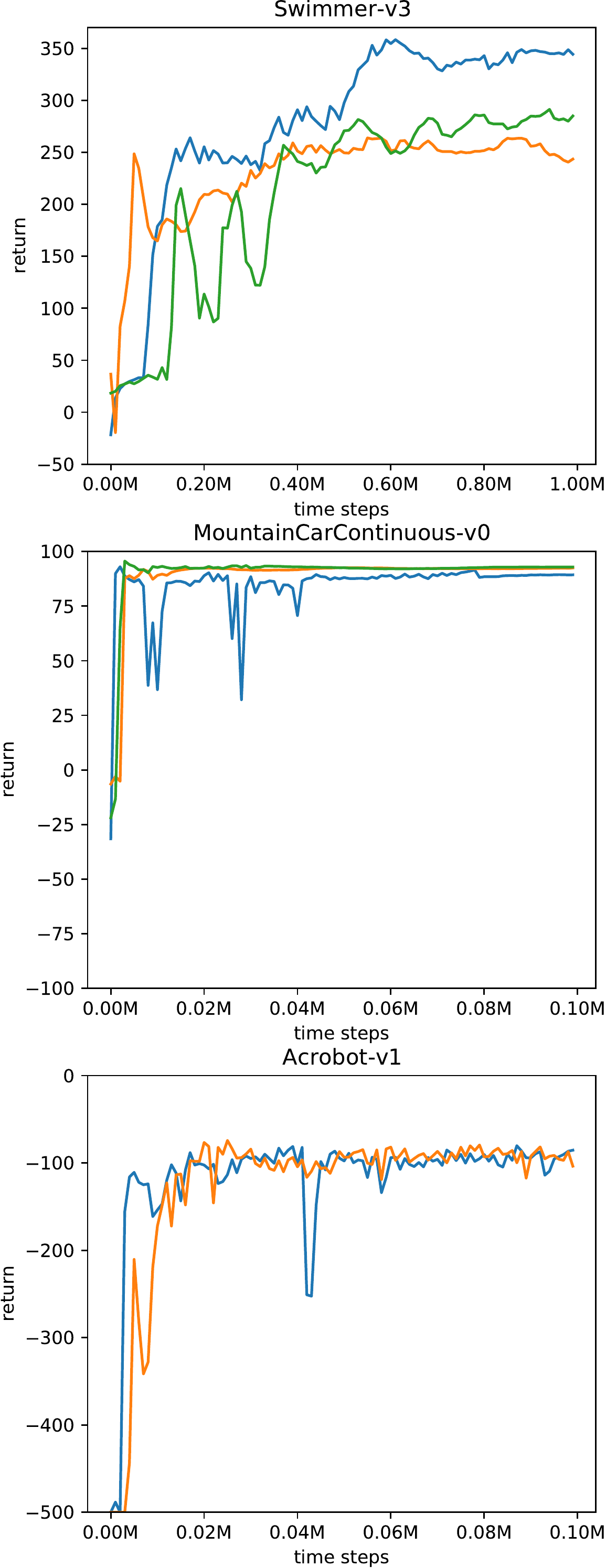}
\end{subfigure}
\begin{subfigure}[l]{.1\textwidth}
	\vspace{0.0cm}
  \includegraphics[scale=0.24]{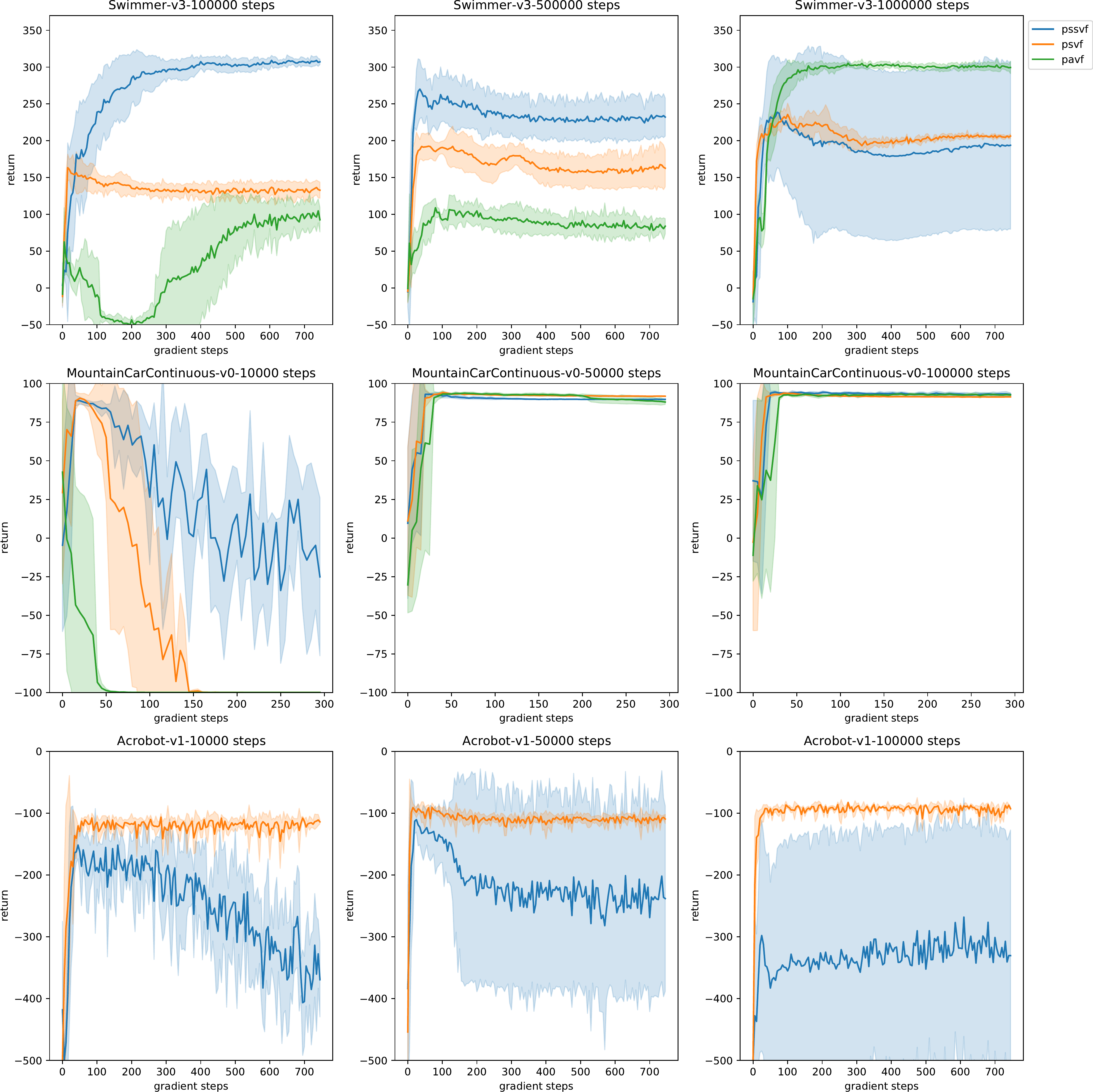}
\end{subfigure}%
\caption{Shallow policies learned from scratch during training. The plots in the left column represent the return of agents learning while interacting with the environment using different algorithms. The learning curves in the other plots represent the return obtained by policies trained from scratch following the fixed critics after different time steps of training. Zero-shot learning curves are averaged over 5 seeds.}
\label{fig:zero_shot_linear}
\end{figure}

\begin{figure}[h]
\footnotesize
\begin{subfigure}[l]{.33\textwidth}
\vspace{0.0cm}
  \centering
  \includegraphics[scale=0.30]{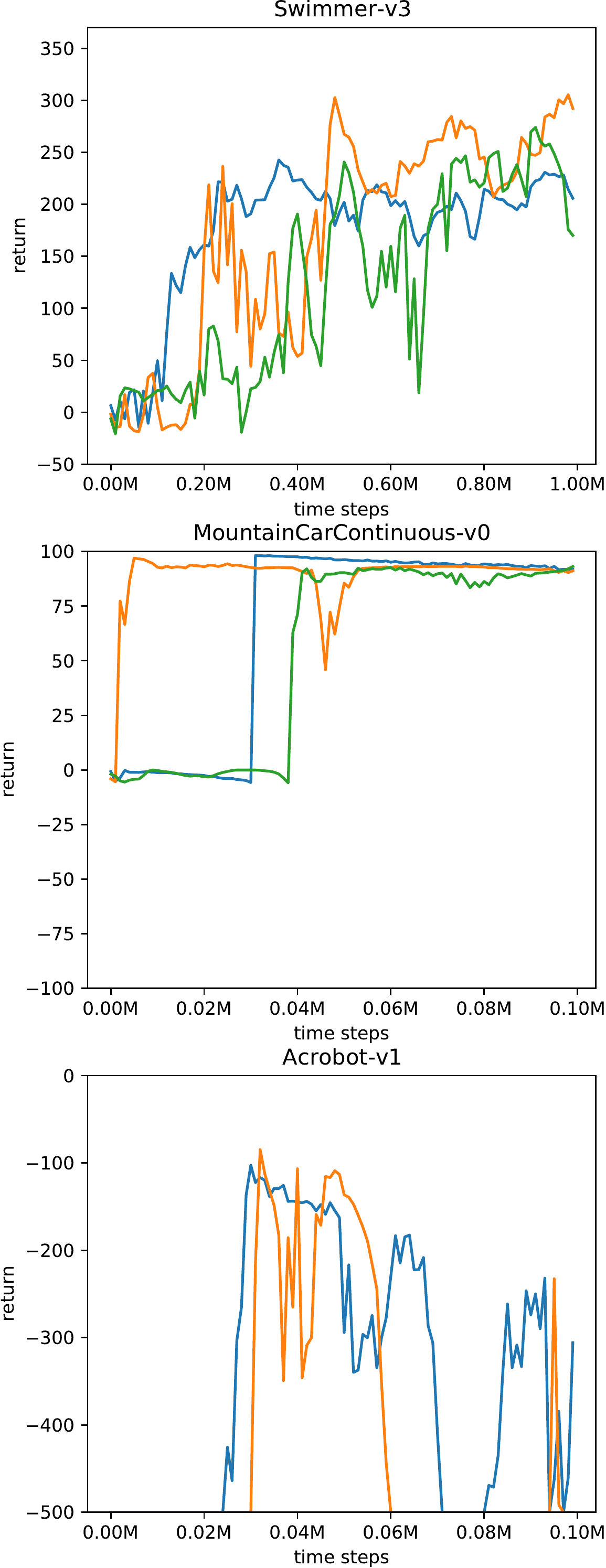}
\end{subfigure}
\begin{subfigure}[l]{.1\textwidth}
	\vspace{0.0cm}
  \includegraphics[scale=0.24]{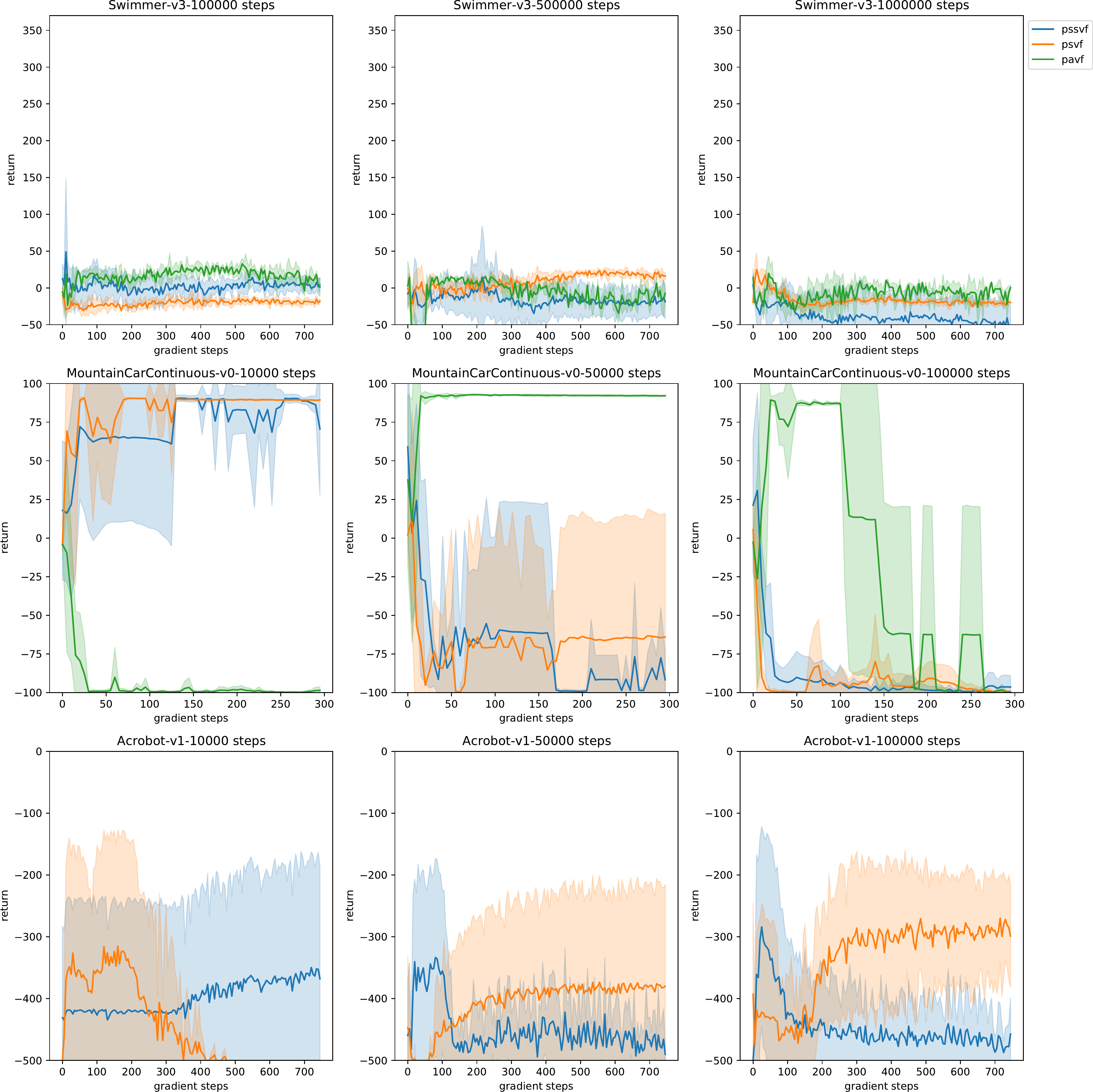}
\end{subfigure}%
\caption{Deep policies learned from scratch during training. The plots in the left column represent the return of agents learning while interacting with the environment using different algorithms. The learning curves in the other plots represent the return obtained by policies trained from scratch following the fixed critics after different time steps of training. Zero-shot learning curves are averaged over 5 seeds.}
\label{fig:zero_shot_deep}
\end{figure}
%
\paragraph{Offline learning with fragmented behaviors}
In this task, data are generated by perturbing a randomly initialized deterministic policy every 200 time steps and using it to act in the environment. We use $\sigma=0.5$ for the perturbations. After the dataset is collected, the PSVF is trained using a learning rate of $1e-3$ with a batch size of $128$. When the policy is learned, we use a learning rate of 0.02. All other hyperparameters are set to default values.

\subsubsection{Full experimental results}
\paragraph{Methodology}
In order to ensure a fair comparison of our methods and the baselines, we adopt the following procedure. For each hyperparameter configuration, for each environment and policy architecture, we run 5 instances of the learning algorithm using different seeds. We measure the learning progress by running 100 evaluations while learning the deterministic policy (without action or parameter noise) using 10 test trajectories. We use two metrics to determine the best hyperparameters: the average return over policy evaluations during the whole training process and the average return over policy evaluations during the last 20\% time steps. For each algorithm, environment and policy architecture, we choose the two hyperparameter configurations maximizing the performance of the two metrics and test them on 20 new seeds, reporting average and final performance in table~\ref{tab:avg_results} and ~\ref{tab:final_results} respectively.

\begin{table}[!h]
  \caption{Average return with standard deviation (across 20 seeds) for hypermarameters optimizing the average return during training using deterministic policies. Square brackets represent the number of neurons per layer of the policy. [] represents a linear policy.}
    \begin{tabular}{lccccc}
    \hline
    \toprule
    \textbf{Policy: []}  & MountainCar & Inverted & Reacher & Swimmer & Hopper\\
    & Continuous-v0 & Pendulum-v2 & -v2 & -v3 & -v3 \\
    \midrule
    ARS   &$ 63\pm 6$ &$ 886\pm 72$ &$ -9.2\pm 0.3$&$ 228\pm 89$&$ 1184\pm 345$\\
    PSSVF &$ 85\pm 4$ &$ 944\pm 33$ &$ -11.7\pm0.9 $&$ 259\pm 47$&$ 1392\pm 287$\\
    DDPG  &$ 0\pm0 $ &$ 612\pm169 $ &$ -8.6\pm 0.9 $&$ 95\pm112 $&$ 629\pm 145$\\
    PSVF  &$ 84\pm 20$ &$ 926\pm 34$ &$ -19.7\pm 6.0$&$ 188\pm 71$&$ 917\pm 249$\\
    PAVF  &$ 82\pm 21$ &$ 913\pm 40$ &$ -17.0\pm 7.7 $&$ 231\pm 56$&$ 814\pm 223$\\
    \toprule
    \textbf{Policy:[32]}   &  &  &  &  & \\
    ARS   &$ 37\pm 11$ &$ 851\pm 46$ &$ -9.6\pm 0.3$&$ 139\pm78 $&$ 1003\pm 66$\\
    PSSVF &$ 60\pm 33$ &$ 701\pm 138$ &$ 10.4\pm 0.5 $&$ 189\pm 35$&$ 707\pm 668$\\
    DDPG  &$ 0\pm0 $ &$ 816\pm 36$ &$ -5.7\pm 0.3 $&$  61\pm 32 $&$ 1384\pm 125$\\
    PSVF  &$ 71\pm 25$ &$ 529\pm281 $ &$ -11.9\pm 1.2$&$ 226\pm33 $&$ 864\pm 272$\\
    PAVF  &$ 71\pm 27$ &$ 563\pm 228$ &$ -10.9\pm1.1 $&$ 222\pm 28$&$ 793\pm 322$\\
    \toprule
    \textbf{Policy: [64,64]}   &  &  &  &  & \\
    \midrule
    ARS   &$ 28\pm 8$ &$ 812\pm 239$ &$ -9.8\pm 0.3$&$ 129\pm68 $&$ 964\pm 47$\\
    PSSVF &$ 72\pm 22$ &$ 850\pm 93$ &$ -10.7\pm 0.2$&$ 158\pm 59$&$ 922\pm 568$\\
    DDPG  &$ 0\pm0 $ &$ 834\pm 36$ &$-5.5\pm 0.4$&$ 92\pm117 $&$ 767\pm 627$\\
    PSVF  &$ 80\pm 9$ &$ 580\pm 107$ &$ -10.7\pm 0.6$&$ 137\pm 38$&$ 843\pm 282$\\
    PAVF  &$ 73\pm 10$ &$ 399\pm 219$ &$ -10.7\pm 0.5$&$ 142\pm26 $&$ 875\pm 136$\\
    \hline
  \end{tabular}
  \begin{tabular}{lcc}
    \hline
    \toprule
    \textbf{Policy: []}  & Acrobot-v1 & CartPole-v1 \\
    \midrule
    ARS   &$ -161\pm23 $ &$ 476\pm 13$\\
    PSSVF &$ -137\pm 14$ &$ 443\pm 105$\\
    PSVF  &$ -148\pm 25$ &$ 459\pm 28$\\
    \toprule
    \textbf{Policy:[32]}    &  &  \\
    \midrule
    ARS   &$ -296\pm38 $ &$ 395\pm 141$\\
    PSSVF &$ -251\pm 80$ &$ 463\pm 18$\\
    PSVF  &$ -270\pm 113$ &$ 413\pm 61$\\
    \toprule
    \textbf{Policy: [64,64]}   &  &  \\
    \midrule
    ARS   &$-335 \pm 35$ &$ 416\pm 105$\\
    PSSVF &$ -281\pm 117$ &$ 452\pm 34$\\
    PSVF  &$ -397\pm 71$ &$ 394\pm 71$\\
    \hline
  \end{tabular}
    \label{tab:avg_results}

\end{table}    

\begin{table}[!h]
  \caption{Final return with standard deviation (across 20 seeds) for hypermarameters optimizing the final return during training using deterministic policies.}
    \begin{tabular}{lccccc}
    \hline
    \toprule
    \textbf{Policy: []}  & MountainCar & Inverted & Reacher & Swimmer & Hopper\\
    & Continuous-v0 & Pendulum-v2 & -v2 & -v3 & -v3 \\
    \midrule
    ARS   &$ 73\pm 5$ &$ 657\pm 477$ &$ -8.6\pm 0.5$&$ 334\pm34 $&$ 1443\pm 713$\\
    PSSVF &$ 84\pm28 $ &$ 970\pm126 $ &$ -10.0\pm1.0 $&$ 350\pm8 $&$ 1560\pm 911$\\
    DDPG  &$ 0\pm1 $ &$ 777\pm 320$ &$ -7.3\pm0.4 $&$ 146\pm152 $&$ 704\pm234 $\\
    PSVF  &$ 76\pm36 $ &$ 906\pm289 $ &$ -16.5\pm1.6 $&$ 238\pm107 $&$ 1067\pm 340 $\\
    PAVF  &$ 68\pm42 $ &$ 950\pm223 $ &$ -17.2\pm 15.4 $&$ 298\pm40 $&$ 720\pm 281 $\\
    \toprule
    \textbf{Policy:[32]}   &  &  &  &  & \\
    ARS   &$ 54\pm 20$ &$ 936\pm 146$ &$ -9.2\pm 0.4$&$ 239\pm 117$&$ 1048\pm 68$\\
    PSSVF &$ 89\pm22 $ &$ 816\pm 234$ &$ -10.2\pm1.0 $&$ 294\pm41 $&$ 1204\pm 615 $\\
    DDPG  &$ 0\pm0 $ &$ 703\pm283 $ &$ -4.6\pm0.6 $&$ 179\pm 150$&$ 1290\pm 348 $\\
    PSVF  &$ 84\pm31 $ &$ 493\pm462 $ &$ -11.3\pm0.8 $&$ 290\pm70 $&$ 1003\pm 572 $\\
    PAVF  &$ 92\pm 7$ &$ 854\pm 295$ &$ -10.1\pm 0.9$&$ 307\pm 34 $&$ 967\pm411 $\\
    \toprule
    \textbf{Policy: [64,64]}   &  &  &  &  & \\
    \midrule
    ARS   &$ 11\pm 30$ &$ 976\pm 83$ &$ -9.4\pm0.4 $&$ 157\pm 54$&$ 1006\pm 47$\\
    PSSVF &$ 91\pm 16$ &$ 898\pm 227$ &$ -10.7\pm0.6 $&$ 224\pm 99 $&$ 1412\pm691 $\\
    DDPG  &$ 0\pm0 $ &$ 943\pm 73$ &$-4.4 \pm0.4 $&$ 196\pm 151$&$ 1437\pm 752$\\
    PSVF  &$ 93\pm1 $ &$ 1000\pm0 $ &$ -10.6\pm1.0 $&$ 257\pm26 $&$ 1247\pm 344 $\\
    PAVF  &$ 93\pm2 $ &$ 827\pm 267$ &$ -10.6\pm0.4 $&$ 232\pm42 $&$ 1005\pm 155 $\\
    \hline
  \end{tabular}
  \begin{tabular}{lcc}
    \hline
    \toprule
    \textbf{Policy: []}  & Acrobot-v1 & CartPole-v1 \\
    \midrule
    ARS   &$ -126\pm 26$ &$ 499\pm 2$\\
    PSSVF &$ -97\pm 6 $ &$ 482\pm 53$\\
    PSVF  &$ -100\pm 18 $ &$ 500\pm 0$\\
    \toprule
    \textbf{Policy:[32]}    &  &  \\
    \midrule
    ARS   &$ -215\pm 97$ &$ 471\pm 110$\\
    PSSVF &$ -116\pm 33$ &$ 500\pm0 $\\
    PSVF  &$ -244\pm 151$ &$ 488\pm 36$\\
    \toprule
    \textbf{Policy: [64,64]}   &  &  \\
    \midrule
    ARS   &$ -182\pm 45$ &$ 492\pm 18$\\
    PSSVF &$ -233\pm 139 $ &$ 500\pm 0$\\
    PSVF  &$ -406\pm 51$ &$ 499\pm 2$\\
    \hline
  \end{tabular}
    \label{tab:final_results}

\end{table}

Figures~\ref{fig:learning_curve_app1} and ~\ref{fig:learning_curve_app2} report all the learning curves from the main paper and for a small non linear policy with 32 hidden neurons.
\begin{figure}[h]
\begin{center}
Deterministic policies
\includegraphics[width=0.9\linewidth]{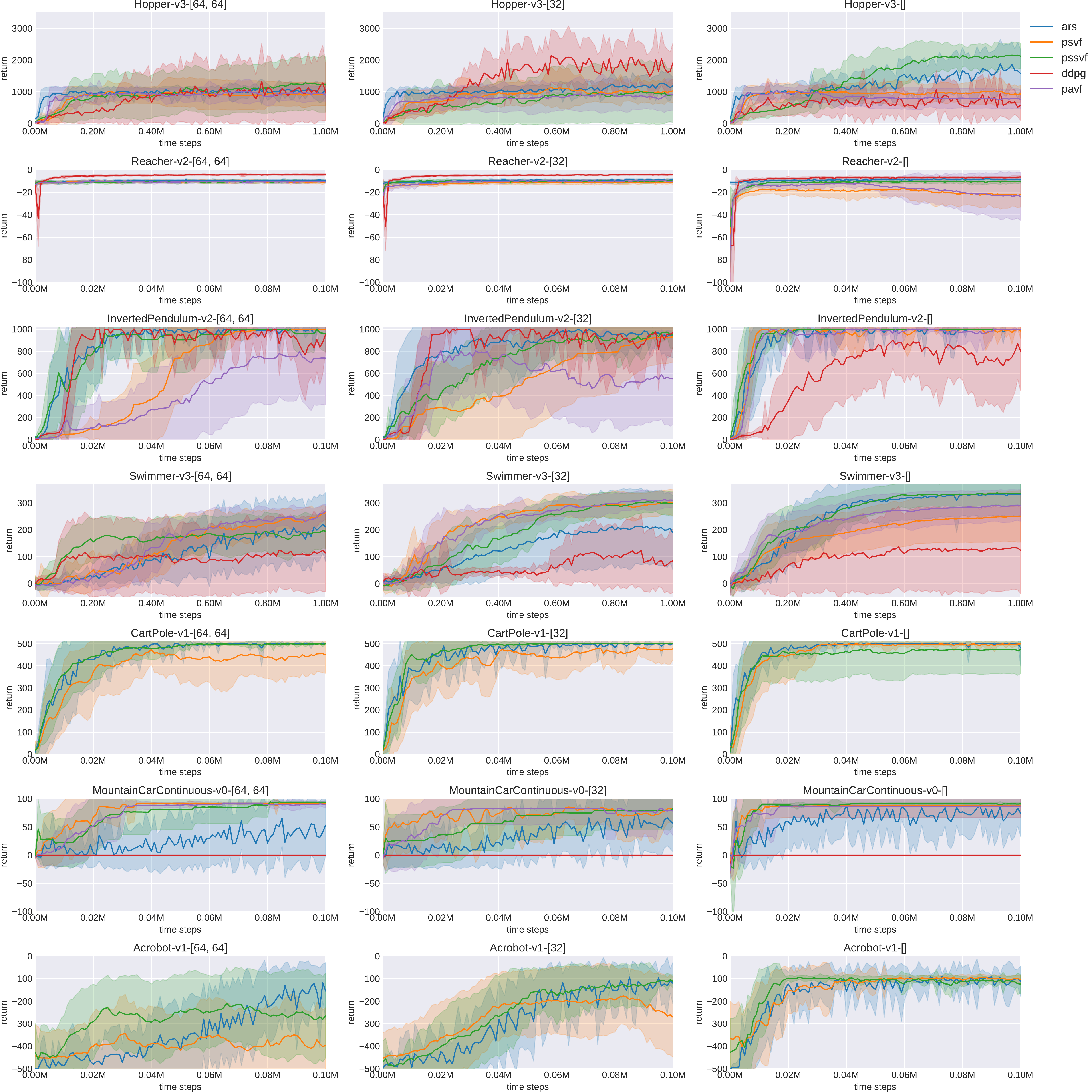}
\end{center}

\caption{Learning curves representing the average return as a function of the number of time steps in the environment (across 20 runs) with different environments and deterministic policy architectures.
We use the \textbf{best hyperparameters found while maximizing the average reward} for each task. For each subplot, the square brackets represent the number of neurons per policy layer. [] represents a linear policy.}
\label{fig:learning_curve_app1}
\end{figure}

\begin{figure}[h]
\begin{center}
Deterministic policies
\includegraphics[width=0.9\linewidth]{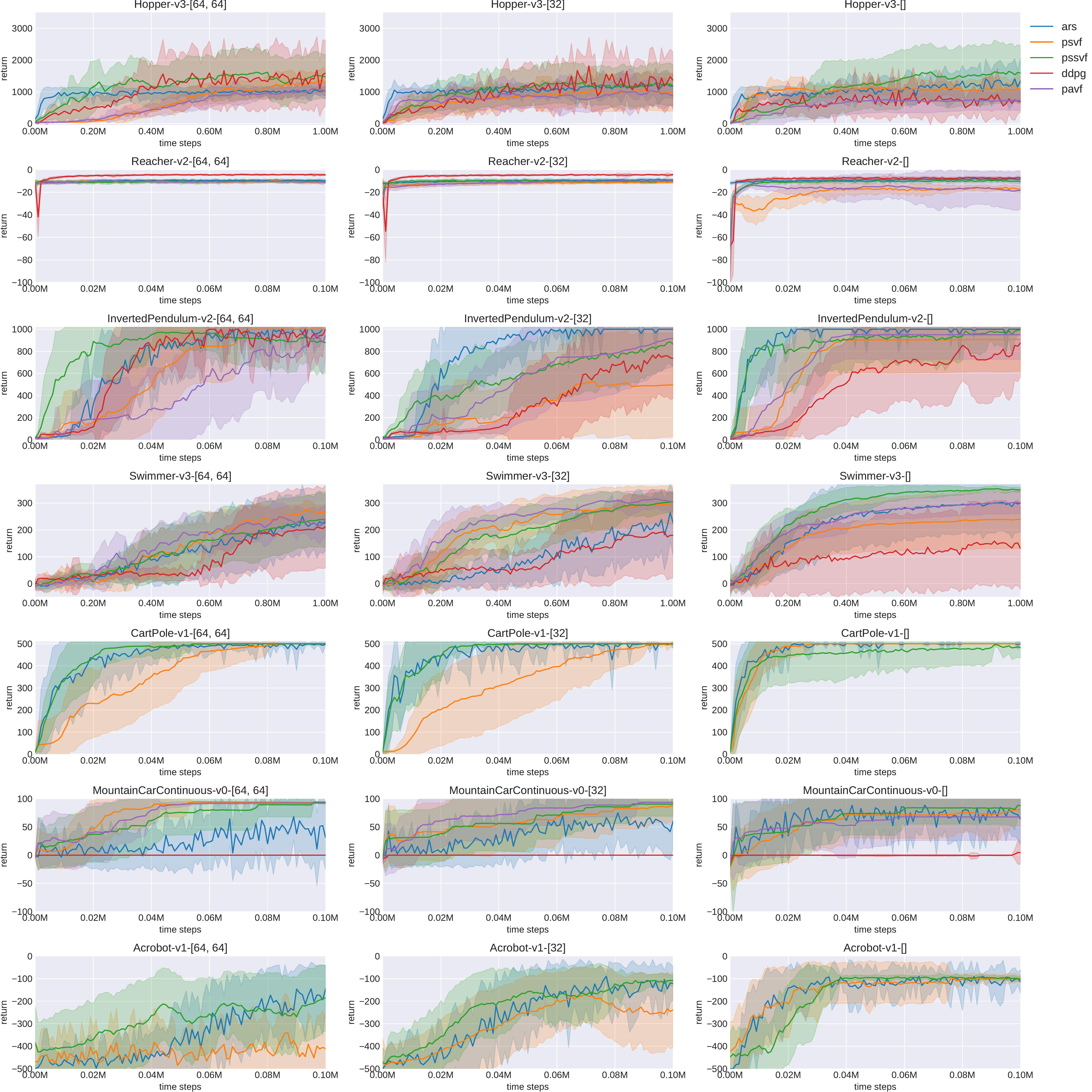}
\end{center}
\caption{Learning curves representing the average return as a function of the number of time steps in the environment (across 20 runs) with different environments and deterministic policy architectures.
We use the \textbf{best hyperparameters found while maximizing the final reward for each task}. For each subplot, the square brackets represent the number of neurons per policy layer. [] represents a linear policy.}
\label{fig:learning_curve_app2}

\end{figure}

\paragraph{Stochastic policies}
We include some results for stochastic policies when using PSSVF and PSVF. Figures~\ref{fig:learning_curve_lin_stoch} and~\ref{fig:learning_curve_deep_stoch} show a comparison with the baselines when using shallow and deep policies respectively. We observe results sometimes comparable, but often inferior with respect to deterministic policies. In particular, when using shallow policies, PBVFs are able to outperform the baselines in the MountainCar environment, while obtaining comparable performance in CartPole and InvertedPendulum. Like in previous experiments, PBVFs fail to learn a good policy in Reacher. When using deep policies, the results are slightly different: PBVFs outperform ARS and DDPG in Swimmer, but fail to learn InvertedPendulum. Although the use of stochastic policies can help smoothing the objective function and allows the agent exploring in action space, we believe that the lower variance provided by deterministic policies can facilitate learning PBVFs. 

\begin{figure}[h]
\begin{center}
Stochastic shallow policies
\includegraphics[width=0.9\linewidth]{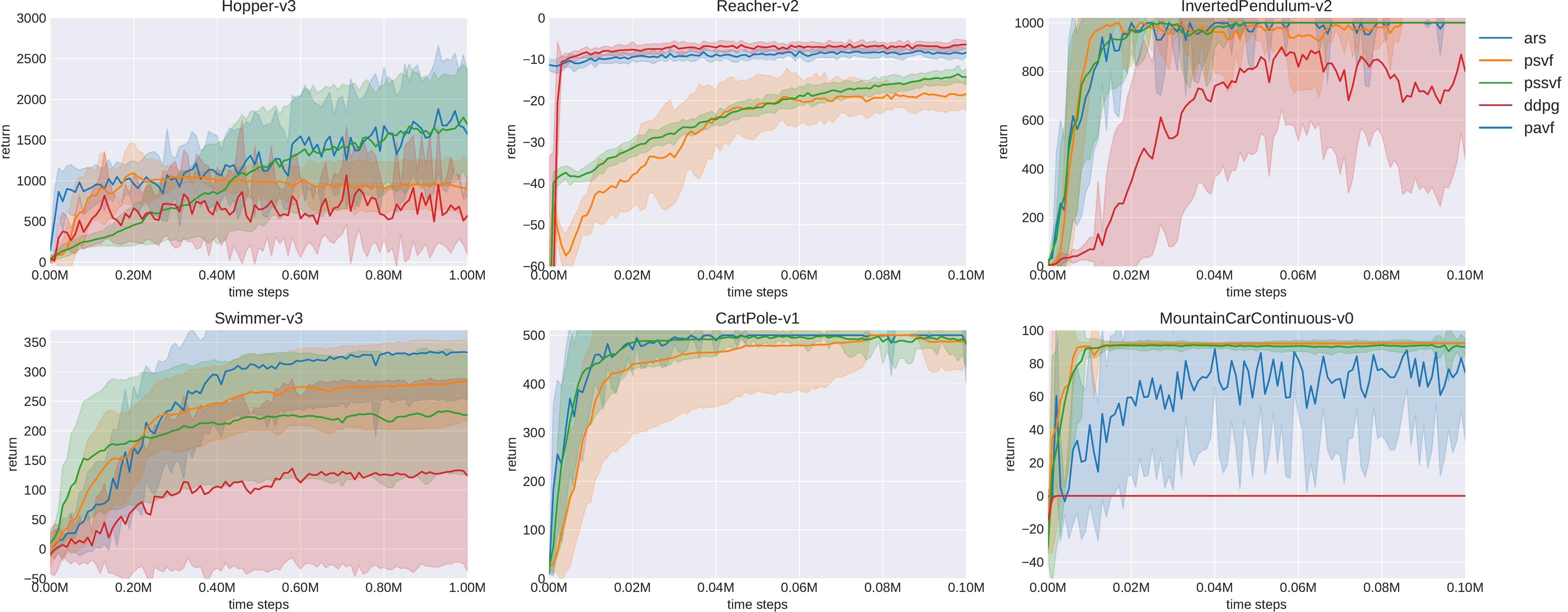}
\end{center}
\caption{Learning curves representing the average return as a function of the number of time steps in the environment (across 20 runs) with different environments using stochastic shallow policies.
We use the \textbf{best hyperparameters found while maximizing the average reward for each task.}}
\label{fig:learning_curve_lin_stoch}

\end{figure}

\begin{figure}[h]
\begin{center}
Stochastic deep policies
\includegraphics[width=0.9\linewidth]{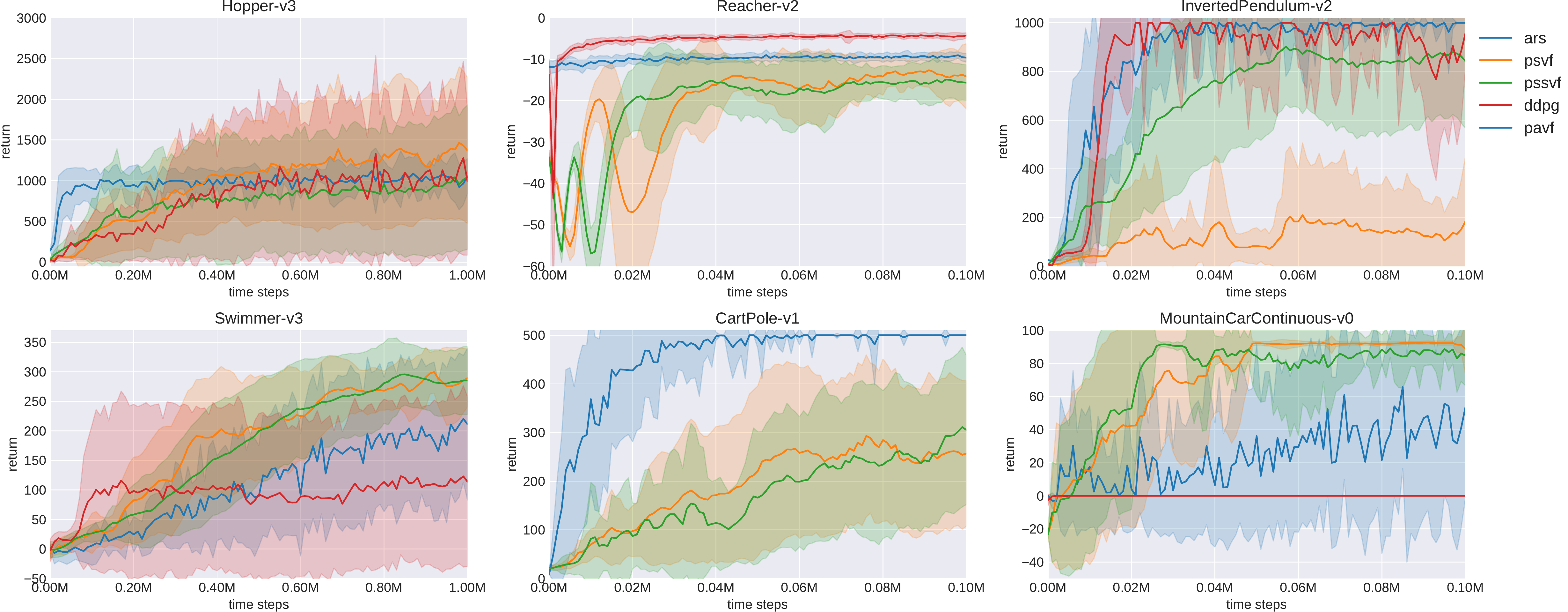}
\end{center}
\caption{Learning curves representing the average return as a function of the number of time steps in the environment (across 20 runs) with different environments using stochastic deep policies ([64,64]).
We use the \textbf{best hyperparameters found while maximizing the average reward for each task.}}
\label{fig:learning_curve_deep_stoch}

\end{figure}

\subsubsection{Sensitivity analysis}
\label{experiments_detail:sensitivity}
In the following, we report the sensitivity plots for all algorithms, for all deterministic policy architectures and environments. In particular, figure~\ref{fig:sensitivity_pvf},~\ref{fig:sensitivity_psvf},~\ref{fig:sensitivity_pavf},~\ref{fig:sensitivity_ddpg} and~\ref{fig:sensitivity_ars} show the performance of each algorithm given different hyperparameters tried during training. We observe that in general deep policies are more sensitive and, apart for DDPG, achieve often a better performance than smaller policies. The higher sensitivity displayed by ARS is in part caused by the higher number of hyperparameters we tried when tuning the algorithm. 
\begin{figure}[h]
\begin{center}
\includegraphics[width=0.9\linewidth]{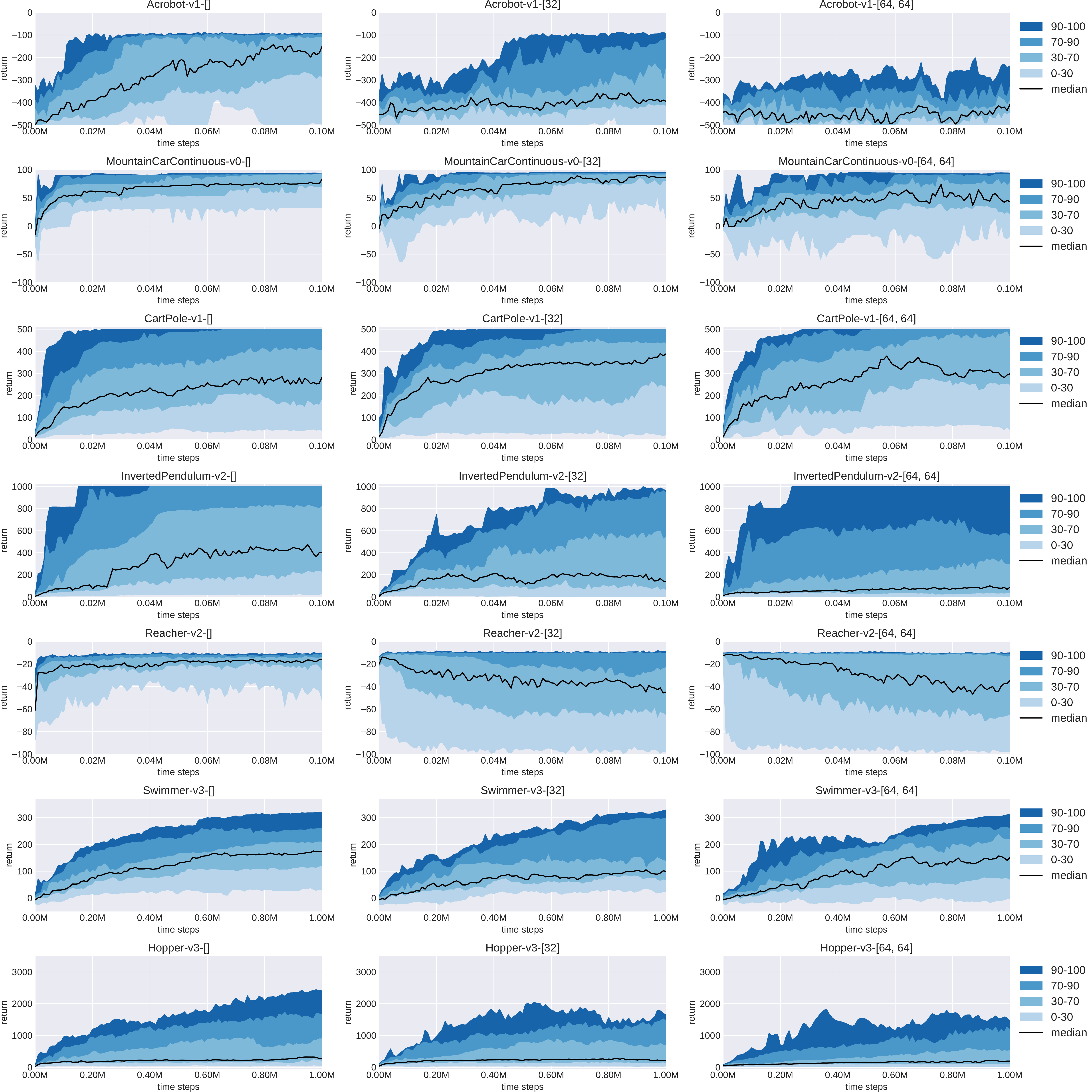}
\end{center}
\caption{\textbf{Sensitivity of PSSVFs using deterministic policies} to the choice of the hyperparameter. Performance is shown by percentile using all the learning curves obtained during hyperparameter tuning. The median performance is depicted as a dark line. For each subplot, the numbers in the square brackets represent the number of neurons per layer of the policy. [] represents a linear policy.}
\label{fig:sensitivity_pvf}

\end{figure}

\begin{figure}[h]
\begin{center}
\includegraphics[width=0.9\linewidth]{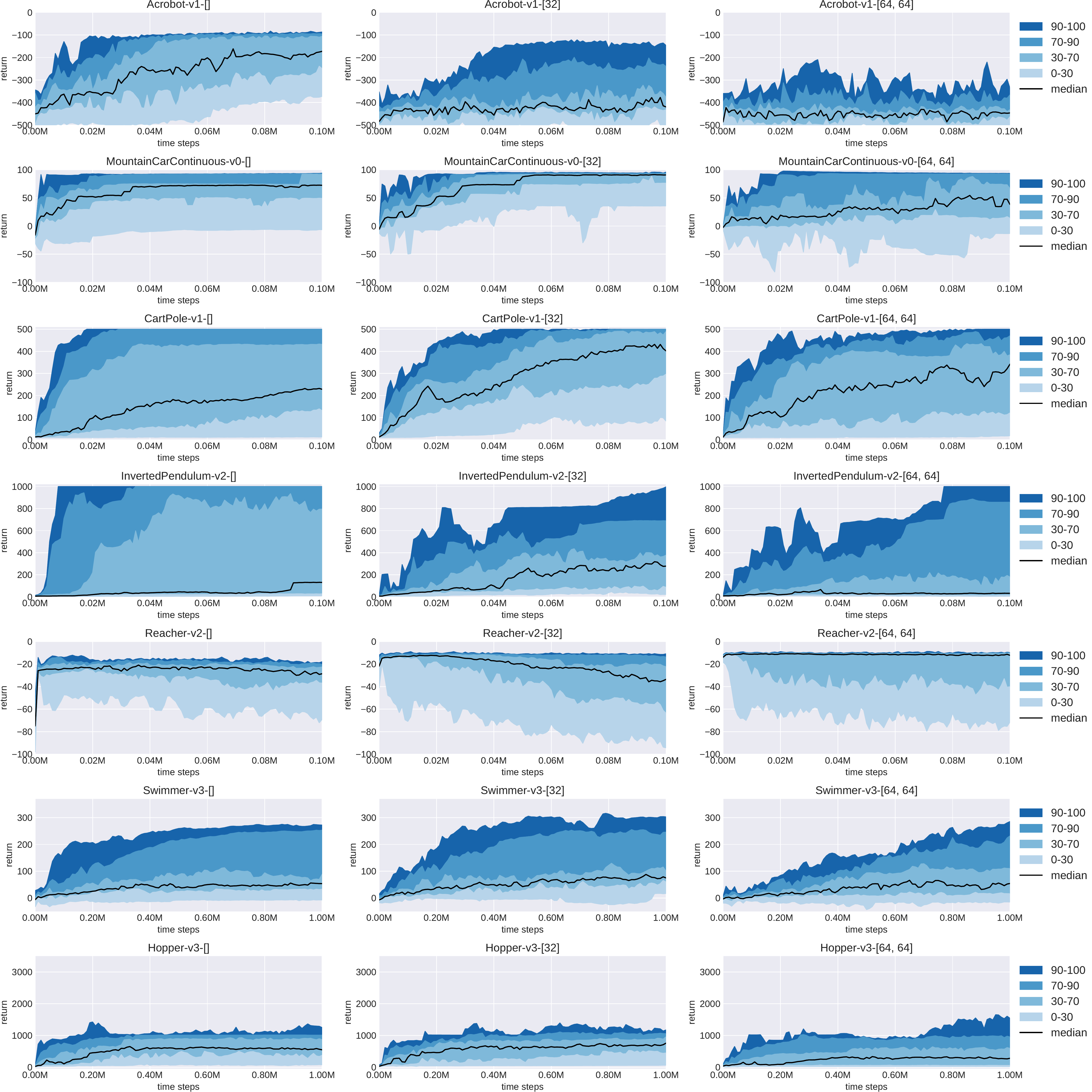}
\end{center}
\caption{\textbf{Sensitivity of PSVFs using deterministic policies} to the choice of the hyperparameter. Performance is shown by percentile using all the learning curves obtained during hyperparameter tuning. The median performance is depicted as a dark line. For each subplot, the numbers in the square brackets represent the number of neurons per layer of the policy. [] represents a linear policy.}
\label{fig:sensitivity_psvf}

\end{figure}

\begin{figure}[h]
\begin{center}
\includegraphics[width=0.9\linewidth]{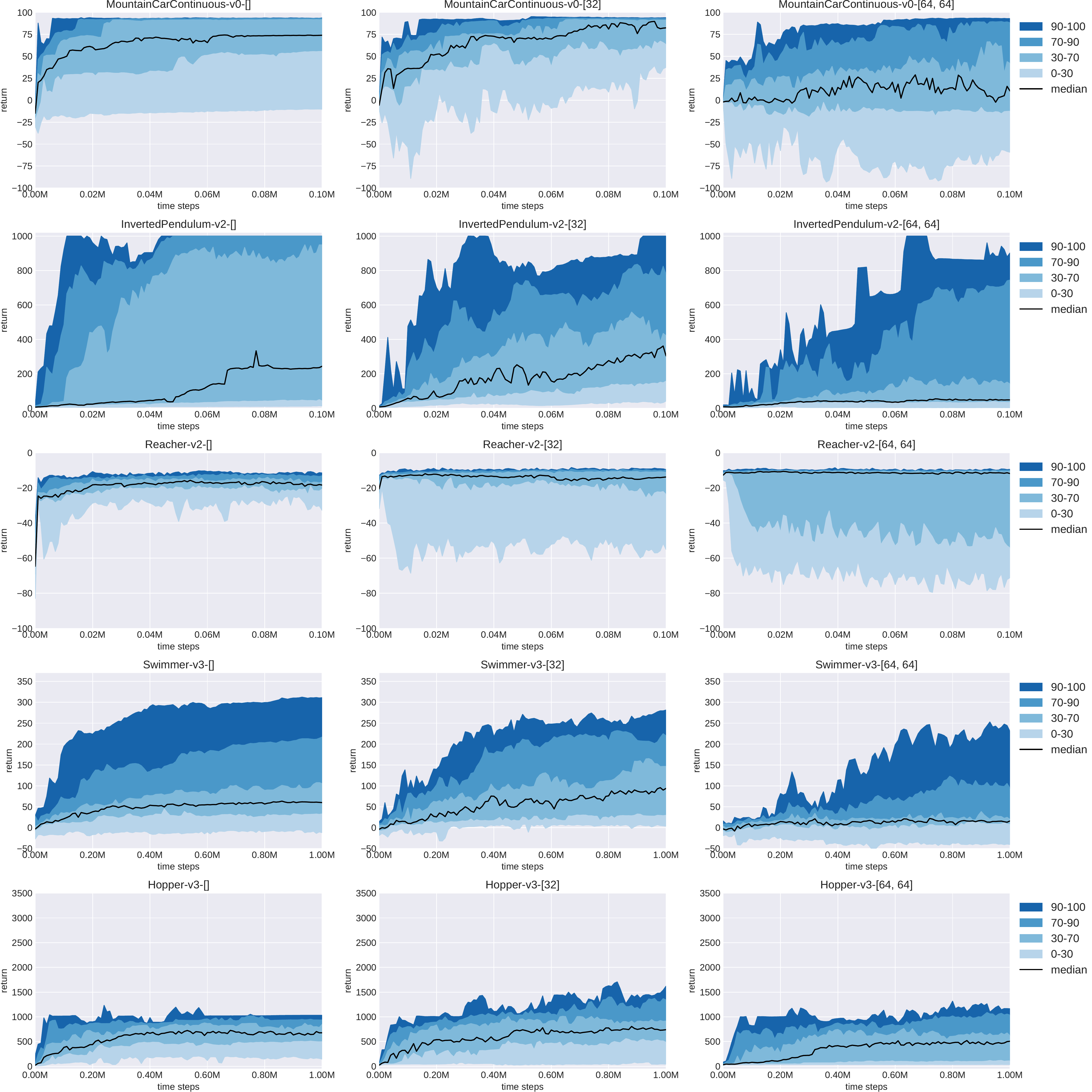}
\end{center}
\caption{\textbf{Sensitivity of PAVFs using deterministic policies} to the choice of the hyperparameter. Performance is shown by percentile using all the learning curves obtained during hyperparameter tuning. The median performance is depicted as a dark line. For each subplot, the numbers in the square brackets represent the number of neurons per layer of the policy.}
\label{fig:sensitivity_pavf}

\end{figure}

\begin{figure}[h]
\begin{center}
\includegraphics[width=0.9\linewidth]{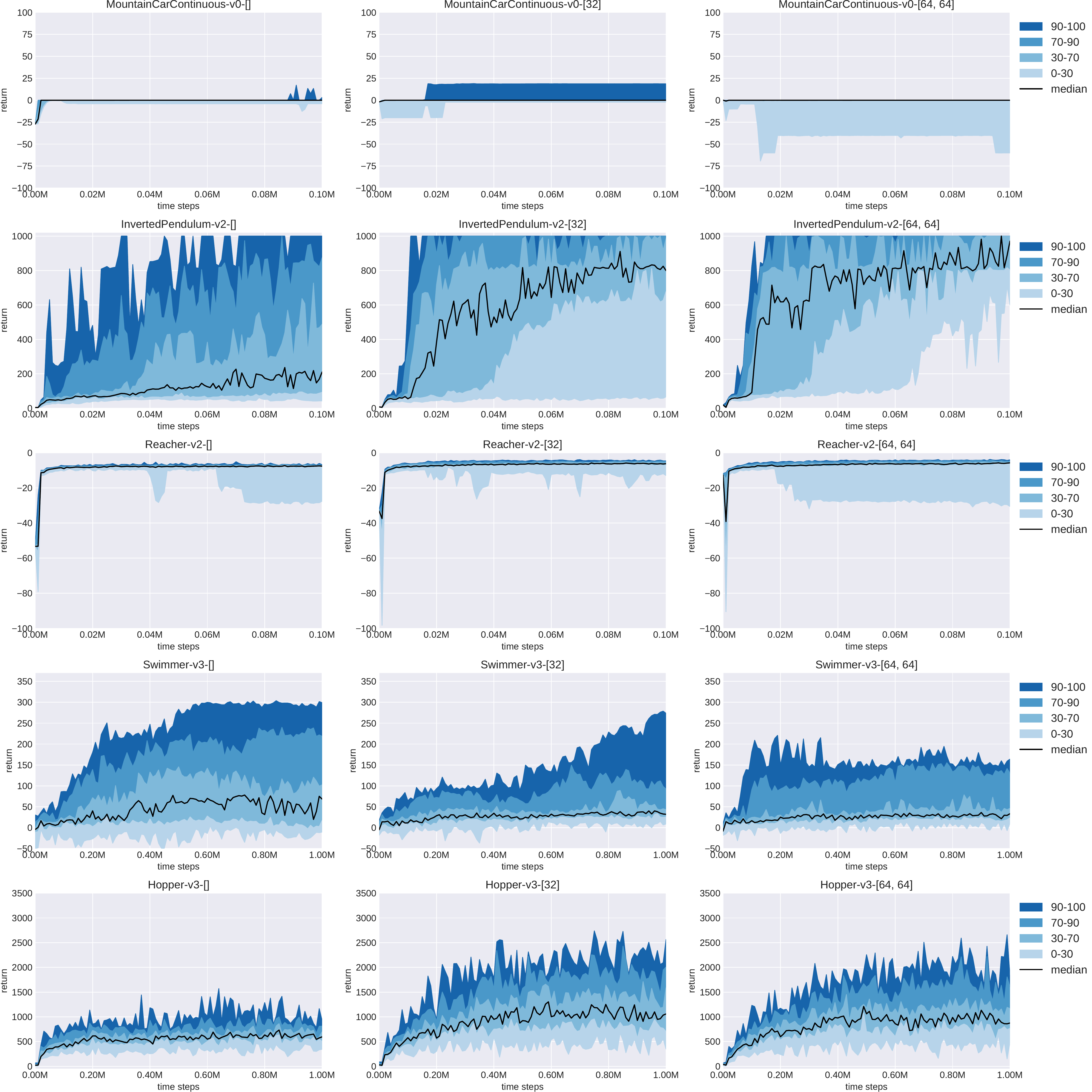}
\end{center}
\caption{\textbf{Sensitivity of DDPG} to the choice of the hyperparameter. Performance is shown by percentile using all the learning curves obtained during hyperparameter tuning. The median performance is depicted as a dark line. For each subplot, the numbers in the square brackets represent the number of neurons per layer of the policy.}
\label{fig:sensitivity_ddpg}

\end{figure}

\begin{figure}[h]
\begin{center}
\includegraphics[width=0.9\linewidth]{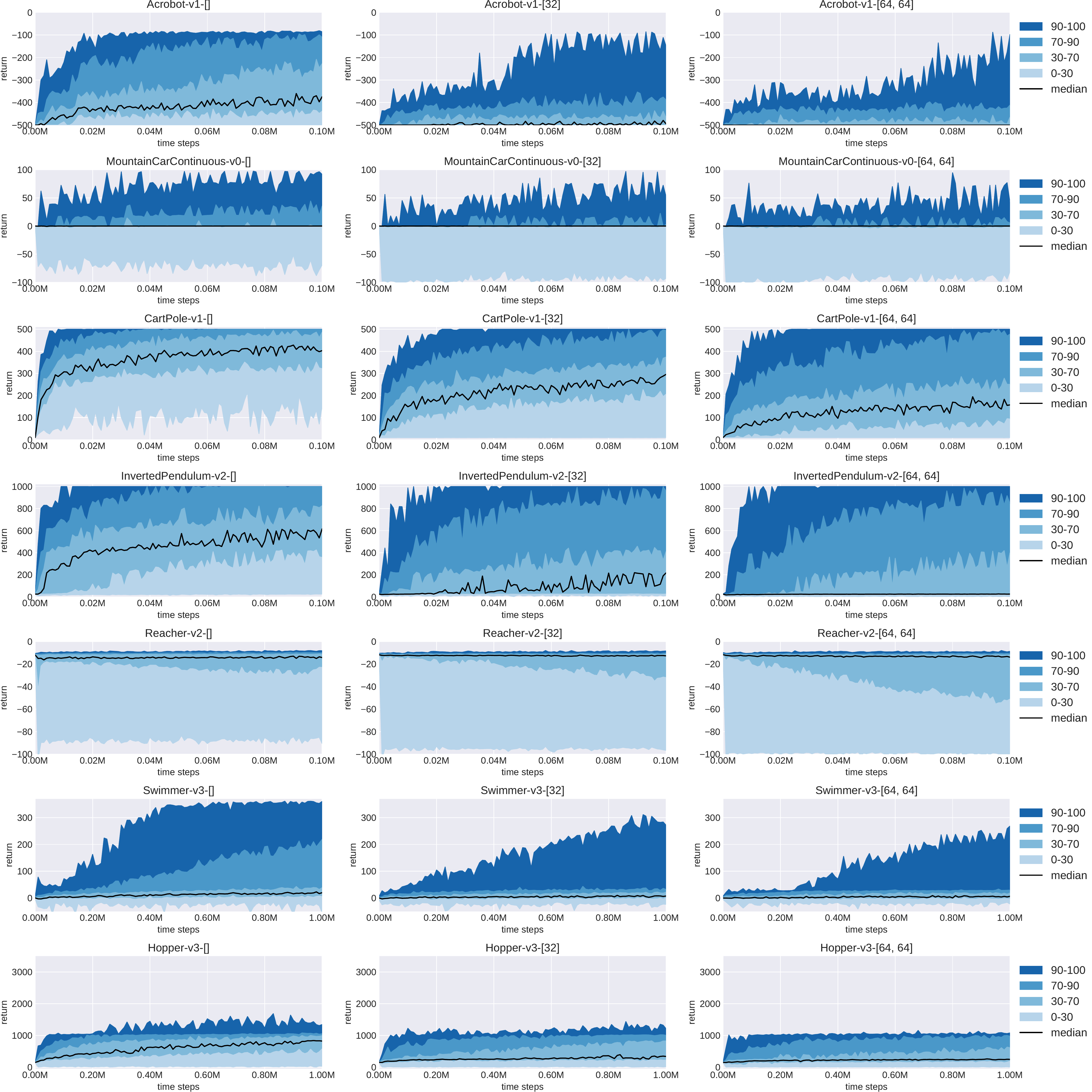}
\end{center}
\caption{\textbf{Sensitivity of ARS} to the choice of the hyperparameter. Performance is shown by percentile using all the learning curves obtained during hyperparameter tuning. The median performance is depicted as a dark line. For each subplot, the numbers in the square brackets represent the number of neurons per layer of the policy.}
\label{fig:sensitivity_ars}

\end{figure}

\subsubsection{Table of best hyperparameters}
We report for each algorithm, environment, and policy architecture the best hyperparameters found when optimizing for average return or final return in tables~\ref{tab:hyp_pvf},~\ref{tab:hyp_ars},~\ref{tab:hyp_psvf},~\ref{tab:hyp_stpavf},~\ref{tab:hyp_pavf} and~\ref{tab:hyp_ddpg}.

\begin{table}[!h]
  \caption{Table of best hyperparameters for PSSVFs using deterministic policies}
  \begin{tabular}{lrcccccc}
    \hline
    \textbf{Learning rate policy}  & Policy: &
      \multicolumn{2}{c}{[]} &
      \multicolumn{2}{c}{[32]} &
      \multicolumn{2}{c}{[64,64]} \\
    
    &Metric: &avg &last &avg &last &avg &last  \\
    \hline
    Acrobot-v1 & &1e-2 & 1e-3 &1e-4 & 1e-4 &1e-4 & 1e-4\\
    MountainCarContinuous-v0 & &1e-2 & 1e-3 &1e-4 & 1e-4 &1e-4 & 1e-4\\
    CartPole-v1 & &1e-3 & 1e-3 &1e-3 & 1e-3 &1e-4 & 1e-4\\
    Swimmer-v3 & &1e-3 & 1e-3 &1e-3 & 1e-3 &1e-2 & 1e-4\\
    InvertedPendulum-v2 & &1e-3 & 1e-3 &1e-3 & 1e-3 &1e-4 & 1e-4\\
    Reacher-v2 & &1e-4 & 1e-4 &1e-4 & 1e-4 &1e-4 & 1e-4\\
    Hopper-v3 & &1e-4 & 1e-4 &1e-4 & 1e-3 &1e-4 & 1e-4\\
    \hline
    \textbf{Learning rate critic} & & & & & & &\\
    \hline
    Acrobot-v1 & &1e-2 & 1e-3 &1e-2 & 1e-2 &1e-2 & 1e-2\\
    MountainCarContinuous-v0 & &1e-3 & 1e-2 &1e-3 & 1e-2 &1e-2 & 1e-2\\
    CartPole-v1 & &1e-2 & 1e-2 &1e-3 & 1e-3 &1e-2 & 1e-2\\
    Swimmer-v3 & &1e-3 & 1e-3 &1e-2 & 1e-2 &1e-3 & 1e-2\\
    InvertedPendulum-v2 & &1e-2 & 1e-2 & 1e-3 & 1e-2 &1e-3 & 1e-3\\
    Reacher-v2 & &1e-3 & 1e-3 &1e-3 & 1e-3 &1e-4 & 1e-4\\
    Hopper-v3 & &1e-3 & 1e-3 &1e-2 & 1e-2 &1e-2 & 1e-2\\
    \hline
    \textbf{Noise for exploration}  & & & & & & &\\
    \hline
    Acrobot-v1 & & 1.0 & 1.0 & 1e-1 & 1e-1 & 1e-1 & 1e-1\\
    MountainCarContinuous-v0 & & 1.0 & 1.0 & 1e-1 & 1e-1 & 1e-1 & 1e-1\\
    CartPole-v1 & & 1.0 & 1.0 & 1.0 & 1.0 & 1e-1 & 1e-1\\
    Swimmer-v3 & & 1.0 & 1.0 & 1.0 & 1.0 & 1.0 & 1e-1\\
    InvertedPendulum-v2 & & 1.0 & 1.0 & 1.0 & 1.0 & 1e-1 & 1e-1\\
    Reacher-v2 & & 1e-1 & 1e-1 & 1e-1 & 1e-1 & 1e-1 & 1e-1\\
    Hopper-v3 & & 1.0 & 1.0 & 1e-1 & 1.0 & 1e-1 & 1e-1\\
    \hline
  \end{tabular}
  \label{tab:hyp_pvf}
\end{table}  

\begin{table}[!h]
  \caption{Table of best hyperparameters for ARS}
  \begin{tabular}{lrcccccc}
    \hline
    \textbf{Learning rate policy}  & Policy: &
      \multicolumn{2}{c}{[]} &
      \multicolumn{2}{c}{[32]} &
      \multicolumn{2}{c}{[64,64]} \\
    
    &Metric: &avg &last &avg &last &avg &last  \\
    \hline
    Acrobot-v1 & &1e-2 & 1e-3 &1e-2 & 1e-2 &1e-2 & 1e-2\\
    MountainCarContinuous-v0 & &1e-2 & 1e-2 &1e-2 & 1e-2 &1e-2 & 1e-2\\
    CartPole-v1 & &1e-2 & 1e-2 &1e-2 & 1e-2 &1e-2 & 1e-2\\
    Swimmer-v3 & &1e-2 & 1e-2 &1e-2 & 1e-2 &1e-2 & 1e-2\\
    InvertedPendulum-v2 & &1e-2 & 1e-2 &1e-2 & 1e-2 &1e-2 & 1e-2\\
    Reacher-v2 & &1e-2 & 1e-2 &1e-3 & 1e-2 &1e-3 & 1e-3\\
    Hopper-v3 & &1e-2 & 1e-2 &1e-2 & 1e-2 &1e-2 & 1e-2\\
    \hline
    \textbf{Number of directions} & & & & & & &\\
    \textbf{and elite directions} & & & & & & &\\
    \hline
    Acrobot-v1 & &(4,4) &(4,4) &(1,1) &(1,1) &(1,1) &(1,1)\\
    MountainCarContinuous-v0 & &(1,1) &(1,1) &(1,1) &(16,4) &(1,1) &(1,1)\\
    CartPole-v1 & &(4,4) &(4,4) &(1,1) &(1,1) &(4,1) &(4,1)\\
    Swimmer-v3 & &(1,1) &(1,1) &(1,1) &(4,1) &(1,1) &(1,1)\\
    InvertedPendulum-v2 & &(4,4) &(4,4) &(1,1) &(4,4) &(4,1) &(16,1)\\
    Reacher-v2 & &(16,16) &(16,16) &(1,1) &(16,4) &(1,1) &(1,1)\\
    Hopper-v3 & &(4,1) &(4,1) &(1,1) &(1,1) &(1,1) &(1,1)\\
    \hline
    \textbf{Noise for exploration}  & & & & & & &\\
    \hline
    Acrobot-v1 & & 1e-2 & 1e-3 & 1e-1 & 1e-1 & 1e-1 & 1e-1\\
    MountainCarContinuous-v0 & & 1e-1 & 1e-1 & 1e-1 & 1e-1 & 1e-1 & 1e-1\\
    CartPole-v1 & & 1e-2 & 1e-2 & 1e-1 & 1e-1 & 1e-2 & 1e-2\\
    Swimmer-v3 & & 1e-1 & 1e-1 & 1e-2 & 1e-1 & 1e-1 & 1e-1\\
    InvertedPendulum-v2 & & 1e-2 & 1e-2 & 1e-2 & 1e-2 & 1e-2 & 1e-2\\
    Reacher-v2 & & 1e-2 & 1e-2 & 1e-2 & 1e-2 & 1e-2 & 1e-2\\
    Hopper-v3 & & 1e-1 & 1e-1 & 1e-1 & 1e-1 & 1e-1 & 1e-1\\
    \hline
  \end{tabular}
    \label{tab:hyp_ars}

\end{table}

\begin{table}[!h]
  \caption{Table of best hyperparameters for PSVFs using deterministic policies}
  \begin{tabular}{lrcccccc}
    \hline
    \textbf{Learning rate policy}  & Policy: &
      \multicolumn{2}{c}{[]} &
      \multicolumn{2}{c}{[32]} &
      \multicolumn{2}{c}{[64,64]} \\
    
    &Metric: &avg &last &avg &last &avg &last  \\
    \hline
    Acrobot-v1 & &1e-2 & 1e-2 &1e-4 & 1e-4 &1e-4 & 1e-2\\
    MountainCarContinuous-v0 & &1e-2 & 1e-3 &1e-2 & 1e-4 &1e-3 & 1e-4\\
    CartPole-v1 & &1e-2 & 1e-2 &1e-2 & 1e-4 &1e-3 & 1e-4\\
    Swimmer-v3 & &1e-3 & 1e-3 &1e-3 & 1e-3 &1e-3 & 1e-3\\
    InvertedPendulum-v2 & &1e-2 & 1e-3 &1e-4 & 1e-4 &1e-4 & 1e-4\\
    Reacher-v2 & &1e-3 & 1e-2 &1e-4 & 1e-4 &1e-4 & 1e-4\\
    Hopper-v3 & &1e-3 & 1e-3 &1e-4 & 1e-4 &1e-4 & 1e-3\\
    \hline
    \textbf{Learning rate critic}  & & & & & & &\\
    \hline
    Acrobot-v1 & &1e-3 & 1e-4 &1e-2 & 1e-2 &1e-3 & 1e-2\\
    MountainCarContinuous-v0 & &1e-4 & 1e-3 &1e-2 & 1e-4 &1e-3 & 1e-3\\
    CartPole-v1 & &1e-2 & 1e-2 &1e-2 & 1e-3 &1e-2 & 1e-4\\
    Swimmer-v3 & &1e-4 & 1e-4 &1e-4 & 1e-4 &1e-4 & 1e-4\\
    InvertedPendulum-v2 & &1e-3 & 1e-2 &1e-3 & 1e-4 &1e-4 & 1e-3\\
    Reacher-v2 & &1e-2 & 1e-2 &1e-3 & 1e-3 &1e-4 & 1e-4\\
    Hopper-v3 & &1e-2 & 1e-2 &1e-4 & 1e-4 &1e-2 & 1e-4\\
    \hline
    \textbf{Noise for exploration}  & & & & & & &\\
    \hline
    Acrobot-v1 & & 1.0 & 1.0 & 1e-1 & 1e-1 & 1e-1 & 1e-1\\
    MountainCarContinuous-v0 & & 1.0 & 1e-1 & 1e-1 & 1.0 & 1e-1 & 1e-1\\
    CartPole-v1 & & 1.0 & 1.0 & 1.0 & 1e-1 & 1e-1 & 1e-1\\
    Swimmer-v3 & & 1.0 & 1.0 & 1.0 & 1.0 & 1.0 & 1.0\\
    InvertedPendulum-v2 & & 1.0 & 1.0 & 1e-1 & 1e-1 & 1e-1 & 1e-1\\
    Reacher-v2 & & 1.0 & 1.0 & 1.0 & 1.0 & 1e-1 & 1e-1\\
    Hopper-v3 & & 1.0 & 1.0 & 1e-1 & 1e-1 & 1e-1 & 1.0\\
    \hline
  \end{tabular}
    \label{tab:hyp_psvf}

\end{table}  

\begin{table}[!h]
  \caption{Table of best hyperparameters for PSSVFs and PSVFs using stochastic policies}
  \begin{tabular}{lrcccc}
    \hline
    \toprule & Algo: & \multicolumn{2}{c}{PSSVF} & \multicolumn{2}{c}{PSVF}\\
    \midrule
    \textbf{Learning rate policy}  & Policy: &
     [] & [64,64] & [] & [64,64] \\
    &Metric: &avg  &avg  &avg  &avg  \\
    \hline
    Acrobot-v1 & & 1e-2 & 1e-2 & 1e-2 & 1e-3 \\
    MountainCarContinuous-v0 & & 1e-2 & 1e-3 & 1e-2 & 1e-3 \\
    CartPole-v1 & & 1e-3 & 1e-4 & 1e-2 & 1e-3 \\
    Swimmer-v3 & & 1e-2 & 1e-4 & 1e-3 & 1e-4 \\
    InvertedPendulum-v2 & & 1e-3 & 1e-4 & 1e-2 & 1e-3 \\
    Reacher-v2 & & 1e-4 & 1e-3 & 1e-2 & 1e-2 \\
    Hopper-v3 & & 1e-4 & 1e-4 & 1e-3 & 1e-4 \\
    \hline
    \textbf{Learning rate critic} & & & & &\\
    \hline
    Acrobot-v1 & & 1e-2 & 1e-4 & 1e-4 & 1e-2 \\
    MountainCarContinuous-v0 & & 1e-2 & 1e-2 & 1e-3 & 1e-3 \\
    CartPole-v1 & & 1e-2 & 1e-3 & 1e-2 & 1e-2 \\
    Swimmer-v3 & & 1e-2 & 1e-3 & 1e-3 & 1e-4 \\
    InvertedPendulum-v2 & & 1e-3 & 1e-3 & 1e-3 & 1e-2 \\
    Reacher-v2 & & 1e-3 & 1e-3 & 1e-3 & 1e-3 \\
    Hopper-v3 & & 1e-3 & 1e-2 & 1e-2 & 1e-4 \\
    \hline
    \textbf{Noise for exploration} & & & & &\\
    \hline
    Acrobot-v1 & & 1.0 & 1.0 & 1.0 & 1.0 \\
    MountainCarContinuous-v0  & & 1.0 & 1e-1 & 1.0 & 1e-1 \\
    CartPole-v1 & & 1.0 & 1.0 & 1.0 & 1e-1 \\
    Swimmer-v3 & & 1.0 & 1e-1 & 1.0 & 1e-1 \\
    InvertedPendulum-v2 & & 1.0 & 1.0 & 1.0 & 1e-1 \\
    Reacher-v2 & & 1e-1 & 0.0 & 1.0 & 0.0 \\
    Hopper-v3 & & 1.0 & 1e-1 & 1.0 & 1e-1 \\
    \hline
  \end{tabular}
    \label{tab:hyp_stpavf}

\end{table}

\begin{table}[!h]
  \caption{Table of best hyperparameters for PAVFs using deterministic policies}
  \begin{tabular}{lrcccccc}
    \hline
    \textbf{Learning rate policy}  & Policy: &
      \multicolumn{2}{c}{[]} &
      \multicolumn{2}{c}{[32]} &
      \multicolumn{2}{c}{[64,64]} \\
    
    &Metric: &avg &last &avg &last &avg &last  \\
    \hline
    MountainCarContinuous-v0 & &1e-2 & 1e-3 &1e-3 & 1e-4 &1e-4 & 1e-4\\
    Swimmer-v3 & &1e-3 & 1e-3 &1e-3 & 1e-3 &1e-3 & 1e-3\\
    InvertedPendulum-v2 & &1e-2 & 1e-3 &1e-3 & 1e-4 &1e-4 & 1e-4\\
    Reacher-v2 & &1e-3 & 1e-3 &1e-4 & 1e-4 &1e-4 & 1e-4\\
    Hopper-v3 & &1e-3 & 1e-4 &1e-4 & 1e-4 &1e-4 & 1e-3\\
    \hline
    \textbf{Learning rate critic} & & & & & & &\\
    \hline
    MountainCarContinuous-v0 & &1e-4 & 1e-4 &1e-4 & 1e-3 &1e-4 & 1e-3\\
    Swimmer-v3 & &1e-4 & 1e-4 &1e-4 & 1e-4 &1e-4 & 1e-4\\
    InvertedPendulum-v2 & &1e-3 & 1e-2 &1e-2 & 1e-4 &1e-2 & 1e-3\\
    Reacher-v2 & &1e-3 & 1e-3 &1e-3 & 1e-2 &1e-3 & 1e-3\\
    Hopper-v3 & &1e-4 & 1e-3 &1e-3 & 1e-2 &1e-4 & 1e-3\\
    \hline
    \textbf{Noise for exploration} & & & & & & &\\
    \hline
    MountainCarContinuous-v0 & & 1.0 & 1e-1 & 1e-1 & 1e-1 & 1e-1 & 1e-1\\
    Swimmer-v3 & & 1.0 & 1.0 & 1.0 & 1.0 & 1.0 & 1.0\\
    InvertedPendulum-v2 & & 1.0 & 1.0 & 1e-1 & 1e-1 & 1e-1 & 1e-1\\
    Reacher-v2 & & 1e-1 & 1e-1 & 1e-1 & 1.0 & 1.0 & 1.0\\
    Hopper-v3 & & 1.0 & 1.0 & 1e-1 & 1e-1 & 1e-1 & 1.0\\
    \hline
  \end{tabular}
    \label{tab:hyp_pavf}

\end{table}      

\begin{table}[!h]
  \caption{Table of best hyperparameters for DDPG}
  \begin{tabular}{lrcccccc}
    \hline
    \textbf{Learning rate policy}  & Policy: &
      \multicolumn{2}{c}{[]} &
      \multicolumn{2}{c}{[32]} &
      \multicolumn{2}{c}{[64,64]} \\
    
    &Metric: &avg &last &avg &last &avg &last  \\
    \hline
    MountainCarContinuous-v0 & &1e-2 & 1e-2 &1e-2 & 1e-4 &1e-3 & 1e-3\\
    Swimmer-v3 & &1e-3 & 1e-3 &1e-2 & 1e-2 &1e-2 & 1e-2\\
    InvertedPendulum-v2 & &1e-4 & 1e-4 &1e-3 & 1e-3 &1e-3 & 1e-4\\
    Reacher-v2 & &1e-4 & 1e-3 &1e-2 & 1e-2 &1e-3 & 1e-3\\
    Hopper-v3 & &1e-2 & 1e-2 &1e-2 & 1e-4 &1e-2 & 1e-2\\
    \hline
    \textbf{Learning rate critic}  & & & & & & &\\
    \hline
    MountainCarContinuous-v0 & &1e-4 & 1e-4 &1e-4 & 1e-3 &1e-3 & 1e-3\\
    Swimmer-v3 & &1e-3 & 1e-3 &1e-3 & 1e-3 &1e-2 & 1e-3\\
    InvertedPendulum-v2 & &1e-3 & 1e-3 &1e-3 & 1e-4 &1e-3 & 1e-3\\
    Reacher-v2 & &1e-3 & 1e-3 &1e-3 & 1e-3 &1e-3 & 1e-3\\
    Hopper-v3 & &1e-3 & 1e-3 &1e-4 & 1e-4 &1e-4 & 1e-4\\
    \hline
    \textbf{Noise for exploration}  & & & & & & &\\
    \hline
    MountainCarContinuous-v0 & & 1e-2 & 1e-2 & 1e-2 & 1e-1 & 1e-1 & 1e-1\\
    Swimmer-v3 & & 1e-1 & 1e-1 & 1e-2 & 1e-2 & 1e-2 & 1e-1\\
    InvertedPendulum-v2 & & 1e-1 & 1e-1 & 1e-2 & 1e-2 & 1e-2 & 1e-2\\
    Reacher-v2 & & 1e-1 & 1e-2 & 1e-1 & 1e-1 & 1e-1 & 1e-1\\
    Hopper-v3 & & 1e-1 & 1e-1 & 1e-1 & 1e-2 & 1e-1 & 1e-2\\
    \hline
  \end{tabular}
    \label{tab:hyp_ddpg}

\end{table}

\end{document}